\documentclass{article}

\usepackage[preprint]{neurips_2026}

\usepackage[utf8]{inputenc}
\usepackage[T1]{fontenc}
\usepackage{amsmath,amssymb,amsfonts}
\usepackage{booktabs}
\usepackage{graphicx}
\PassOptionsToPackage{hyphens}{url}
\usepackage[breaklinks=true]{hyperref}
\usepackage{url}
\usepackage{multirow}
\usepackage{xcolor}
\usepackage{subcaption}
\usepackage{amsthm}

\newtheorem{proposition}{Proposition}
\newtheorem{theorem}{Theorem}
\newtheorem{corollary}{Corollary}
\theoremstyle{remark}
\newtheorem{remark}{Remark}

\usepackage{titlesec}
\titlespacing*{\section}{0pt}{6pt plus 1pt minus 1pt}{3pt plus 1pt minus 1pt}
\titlespacing*{\subsection}{0pt}{4pt plus 1pt minus 1pt}{2pt plus 1pt minus 1pt}
\titlespacing*{\paragraph}{0pt}{3pt plus 1pt minus 1pt}{1em}

\graphicspath{{figures/}}

\title{Don't Learn the Shape:\\Forecasting Periodic Time Series by Rank-1 Decomposition\thanks{Code: \url{https://github.com/TakatoHonda/FLAIR}.}}

\author{
  Takato Honda \\
  Mellon Inc. \\
  \texttt{takato@melloninc.jp}
}

\begin{document}

\maketitle

\begin{abstract}
How few parameters do we really need to forecast a periodic time series? An hourly electricity series, reshaped as a $24$-row matrix with one column per day, is approximately rank-1: a daily shape modulated by a daily level (median centered rank-1 energy $0.82$ on GIFT-Eval).

Should we learn the shape? Smoothing, shrinkage, and low-rank fits all seem like obvious upgrades over the simple average of the last $K{=}2$ cycles. On all $97$ GIFT-Eval configurations, we tested $8$ such alternatives (e.g., Fourier, EWMA, James-Stein, rank-$r$ SVD): none significantly beats the frozen baseline under Holm correction; two are significantly worse.

The resulting method, FLAIR, is (a) \textbf{Effective}: matches PatchTST on aggregate GIFT-Eval (relMASE $0.838$ vs $0.849$); (b) \textbf{Compact}: $28$ scalars for hourly, $57$ for weekly; (c) \textbf{Fast}: $22$ minutes on one CPU core of a MacBook Pro; (d) \textbf{Closed-form \& Hands-Off}: one SVD per period candidate, GCV-averaged Ridge, no GPU, no pre-training, no per-task tuning. In the high-rank-1, many-cycle regime, extra flexibility is estimation noise.
\end{abstract}

\section{Introduction}
\label{sec:intro}

Once a periodic series reshapes into a rank-1 matrix, how much of an $8.3$B-parameter foundation model~\citep{timer-s1, ansari2024chronos, woo2024moirai, das2024timesfm} do we still need? Our answer: matching PatchTST on GIFT-Eval needs $28$ scalars per series, one CPU core, no GPU, and no pre-training. The path is one observation about periodic data, one sharp negative result, and one closed-form recipe.

\begin{figure}[t]
    \centering
    \includegraphics[width=\textwidth]{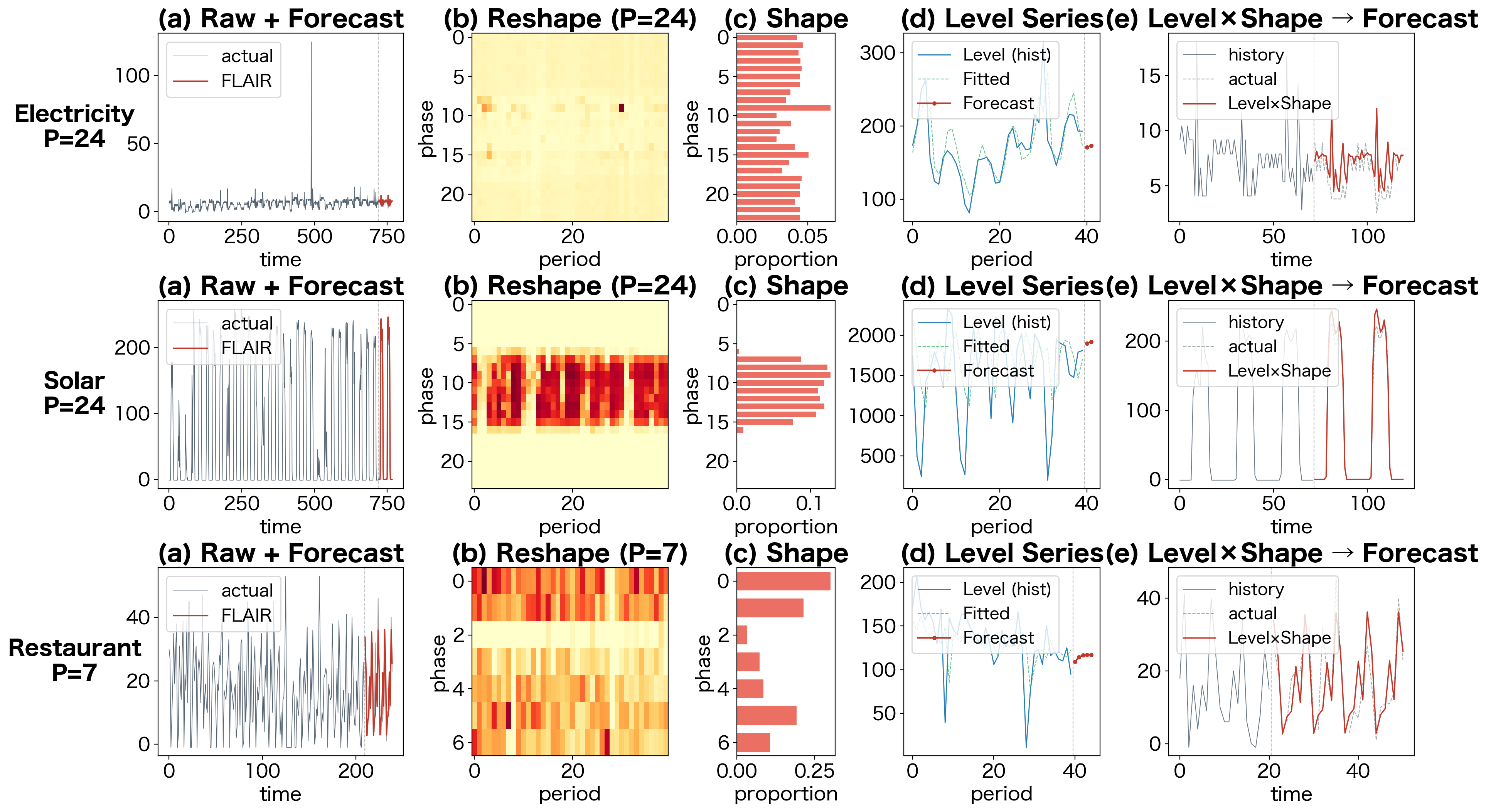}
    \caption{\textbf{Don't learn the shape; learn the level.} FLAIR factorizes a periodic series $y$ as $\hat y_h = \hat L_{\lceil h/P\rceil}\cdot S_{h\,\mathrm{mod}\,P}$, with the Shape $S$ frozen as the last-$K{=}2$ within-period proportion average and the Level $\hat L$ forecast by GCV-averaged Ridge. Hourly $P{=}24$ fits in $28$ scalars per series; aggregate relMASE on $97$ GIFT-Eval configurations is $0.838$, ties PatchTST. Each row: \textbf{(a)} raw series with $48$-step forecast (red); \textbf{(b)} reshape $y \to \mathbf{M} \in \mathbb{R}^{P\times n_c}$ (rank-1 structure as horizontal bands); \textbf{(c)} Shape $\mathbf{S}$; \textbf{(d)} Level $\mathbf{L}$ = column sums, forecast by Ridge; \textbf{(e)} reconstructed forecast.}
    \label{fig:rank1}
\end{figure}

\paragraph{Observation: periodic series are rank-1.}
A periodic series of period $P$, reshaped into a $P \times n_c$ matrix (one column per cycle; $n_c$ = number of complete cycles), is approximately rank-1: on GIFT-Eval, the per-configuration median centered rank-1 energy is $r_1 \equiv \sigma_1^2 / \sum_i \sigma_i^2 = 0.82$ (Section~\ref{sec:observation}; we use the row-centered $r_1$ that subtracts each phase's mean to avoid double-counting any constant offset). The forecast factorizes
\begin{equation}
    \hat{y}(\text{phase}, \text{period}) \;=\; \hat{L}(\text{period}) \;\times\; S(\text{phase}),
    \label{eq:core}
\end{equation}
and only the cycle-to-cycle Level $\hat L$ is worth learning; the within-cycle Shape $S$ is approximately invariant.

\paragraph{Negative result: don't learn the shape.}
Smoothing, spectral fits, shrinkage, and low-rank SVD all seem like obvious upgrades over the simple average of the last $K{=}2$ cycles: we test $8$ variants (EWMA; Fourier $J{=}1$; Savitzky-Golay; James-Stein toward uniform / toward harmonic; pooled MLE; rank-$2$, rank-$3$ SVD). On all $97$ GIFT-Eval configurations, none significantly beats the frozen $K{=}2$ sample average at $95\%$ confidence under Holm correction; two are significantly worse (Section~\ref{sec:frozen_shape}). With only $K$ complete cycles, extra flexibility adds variance faster than it removes bias. The paper's title follows.

\paragraph{Contributions.}
(i) The rank-1 observation, surveyed on the full GIFT-Eval benchmark (Section~\ref{sec:observation}).
(ii) \emph{Don't learn the shape}: a Holm-corrected negative result on $8$ Shape-learning variants, backed by a Baik--Ben Arous--P\'ech\'e-type under-identification argument (Section~\ref{sec:frozen_shape}, Appendix~\ref{app:bbp}).
(iii) FLAIR (Factored Level And Interleaved Ridge), a closed-form periodic forecaster (one SVD per period candidate, frozen $K{=}2$ Shape, GCV-averaged Ridge on the Level) with a graceful-degeneration cascade (Section~\ref{sec:cascade}) that falls back to Seasonal Naive or plain Ridge using training-window diagnostics alone.

FLAIR is \textbf{Effective} (ties PatchTST: relMASE $0.838$ vs $0.849$), \textbf{Compact} ($P{+}p{+}1$ scalars per series; $28$ for hourly $P{=}24$, $57$ for weekly $P{=}52$), \textbf{Fast} ($22$ min on one CPU core), \textbf{Closed-form} (no GPU, no SGD), and \textbf{Hands-Off} (no per-task hyperparameter search; all constants frozen across the benchmark).

\section{Periodic Series are Rank-1 (median $r_1 = 0.82$)}
\label{sec:observation}

Periodic time series are rank-1 in disguise: on GIFT-Eval, the per-configuration median centered $r_1$ is $0.82$. The reshape that exposes it is mechanical. A series $y_1, \ldots, y_n$ with period $P$ becomes a matrix $\mathbf{M} \in \mathbb{R}^{P \times n_c}$, $n_c = \lfloor n/P \rfloor$.
Row $j$ holds all observations at phase $j$ (e.g., hour $j$ of each day for hourly data with $P{=}24$), and column $i$ holds the $P$ phases of period $i$.
Figure~\ref{fig:rank1} visualises the entire FLAIR pipeline on three real datasets (reshape $\to$ rank-1 structure $\to$ $\mathbf{L}\mathbf{S}^\top$ decomposition $\to$ reconstruction).

Compute the SVD $\mathbf{M} = \sum_k \sigma_k \mathbf{u}_k \mathbf{v}_k^\top$ and define $r_1 = \sigma_1^2 / \sum_i \sigma_i^2$.
Any constant offset $c$ contributes a rank-1 term $c\,\mathbf{1}\mathbf{1}^\top$ that inflates raw $r_1$ unrelated to periodic structure: FLAIR's shifted-positive matrix reaches per-configuration median $r_1 = 0.99$ for this reason ($78\%$ of configs exceed $0.9$; Appendix~\ref{app:rank1_distribution}), a shift artifact, but absorbed by $\mathbf{L}\mathbf{S}^\top$ at zero cost in our factorization.
The honest measure subtracts the per-phase mean across periods (row mean of $\mathbf{M}$, removing both the constant and the seasonal profile).
Row-centered, the per-configuration median $r_1$ on GIFT-Eval is $0.82$ (a descriptive statistic, not used to tune any hyperparameter).
The high end reaches $0.97$ (m4\_quarterly) and $0.91$ (electricity/H), while heterogeneous datasets like loop\_seattle/H and temperature\_rain/D drop below $0.4$ and pull the per-series median to $0.67$.
FLAIR's SVD operates on shifted-positive; centered $r_1$ governs reconstruction error and is reported below.

\begin{proposition}[Slow amplitude $\Rightarrow$ rank-1, error $\leq 2 C_A P / \min|A|$]
\label{prop:rank1}
Let $y_t = A(t) \cdot s(t \bmod P)$ where $s$ is a periodic shape with period $P$ and $A$ is Lipschitz-continuous (per index, $\Delta t = 1$) with constant $C_A$.
Under the slow-variation regime $C_A P \leq \tfrac{1}{2}\min_i |A(iP)|$ (which precludes $A$ from changing sign within a period), the reshaped matrix $M_{j,i} = y(iP + j)$ satisfies
\begin{equation}
    \frac{\|\mathbf{M} - \mathbf{L}\mathbf{S}^\top\|_F}{\|\mathbf{M}\|_F} \;\leq\; \frac{2\,C_A \, P}{\min_i |A(iP)|},
    \label{eq:rank1_bound}
\end{equation}
where $L_i = A(iP)$ and $S_j = s(j)$. The bound vanishes as $C_A P / \min_i|A(iP)| \to 0$, recovering perfect rank-1.
\end{proposition}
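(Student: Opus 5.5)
The plan is an entrywise comparison followed by a ratio of Frobenius norms. Throughout, $j \in \{0,\dots,P-1\}$ and $i \in \{0,\dots,n_c-1\}$. Since $s$ has period $P$, we have $s((iP+j) \bmod P) = s(j) = S_j$, so
\begin{equation*}
M_{j,i} - L_i S_j = \bigl(A(iP+j) - A(iP)\bigr)\, S_j .
\end{equation*}
The Lipschitz condition with unit index spacing gives $|A(iP+j) - A(iP)| \le C_A j \le C_A P$. Hence every entry of the residual satisfies $|M_{j,i} - L_i S_j| \le C_A P\,|S_j|$. Squaring and summing over $i,j$ then yields the numerator bound
\begin{equation*}
\|\mathbf{M} - \mathbf{L}\mathbf{S}^\top\|_F^2 \le (C_A P)^2\, n_c \,\|\mathbf{S}\|_2^2 .
\end{equation*}

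The denominator needs a lower bound, and this is where the slow-variation hypothesis enters. By the triangle inequality,
\begin{equation*}
|A(iP+j)| \ge |A(iP)| - C_A P \ge |A(iP)| - \tfrac12 \min_k |A(kP)| \ge \tfrac12 \min_k |A(kP)| .
\end{equation*}
It follows that $|M_{j,i}| \ge \tfrac12 \min_k|A(kP)|\,|S_j|$, and summing the squares gives
\begin{equation*}
\|\mathbf{M}\|_F^2 \ge \tfrac14 \bigl(\min_k|A(kP)|\bigr)^2 n_c \|\mathbf{S}\|_2^2 .
\end{equation*}
In particular $A$ cannot vanish or change sign inside a cycle, which is the parenthetical remark in the statement.

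Dividing the two bounds, the common factor $n_c\|\mathbf{S}\|_2^2$ cancels and we obtain
\begin{equation*}
\frac{\|\mathbf{M}-\mathbf{L}\mathbf{S}^\top\|_F}{\|\mathbf{M}\|_F} \le \frac{C_A P}{\tfrac12 \min_k |A(kP)|} = \frac{2 C_A P}{\min_k|A(kP)|},
\end{equation*}
which is exactly \eqref{eq:rank1_bound}. The vanishing limit is then immediate.

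The only delicate point is the denominator. Bounding $\|\mathbf{M}\|_F$ from below requires that the amplitude stay uniformly away from zero inside each cycle, not just at the sampled points $iP$. The slow-variation assumption supplies precisely this, and it is where the factor $2$ comes from. A trivial degenerate case must also be excluded: if $\mathbf{S} = 0$, the ratio is $0/0$, so I assume $s \not\equiv 0$. A sharper constant would be available using $j \le P-1$, but it is not needed.
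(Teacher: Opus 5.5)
Your proof is correct and takes essentially the same approach as the paper's: you bound the entrywise residual $(A(iP+j)-A(iP))S_j$ by $C_A P|S_j|$ via the Lipschitz condition, and you lower-bound $\|\mathbf{M}\|_F^2 \ge \tfrac14(\min_k|A(kP)|)^2 n_c\|\mathbf{S}\|_2^2$ using the reverse triangle inequality together with the slow-variation hypothesis, after which the common factor $n_c\|\mathbf{S}\|_2^2$ cancels in the ratio. Your explicit exclusion of the degenerate case $s\equiv 0$ is a small point that the paper leaves implicit.
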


\begin{proof}[Proof sketch]
The Lipschitz condition gives $|A(iP+j) - A(iP)| \leq C_A j < C_A P$, so each residual satisfies $|M_{j,i} - L_i S_j| \leq C_A P \cdot |s(j)|$.
Summing: $\|\mathbf{E}\|_F^2 \leq C_A^2 P^2 \cdot n_c \|\mathbf{s}\|^2$.
Under the slow-variation assumption $C_A P \leq \tfrac{1}{2}\min_i|A(iP)|$, the denominator satisfies $\|\mathbf{M}\|_F^2 \geq \tfrac{1}{4}[\min_i A(iP)]^2 \cdot n_c \|\mathbf{s}\|^2$, yielding Eq.~\ref{eq:rank1_bound}.
See Appendix~\ref{app:proof_rank1} for the full proof.
\end{proof}

The bound is loose in practice (per-config median centered $r_1 = 0.82$, well below 1 but dominant) and vacuous on series with spikes, missing data, or near-zero amplitude where $C_A$ is unbounded or $\min |A| \to 0$ (e.g.\ traffic, nn5).

Rank-1 dominance is necessary, not sufficient: FLAIR fails when $n_c$ is too small or the Shape drifts (Section~\ref{sec:failure}). The $(r_1, n_c)$ phase diagram (Figure~\ref{fig:phase}, calibrated on Chronos and validated on GIFT-Eval/Monash/M5 in Section~\ref{sec:experiments}) is FLAIR's decision rule. Section~\ref{sec:sufficiency} formalizes the resulting compression.

\section{Method: Forecast the Level, Freeze the Shape}
\label{sec:method}

How many textbook tools survive the rank-1 commitment? Surprisingly few.

FLAIR is one equation: $\hat{y}_h = \hat{L}_{\lceil h/P \rceil} \cdot S_{h \bmod P}$. Each component below is textbook; the thesis is which ones survive the commitment to rank-1. Table~\ref{tab:constants} lists every internal constant (all global, frozen across the benchmark; no per-dataset tuning). Series are first shifted to positivity ($y_t \leftarrow y_t + \max(1 - \min_t y_t, 1)$) to enable Box-Cox.

\subsection{Pick $P$ by BIC}

Period selection becomes a model-selection problem the moment you commit to rank-1: each $P$ defines a different rank-1 hypothesis, and BIC arbitrates. FLAIR picks $P$ from a calendar-based candidate set $\mathcal{C}$ ($\{24, 168\}$ for hourly, $\{7, 30\}$ for daily, $\{12\}$ for monthly, etc.). For each $P_c \geq 2$, reshape $y_1, \ldots, y_n$ and compute the rank-1 residual $\text{RSS}_1 = \sum_{i \geq 2} \sigma_i^2$ from the singular values $\sigma_i$. A $P{=}1$ null (mean $+$ noise: $\text{RSS} = n \cdot \text{Var}(y)$, $k{=}1$) competes via BIC:
\begin{equation}
    P^* = \arg\min_{P_c \in \{1\} \cup \mathcal{C}} \Big[ n \log\!\big(\text{RSS}(P_c) / n\big) + k(P_c) \log n \Big],
    \label{eq:bic}
\end{equation}
where $k(P_c) = P_c + n_c - 1$ for $P_c \geq 2$ and $k(1) = 1$.
This is the BIC instantiation of the MDL principle: pick the period whose rank-1 reshape gives the shortest two-part description.
When $n_c < 3$ or the DoF guard fires ($n_{\text{train}} < 2p$, where $p$ is the Ridge feature count), FLAIR sets $P{=}1$ (Ridge on raw series).
BIC selects $P$; getting it right matters: hourly $P \pm 1$ increases MASE $6$--$8\times$ (Appendix~\ref{app:period_misspec}).

\subsection{Why $K{=}2$ is Robust}
\label{sec:shape_estimation}

Reshape the shifted series into $\mathbf{M} \in \mathbb{R}^{P \times n_c}$ (Figure~\ref{fig:rank1}c--e). The Level is column sums $L_i = \sum_j M_{j,i}$; the Shape averages within-period proportions over the $K{=}2$ most recent periods, renormalized:
\begin{equation}
    S_j = \frac{\tilde S_j}{\sum_{j'} \tilde S_{j'}}, \quad \tilde S_j = \frac{1}{K} \sum_{k=n_c-K+1}^{n_c} \frac{M_{j,k}}{L_k}, \quad j = 1, \ldots, P.
    \label{eq:shape}
\end{equation}

FLAIR uses a single frozen Shape $S \in \Delta^{P-1}$, consistent with the BBP sub-criticality of the rank-1 residual's second singular vector (Section~\ref{sec:theory}). Freezing breaks when the within-window Shape drifts; Section~\ref{sec:failure} documents this empirically and Appendix~\ref{app:phase_selector} turns the $(r_1, n_c)$ phase diagram into a pre-training-window diagnostic.

\subsection{Level: Prior-Centered Ridge}
\label{sec:level}

The Level $L_1, \ldots, L_{n_c}$ (column sums of $\mathbf{M}$) is a scalar problem.
Box-Cox ($\lambda \in [0,1]$ by MLE) stabilizes variance, secondary seasonality is divided out, and the Level is centered at its last value~\citep{zeng2023dlinear}: $L^{\text{innov}}_i = L^{(\lambda)}_i - L^{(\lambda)}_{n_c}$.

\paragraph{Prior-centered Ridge.}
Standard Ridge has the wrong prior for a Level. We fix this in one line. A neural Level forecaster has thousands of parameters; ours has four. Eq.~\ref{eq:ridge_level} is the entire forecaster:
\begin{equation}
    L^{\text{innov}}_i = \beta_0 + \beta_1 (i/n_c) + \beta_2 \, L^{\text{innov}}_{i-1} + \beta_3 \, L^{\text{innov}}_{i-\mathrm{sec}} + \varepsilon_i,
    \label{eq:ridge_level}
\end{equation}
where $L^{\text{innov}}_{i-\mathrm{sec}}$ is the secondary-period lag (e.g., same hour last week); dropped when absent ($p{=}3$).
Standard Ridge shrinks $\beta_2 \to 0$ (white-noise prior); for slowly-varying Levels, $\beta_2 \approx 1$, so a random-walk prior is correct. GCV soft-averaging (Eq.~\ref{eq:ridge_sa}) adapts $\alpha$ for mean-reverting cases.
We center the penalty at $\boldsymbol{\beta}^* = (0, 0, 1, 0)^\top$ (with $\beta^*_3 = 0$ when the secondary-period lag is present; the fourth coordinate is dropped otherwise):
\begin{equation}
    \min_{\boldsymbol{\beta}} \|\mathbf{y} - \mathbf{X}\boldsymbol{\beta}\|^2 + \alpha \|\boldsymbol{\beta} - \boldsymbol{\beta}^*\|^2.
    \label{eq:prior_centered}
\end{equation}
Substituting $\boldsymbol{\delta} = \boldsymbol{\beta} - \boldsymbol{\beta}^*$ gives standard Ridge on a differenced target (Shape and $\lambda$ are structural, not counted in forecast DoF):
\begin{equation}
    \Delta L^{\text{innov}}_i = \delta_0 + \delta_1 (i/n_c) - \delta_2 \, L^{\text{innov}}_{i-1} + \delta_3 \, L^{\text{innov}}_{i-\mathrm{sec}} + \varepsilon_i,
    \quad \delta_2 = 1 - \beta_2,\ \delta_3 = \beta_3.
    \label{eq:diff_target}
\end{equation}
Under H\"{o}lder(2) (bounded second difference of $L_i$; cycle-aggregation gives H(2) even when $y_t$ has spikes), all $\delta$'s are near zero; isotropic Ridge is the correct shrinkage. The choice $\beta^*_2 = 1$ is the unique scale-invariant choice consistent with the H(2) random-walk fixed point ($\beta_2 \to 1$ as $\sigma_\varepsilon \to 0$); any other value injects scale-dependent bias.

FLAIR computes 25 Ridge solutions ($\alpha \in [10^{-4}, 10^4]$, log-spaced) from one SVD and averages with softmax weights on generalized-cross-validation (GCV) scores~\citep{golub1979gcv}:
\begin{equation}
    \hat{\boldsymbol{\delta}} = \sum_{k=1}^{25} w_k \boldsymbol{\delta}_k, \quad w_k \propto \exp\!\big({-}(\text{GCV}_k - \text{GCV}_{\min}) / \text{GCV}_{\min}\big).
    \label{eq:ridge_sa}
\end{equation}

\paragraph{Reconstruction and fallback.}
\label{sec:cascade}
Forecasts reconstruct via Eq.~\ref{eq:core}: $\hat{y}_h = \hat{L}_{\lceil h/P\rceil} \cdot S_{h \bmod P}$ minus the location shift, with damped trend $\phi = \max(\hat\rho_1(\Delta L), 0) \wedge (1{-}\epsilon)$ (conservative, non-MLE; bounds long-horizon variance, Appendix~\ref{app:cascade_proof}). Plain Ridge on raw $y_t$ when $\hat P = 1$ or the DoF guard fires; uncertainty quantification (leverage bootstrap, phase noise) and the full three-branch cascade are in Appendix~\ref{app:cascade_proof}.

\section{Why Freezing the Shape Works}
\label{sec:theory}

Once a periodic series is aligned and nearly rank-1, the only learnable object is the Level. This section explains why the compression helps and why Shape learning usually fails.

\subsection{Rank-1 Turns $P$ Forecasts into One}
\label{sec:sufficiency}

Predicting 336 hourly values looks like a 336-dimensional problem.
After rank-1 decomposition: 14 Level steps times a fixed Shape.
The LSR1 (Locally Stationary Rank-1) model: $M_{j,i} = L(i/n_c)\,S_j + \varepsilon_{j,i}$ with smooth Level and i.i.d.\ noise. Theorem~\ref{thm:rank1_sufficiency} below quantifies the gain. The $K{=}2$ unknown-Shape case is in Appendix~\ref{app:joint_bound}.

\begin{theorem}[Rank-1 advantage over a phase-independent local-linear baseline]
\label{thm:rank1_sufficiency}
Let $M_{j,i} = L(i/n_c)\,S_j + \varepsilon_{j,i}$ with $L \in \mathrm{H\ddot{o}lder}(2)$ on $[0,1]$, $\mathbf{S} \in \Delta^{P-1}$, and $\varepsilon_{j,i}$ i.i.d.\ sub-Gaussian$(\sigma^2)$.
Compare two strategies for forecasting the next cycle $(L(1)S_0,\ldots,L(1)S_{P-1})$ \emph{with the true Shape $\mathbf{S}$ known}:
\textbf{(A, baseline)}~estimate each phase separately via local linear regression on $n_c$ points (phase-independent, no cross-phase pooling);
\textbf{(B, rank-1)}~aggregate into one Level series $\tilde{L}_i = \sum_j M_{j,i}$, estimate $L(1)$, and reconstruct $\hat L(1) \cdot \mathbf{S}$.
Then there exist constants $C_A, C_B > 0$ such that for every $L$ in the class and every $\mathbf{S} \in \Delta^{P-1}$,
\begin{equation}
    \mathrm{MSE}^{(A)} \leq C_A \cdot \frac{P\sigma^2}{n_c^{4/5}}, \qquad
    \mathrm{MSE}^{(B)} \leq C_B \cdot \frac{P\|\mathbf{S}\|_2^2\,\sigma^2}{n_c^{4/5}}.
    \label{eq:rank1_rates}
\end{equation}
Within this phase-independent baseline class, Strategy~(B)'s upper bound improves on Strategy~(A)'s by the factor $\|\mathbf{S}\|_2^2 \leq 1$ (simplex non-negativity gives the upper bound; Cauchy-Schwarz gives the lower bound $1/P$ at uniform; $\approx 2/P$ for solar with night-time zeros). The comparison is between the two upper bounds; tight matching lower bounds against phase-independent baselines are standard~\citep{tsybakov2009nonparametric}, so the rate gap survives at the minimax level on this class.
Theorem~\ref{thm:joint_rank1} in Appendix~\ref{app:joint_bound} removes the known-Shape assumption by plugging in the sample-proportion estimate of Eq.~\ref{eq:shape}.
\textbf{Scope.} The bound is restricted to the phase-independent local-linear baseline class. Methods that pool across phases without the rank-1 assumption (joint Gaussian-process regression, low-rank-$r$ with $r \geq 2$, or spectral estimators on the reshaped matrix) lie outside this class and may achieve intermediate rates; the theorem is silent about whether they close the $P$-fold gap. The empirical 8-variant sweep (Section~\ref{sec:frozen_shape}) supplies the direct evidence that rank-$r \in \{2,3\}$ SVD variants do not.
\end{theorem}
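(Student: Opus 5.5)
The plan is to reduce both strategies to one scalar nonparametric problem: estimating the endpoint value of a Hölder(2) function from $n_c$ equally spaced noisy observations. For that problem I would use the standard local linear rate. Let $g \in \mathrm{H\ddot{o}lder}(2)$ be observed as $z_i = g(i/n_c) + \eta_i$ with independent sub-Gaussian$(\tau^2)$ noise. A one-sided local linear estimator at $x=1$ with bandwidth $h$ has squared bias $O(h^4)$, since the local-linear fit is exact for linear functions and boundary-adaptive. Its variance is $O(\tau^2/(n_c h))$, because the equivalent-kernel weights have bounded $\ell_2^2$ norm of order $1/(n_c h)$, and sub-Gaussianity is used only through the second moment of the weighted sum. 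Taking $h \asymp n_c^{-1/5}$ balances the two terms and gives $\mathbb{E}(\hat g(1)-g(1))^2 \le C\,(\tau^2)^{4/5}\,\cdot\,$const$\cdot n_c^{-4/5}$. I will state it in the form $C\tau^2 n_c^{-4/5}$, treating the Hölder radius as a constant, or choosing $h$ with $\tau$ absorbed. This scalar lemma is the only analytic ingredient, and both rates then follow by bookkeeping.

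For Strategy (A), phase $j$ gives the series $M_{j,i} = S_j L(i/n_c) + \varepsilon_{j,i}$. Here $g_j = S_j L$ is Hölder(2) with radius at most that of $L$, because $S_j \le 1$, and the noise level is $\sigma^2$. The lemma gives per-phase MSE at most $C\sigma^2 n_c^{-4/5}$. Summing the $P$ phase errors then gives $\mathrm{MSE}^{(A)} \le C_A P\sigma^2 n_c^{-4/5}$.

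For Strategy (B), the aggregated series is $\tilde L_i = L(i/n_c)\sum_j S_j + \sum_j \varepsilon_{j,i} = L(i/n_c) + \xi_i$, using $\mathbf{S}\in\Delta^{P-1}$. The $\xi_i$ are independent across $i$ and sub-Gaussian with variance proxy $P\sigma^2$. Applying the lemma with $\tau^2 = P\sigma^2$ gives $\mathbb{E}(\hat L(1)-L(1))^2 \le C P\sigma^2 n_c^{-4/5}$. Because $\mathbf{S}$ is known and deterministic, the reconstruction error is $\|(\hat L(1)-L(1))\mathbf{S}\|_2^2 = (\hat L(1)-L(1))^2\|\mathbf{S}\|_2^2$. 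This yields $\mathrm{MSE}^{(B)} \le C_B P\|\mathbf{S}\|_2^2\sigma^2 n_c^{-4/5}$. The bounds $1/P \le \|\mathbf{S}\|_2^2 \le 1$ follow from Cauchy–Schwarz and from $S_j^2 \le S_j$ on the simplex.

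The main obstacle is keeping the constants uniform over the class and over $\mathbf{S}$, which needs care in two places. First, the bias constant must depend only on the Hölder radius. In (A) the radius of $S_jL$ is smaller, which only helps. Second, the variance-driven bandwidth choice must not introduce $\tau$-dependence that spoils the linear-in-$\tau^2$ form. I would fix $h \asymp n_c^{-1/5}$ independently of $\tau$, so that the bound reads $C(\tau^2 + R^2)n_c^{-4/5}$. For (B) I would then absorb $R^2$ into $C_B$ under the normalization $\sigma^2 \gtrsim R^2/P$, or otherwise state the result for the variance term, which is where the comparison with (A) actually lives. I would flag this interplay between the bias term and the $\|\mathbf{S}\|_2^2$ factor explicitly. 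The reason is that the bias of $\hat L(1)$, multiplied by $\|\mathbf{S}\|_2^2$, is also scaled, so it does not break the factor-$\|\mathbf{S}\|_2^2$ improvement. The minimax remark would then be cited from \citep{tsybakov2009nonparametric} rather than reproved.
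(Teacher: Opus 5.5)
Your proposal is correct and follows essentially the same route as the paper's proof: for (A), a boundary local-linear rate is applied to each phase and the $P$ phase errors are summed; for (B), the same rate is applied once to the aggregated series $\tilde L_i = L(i/n_c) + \sum_j \varepsilon_{j,i}$, which has noise variance $P\sigma^2$, and the result is scaled by $\|\mathbf{S}\|_2^2$ since $\mathbf{S}$ is known. Your fixed-bandwidth bound $C(\tau^2+R^2)\,n_c^{-4/5}$ with the normalization $P\sigma^2 \gtrsim R^2$ is more careful than the paper, which simply asserts an amplitude-free $O(\sigma^2 n_c^{-4/5})$ rate; note only that the $(\tau^2)^{4/5}$ form in your first paragraph comes from the $\tau$-dependent bandwidth $h \asymp (\tau^2/(R^2 n_c))^{1/5}$, not from $h \asymp n_c^{-1/5}$, an inconsistency your last paragraph already resolves.
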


\begin{proof}[Proof sketch]
Strategy~(A) runs $P$ independent regressions, each at the minimax rate $O(\sigma^2 n_c^{-4/5})$ for H\"{o}lder(2) boundary estimation with Epanechnikov kernel and bandwidth $h \asymp n_c^{-1/5}$~\citep{fan1996local}.
Sum over $P$ phases.
Strategy~(B) aggregates $P$ noisy observations per period into one Level value, with column-sum noise variance $P\sigma^2$.
Local linear regression on the Level at the same rate gives MSE $= O(P\sigma^2 n_c^{-4/5})$; the forecast at phase $j$ scales this by $S_j^2$, so summed over phases the forecast MSE is $\|S\|_2^2 \cdot O(P\sigma^2 n_c^{-4/5})$.
Full proof in Appendix~\ref{app:proof_rank1_sufficiency}.
\end{proof}

For hourly electricity ($P{=}24$, 2:1 day/night) the reduction relative to Strategy~(A) is ${\sim}22$-fold within this baseline class; an LSR1 minimax bracket confirms tightness, and FLAIR's prior-centered Ridge is rate-competitive with local-linear (Appendix~\ref{app:minimax}, \ref{app:stl} for additive STL counterexample).

\subsection{Only the Level Needs Learning}
\label{sec:minimax}

The feature basis $\mathbf{x}_i = [1, i/n_c, L^{\mathrm{innov}}_{i-1}, L^{\mathrm{innov}}_{i-\mathrm{sec}}]^\top$ (intercept, linear drift, AR(1), secondary-period lag) is a modelling choice, common to FLAIR's ablations and prior seasonal-Ridge work.
\emph{Given this basis}, the rank-1 model pins the prior mean.
Under LSR1 with slowly-varying Level, write $L^{(\lambda)}_i = L^{(\lambda)}_{i-1} + \eta_i$ where $\eta_i$ is a small innovation (zero under exact-quadratic $L$, $O(\|R\|_\infty)$ for general H\"{o}lder-2).
Recentering at the last value $L^{\mathrm{innov}}_i = L^{(\lambda)}_i - L^{(\lambda)}_{n_c}$ and expanding Eq.~\ref{eq:ridge_level},
\[
    L^{\mathrm{innov}}_i \;=\; \beta_0 + \beta_1 (i/n_c) + \beta_2\, L^{\mathrm{innov}}_{i-1} + \varepsilon_i,
\]
the true coefficients under LSR1 are $(\beta_0, \beta_1, \beta_2, \beta_3) = (0, 0, 1, 0)$: no intercept (centered), no linear drift (NLinear anchor), persistence one (random walk), no cross-period lag (rank-1 residual is noise).
Placing a Gaussian prior $\boldsymbol{\beta} \sim \mathcal{N}(\boldsymbol{\beta}^*, \tau^2 I)$ on the coefficients and reading off the MAP with regularization $\alpha = \sigma^2 / \tau^2$ gives exactly Eq.~\ref{eq:prior_centered}.
The feature-basis choice is orthogonal to the prior-center choice; LSR1 pins the latter, not the former.

As $\alpha \to \infty$ the MAP collapses to the prior and the forecast reverts to $L_{n_c} \cdot S_j$, so Seasonal Naive sits at the right end of the regularization path and the Ridge Level can only relax the forecast towards it.
On smooth Levels, Ridge bias is bounded by the H\"{o}lder-2 remainder $\|R\|_\infty$ rather than the $n_c^{-4/5}$ minimax rate, so for typical $n_c \in [10, 100]$ the variance rate $O(\sigma_e^2 / n_c)$ is competitive with local linear $O(\sigma_e^2 n_c^{-4/5})$ (Appendix~\ref{app:ridge_bound}, Corollary~\ref{cor:competitive}).
GCV soft-averaging (Eq.~\ref{eq:ridge_sa}) selects $\alpha$ adaptively~\citep{andrews1991asymptotic}, so the prior strength is data-driven rather than hyperparameter-tuned.

\subsection{Why Shape Learning Fails}
\label{sec:frozen_shape}

\begin{figure}[t]
    \centering
    \includegraphics[width=0.82\textwidth]{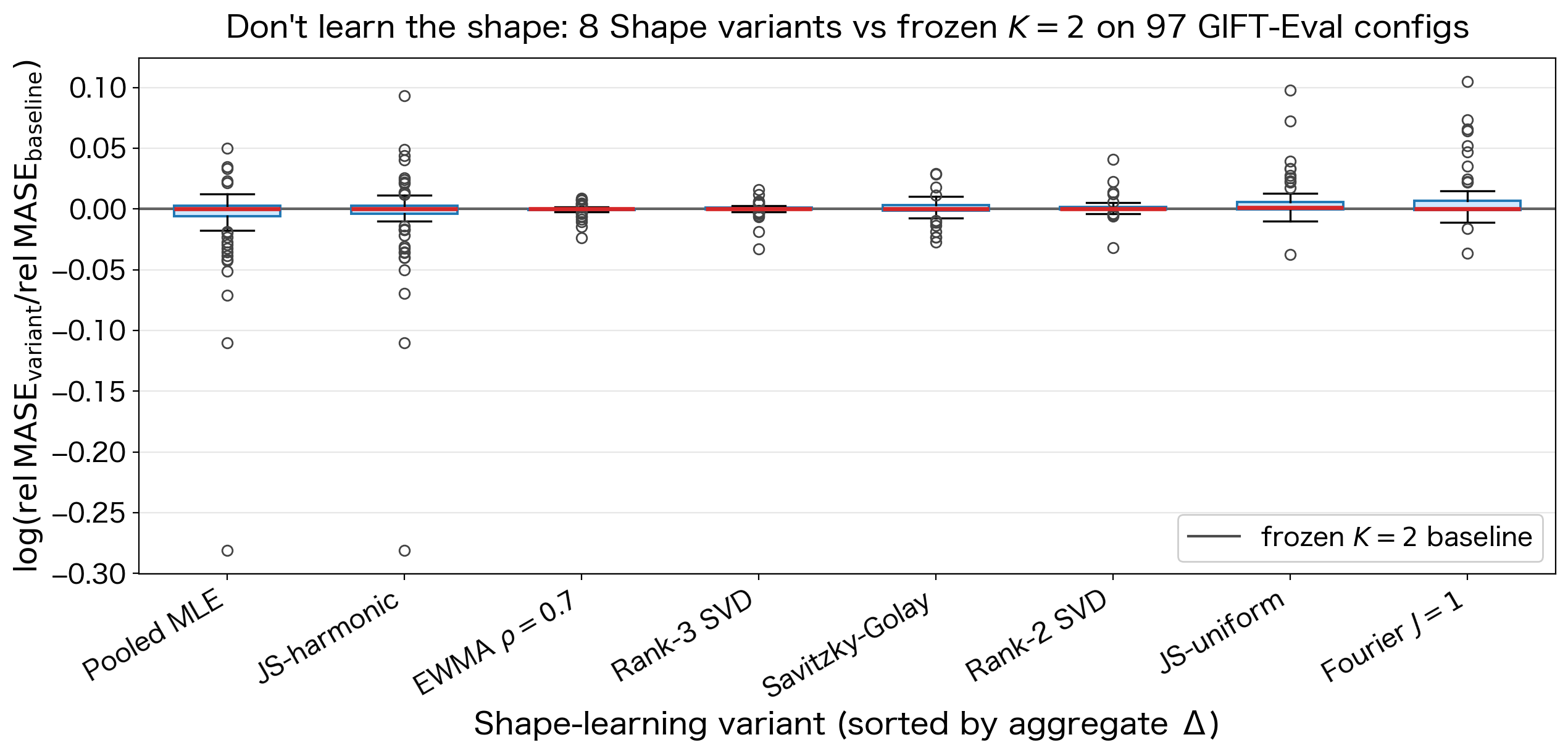}
    \caption{Don't learn the shape: 8 Shape-learning variants sorted by per-config $\Delta$\,relMASE against the frozen $K{=}2$ baseline on the full 97-configuration benchmark (N${=}200$ forecast samples). Box = interquartile range of the per-config log-ratio; red bar = median; zero line = baseline. No variant improves on the baseline at $95\%$ confidence under Holm correction; two on the right edge (JS-uniform, Fourier $J{=}1$) are significantly worse.}
    \label{fig:frozen}
\end{figure}

Two families fail, two reasons.

\paragraph{(A) SVD-based variants: BBP subcriticality.}
Rank-2 and rank-$r$ SVD try to recover the second singular vector $\mathbf{u}_2$ of $\mathbf{M}$ and add it to the approximation.
For this family, a BBP phase-transition argument on a spiked-rectangular surrogate characterises when recovery is possible; the theorem below states the surrogate result, and the bridge to real GIFT-Eval data is heuristic (see the caveat after the theorem).

\paragraph{(B) Smoothing, shrinkage, and averaging variants: MVUE of the pooled sample proportion.}
The six non-SVD variants (EWMA, Fourier $J{=}1$, Savitzky-Golay, JS toward uniform, JS toward first harmonic, pooled MLE) do \emph{not} touch $\mathbf{u}_2$; they reshape how the $K$ within-period proportions are combined into $\hat{S}$.
Under a stylized count-model analogy the pooled sample proportion is admissible (Remark~\ref{thm:shape_optimal}, Appendix~\ref{app:admissibility}); FLAIR's equal-weight $K{=}2$ average agrees with pooled MLE to first order under LSR1 slow-amplitude, so any smoothing/shrinkage variant would have to improve on a constant-factor approximation of an admissible estimator. Empirically, none of the $8$ Shape-learning variants beats the frozen $K{=}2$ baseline at $95\%$ confidence under Holm correction (Figure~\ref{fig:frozen}); the separate $K$-sweep on all 97 configs (Appendix~\ref{app:ksweep}) places $K{=}2$ on a plateau with $K{=}5,\,50,\,100$ (CIs cross zero) and significantly above $K{=}1,\,4,\,10,\,n_c$ (CIs above zero), within a $1\%$ admissible band overall.
Dirichlet-Multinomial shrinkage injects a flat prior and \emph{can} reduce variance under small $K$, but the 97-config sweep shows the Prior-centered Ridge already absorbs this through cross-period lags.

A BBP spiked-rectangular surrogate gives a heuristic, narrative reason (the 8-variant sweep below carries the empirical claim): $58\%$ of GIFT-Eval series fall below the recovery threshold $\beta_k^2/\sigma^2 > \sqrt{P/n_c}$~\citep{baik2005phase} on the shifted-positive matrix (Appendix~\ref{app:bbp}). The empirical 8-variant sweep (Figure~\ref{fig:frozen}, Appendix~\ref{app:frozen_table}) carries the claim; BBP is narrative.
Context conditioning via day-of-week Dirichlet indexing was the one natural exception we tested (known index, outside the BBP argument); on the 97-config benchmark it was marginally harmful at aggregate, so FLAIR uses a single frozen Shape. Three falsifiable predictions follow: FLAIR should win at high centered $r_1$ and large $n_c$; additive decomposition should lose under amplitude drift; FLAIR's failures should concentrate in short-cycle or shape-drift regimes (Section~\ref{sec:experiments}).

\section{Experiments}
\label{sec:experiments}

The theory makes three predictions: FLAIR wins where centered $r_1$ is high and $n_c$ is not small; additive decomposition loses when amplitude varies; and failures concentrate in short-cycle or shape-drift regimes. We test each below.

\subsection{Setup}

We evaluate on \textbf{GIFT-Eval}~\citep{gift_eval_2024} (97 configurations, 23 datasets, 7 domains, 5-minute to yearly) and \textbf{Chronos zero-shot}~\citep{ansari2024chronos} (25 datasets via FEV~\citep{fev2025}). Metrics: relMASE / relCRPS as geomean of per-config Seasonal-Naive ratios. FLAIR uses $n_{\text{samples}}{=}200$ (Monte Carlo error $\pm 0.3\%$; Appendix~\ref{app:paired_bootstrap}). Baselines: foundation, deep-learning (PatchTST, DLinear, N-BEATS, TFT, iTransformer, DeepAR~\citep{salinas2020deepar}), and statistical (AutoARIMA, AutoTheta, ETS, Prophet).

\paragraph{Aggregate.}
Table~\ref{tab:gift_eval} reports relMASE $= 0.838$ on all $97$ configurations, the anchor for the paper. The $93$-configuration subset restricted to configs with both centered $r_1$ and Chronos-Bolt-Base scores (4 configs dropped by data availability, not performance; Appendix~\ref{app:winloss_stratified} lists them) gives FLAIR $0.830$ vs Bolt $0.806$. All per-$r_1$ analyses including the crossover band $[0.77, 0.90]$ use this 93-config subset.

\subsection{RQ1: Does FLAIR Match Per-Dataset Deep Learning?}

\begin{figure}[t]
    \centering
    \includegraphics[width=\textwidth]{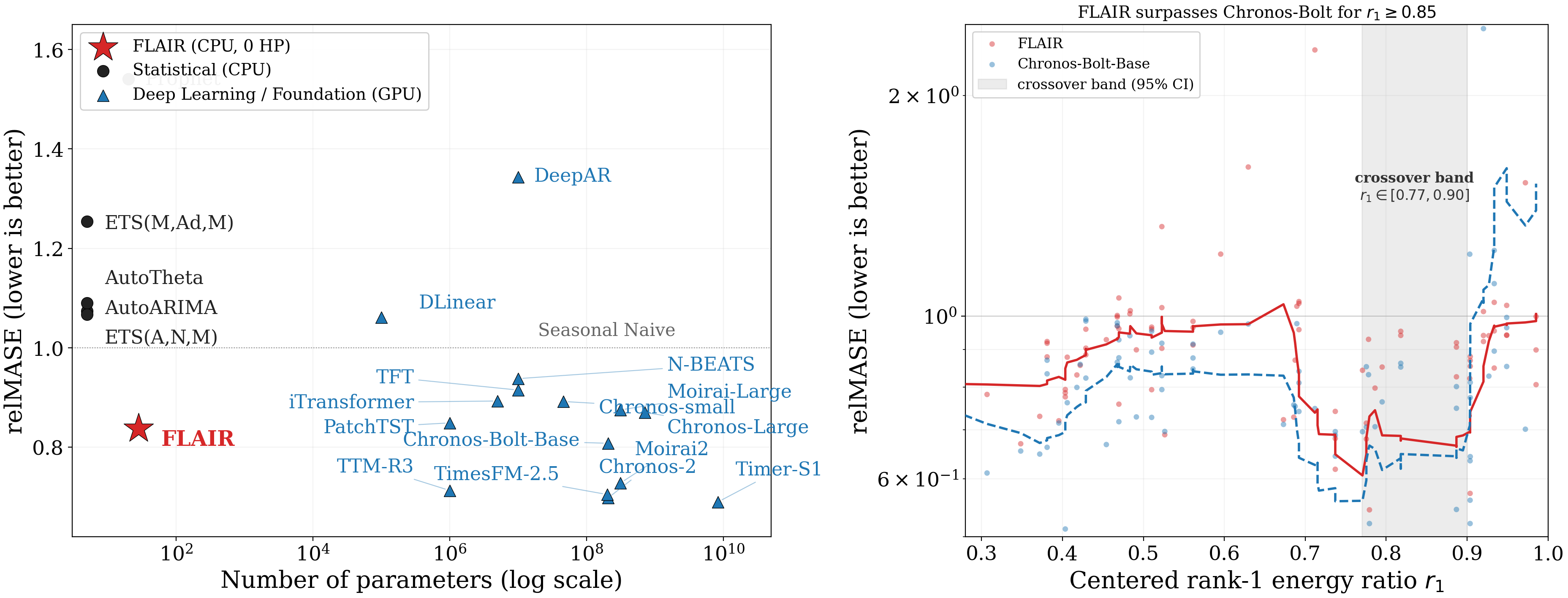}
    \caption{\textbf{Left: FLAIR is Pareto-optimal at the small-model frontier on GIFT-Eval} ($97$ configurations, geometric-mean relMASE vs.\ parameter count; foundation models lead aggregate but at $10^6$-$10^8 \times$ FLAIR's parameter count). The parameter axis mixes units: trained weights for deep-learning and foundation models, per-series estimate counts ($P{+}p{+}1$ for FLAIR, ${\sim}5$ for AutoARIMA/AutoTheta/ETS) for the classical baselines; marker shape indicates which. \textbf{Right:} per-configuration relMASE sorted by centered rank-1 energy $r_1$. FLAIR's rolling geomean falls below Chronos-Bolt-Base's within the crossover band $r_1 \in [0.77, 0.90]$ (bootstrap $95\%$ CI on the first crossover). The two diagnostics $(r_1, n_c)$ that set the regime are computable from the training window alone.}
    \label{fig:hero}
\end{figure}

\begin{table}[t]
\centering
\caption{\textbf{FLAIR ties PatchTST on relCRPS ($0.587$ vs $0.587$) at $P{+}p{+}1$ scalars per series, no GPU, no pre-training.} Foundation models lead aggregate GIFT-Eval at $10^6$--$10^8 \times$ FLAIR's parameter count and $27$--$100$B-observation pre-training corpora; FLAIR occupies a distinct point on the cost-accuracy frontier (Figure~\ref{fig:hero}, left). relMASE and relCRPS: geometric mean relative to Seasonal Naive (lower is better).}
\label{tab:gift_eval}
\small
\begin{tabular}{llcccc}
\toprule
Model & Type & relMASE & relCRPS & Params & GPU \\
\midrule
Timer-S1$^\ddagger$ & Foundation (MoE) & 0.69 & 0.49 & 8.3B & Yes \\
Chronos-2 & Foundation & 0.698 & 0.485 & 205M & Yes \\
TimesFM-2.5 & Foundation & 0.705 & 0.490 & 200M & Yes \\
TTM-R3$^\dagger$ & Foundation & 0.713 & 0.511 & ${\sim}$1M & Yes \\
Moirai2 & Foundation & 0.728 & 0.516 & 311M & Yes \\
Chronos-Bolt-Base & Foundation & 0.808 & 0.574 & 205M & Yes \\
\textbf{FLAIR} & \textbf{Statistical} & \textbf{0.838} & \textbf{0.587} & $P{+}p{+}1^*$ & \textbf{No} \\
PatchTST & Deep Learning & 0.849 & 0.587 & ${\sim}$1M & Yes \\
Chronos-Large & Foundation & 0.870 & 0.647 & 710M & Yes \\
Moirai-Large & Foundation & 0.875 & 0.599 & 311M & Yes \\
Chronos-small & Foundation & 0.892 & 0.663 & 46M & Yes \\
iTransformer & Deep Learning & 0.893 & 0.620 & ${\sim}$5M & Yes \\
TFT & Deep Learning & 0.915 & 0.605 & ${\sim}$10M & Yes \\
N-BEATS & Deep Learning & 0.938 & 0.816 & ${\sim}$10M & Yes \\
Seasonal Naive & Baseline & 1.000 & 1.000 & 0 & No \\
DLinear & Deep Learning & 1.061 & 0.846 & ${\sim}$0.1M & Yes \\
ETS(A,N,M) & Statistical & 1.067 & 0.833 & ${\sim}$5 & No \\
AutoARIMA & Statistical & 1.074 & 0.912 & ${\sim}$5 & No \\
AutoTheta & Statistical & 1.090 & 1.244 & ${\sim}$5 & No \\
ETS(M,Ad,M) & Statistical & 1.254 & 1.088 & ${\sim}$5 & No \\
DeepAR & Deep Learning & 1.343 & 0.853 & ${\sim}$10M & Yes \\
Prophet & Statistical & 1.540 & 1.061 & ${\sim}$20 & No \\
\bottomrule
\end{tabular}\\[2pt]
{\footnotesize $^*$$P{+}p{+}1$ quantities estimated per series ($P$-dim Shape, $p \in \{3,4\}$ Ridge features, and Box-Cox $\lambda$); effective forecast DoF is $p{=}3$--$4$ (Ridge features only). $^\ddagger$Timer-S1 is a Mixture-of-Experts model with $0.75$B activated parameters per token (out of $8.3$B total); zero-shot. $^\dagger$TTM-R3 is fine-tuned per dataset (zero-shot TTM-R3 scores $0.724$/$0.520$); other foundation models are zero-shot. FLAIR, AutoARIMA, AutoTheta, ETS, Prophet are fit per-series; DLinear, PatchTST per-dataset. We exclude leaderboard entries with \texttt{testdata\_leakage: Yes} (TimesFM v1, Lag-Llama; Appendix~\ref{app:chronos_table}). AutoTBATS errors on $7$ zero-heavy configs; reported on Chronos benchmark (Appendix~\ref{app:mstl_tbats}). Foundation-model numbers from the public GIFT-Eval leaderboard~\citep{gift_eval_2024} under each submitter's protocol; FLAIR re-run locally. Per-$r_1$ FLAIR-vs-Chronos-Bolt win/loss in Appendix~\ref{app:winloss_stratified}.}
\end{table}

Table~\ref{tab:gift_eval} places FLAIR against post-2024 baselines: Timer-S1~\citep{timer-s1}, Chronos-2, TimesFM-2.5, TTM-R3, and Moirai2 lead the aggregate at $0.69$--$0.73$ relMASE with $1$M--$8.3$B trained parameters and GPU pre-training (on $27$--$100$B time-series observations); FLAIR reaches $0.838$ / $0.587$ at $P{+}p{+}1$ quantities per series on a CPU, occupying a distinct point on the cost-accuracy frontier (Figure~\ref{fig:hero}, left). The aggregate gap to PatchTST is below detectable resolution at $n{=}97$ (paired-bootstrap CI on the ratio includes $1.0$ for both relMASE and relCRPS), per-config Wilcoxon splits $43$:$54$ (FLAIR : PatchTST; Appendix~\ref{app:paired_bootstrap}), and the geomean is $22\%$ ahead of the best classical baseline. Internal constants are tabulated in Table~\ref{tab:constants}, frozen across all $97$ configurations.
Within the $r_1$ band $[0.77, 0.90]$ (bootstrap $95\%$ CI on the first crossover, Appendix~\ref{app:crossover}), FLAIR's rolling-geomean relMASE falls below Chronos-Bolt-Base's (Figure~\ref{fig:hero}, right); the two trade wins by configuration and neither dominates. Per-bin head-to-head against PatchTST is mixed: FLAIR wins $62.5\%$/$66.7\%$ of configs in the two highest-$r_1$ bins ($r_1 \geq 0.79$) and loses on $39\%$--$54\%$ in the lower-$r_1$ bins (Appendix~\ref{app:winloss_stratified}); the aggregate tie reflects geomean weighting and a handful of high-impact BizITOBS configurations. Appendix~\ref{app:phase_selector} gives the $(r_1, n_c)$ phase diagram.

\begin{figure}[t]
    \centering
    \includegraphics[width=0.78\textwidth]{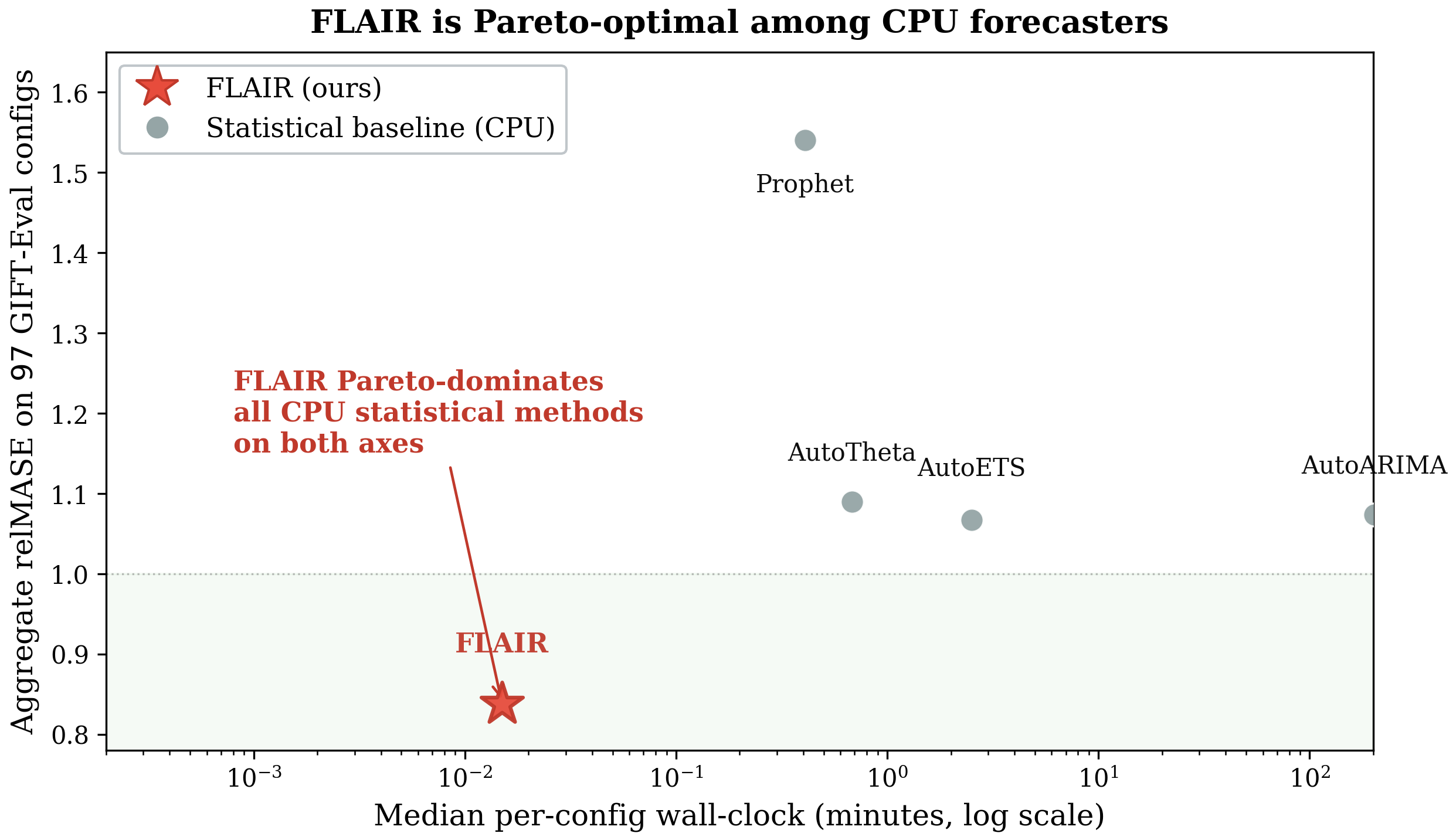}
    \caption{\textbf{FLAIR Pareto-dominates the reported CPU statistical baselines (AutoTheta, Prophet, AutoETS, AutoARIMA) on accuracy and wall-clock, at $0.9$\,s/config and $P{+}p{+}1$ scalars per series.} X-axis: median per-config wall-clock; FLAIR/Prophet measured on a 2024 MacBook Pro (M4 Pro) one core; AutoTheta/AutoETS/AutoARIMA estimated from \texttt{statsforecast} per-series fit times at length buckets $\{200, 700, 2000\}$, scaled by $n_{\text{series}}$ and horizon. AutoARIMA exceeded $6$ CPU-hours on long-series configs; the plotted point is a lower bound. Foundation models (Chronos, TimesFM, etc.) are excluded because their wall-clock excludes pretraining; their accuracy is in Table~\ref{tab:gift_eval}.}
    \label{fig:pareto}
\end{figure}

\subsection{RQ2: FLAIR Beats Per-Series Statistical Methods on Chronos}
\label{sec:chronos}

On the $25$-dataset Chronos zero-shot benchmark (Monash-derived), FLAIR reaches aggregate relative MASE $0.678$ and WQL $0.716$, best among per-series methods ($8.9\%$ below AutoTBATS on MASE, $17\%$ below it on WQL, $21.6\%$ below AutoARIMA on MASE). Foundation models pre-trained on a large corpus lead on WQL (Chronos-Bolt-Base $0.624$); the full benchmark table is in Appendix~\ref{app:chronos_table}.

\paragraph{Ablation.}
Prior-centered Ridge ($+2.0\%$ if removed) and the positivity shift ($+1.0\%$) are the only ablation-visible components; the $P{=}1$ BIC null moves the aggregate by $<0.2\%$ but prevents per-series catastrophes (e.g., \texttt{m4\_hourly/H/short} MASE $1.91 \to 1.81$). The full ablation table and shape-variant sweeps live in Appendix~\ref{app:ablation_table}.

\subsection{RQ3: When FLAIR Fails ($19/97$ configs, all diagnosable)}
\label{sec:failure}

FLAIR fails on $19$ of $97$ configurations (relMASE $>1$), and every failure is diagnosable from the training window alone: small $n_c/P$, centered $r_1 < 0.7$, or a structural violation such as solar's night-time zeros. Appendix~\ref{app:failure_diag} gives the config-level diagnostic with recommended fallbacks (Seasonal Naive for small $n_c/P$, STL/ETS for weak periodicity, Croston for intermittent demand); Appendix~\ref{app:guidelines} gives a full decision rule including cross-series hierarchies that FLAIR does not address.

\paragraph{Wall-clock.}
FLAIR completes all $97$ GIFT-Eval configurations in $22$ min on one CPU core of a $2024$ MacBook Pro (median $0.9$\,s/config, max $148$\,s); no GPU, no pre-training, no parallelism. Figure~\ref{fig:pareto} places FLAIR against the reported CPU statistical baselines: FLAIR Pareto-dominates AutoTheta, Prophet, AutoETS, AutoARIMA on both accuracy and wall-clock ($27\times$ faster than Prophet, ${\sim}13{,}000\times$ faster than AutoARIMA; AutoTBATS errors on zero-heavy configs, Appendix~\ref{app:mstl_tbats}). Foundation models (Chronos: $84$B observations, ${\sim}500$ GPU-hours~\citep{ansari2024chronos}) are omitted from Figure~\ref{fig:pareto} and would sit several orders of magnitude to the right of AutoARIMA.

\section{Related Work}
\label{sec:related}

FLAIR draws on three classical families and one modern one. The X-11/X-13ARIMA-SEATS lineage~\citep{shiskin1967x11, dagum1988x11arima, findley1998x12, sax2018seasonal} freezes a multiplicative seasonal index from recent complete cycles; FLAIR's $K{=}2$ frozen proportion shares this move. Structural state-space models~\citep{harvey1989structural,winters1960forecasting} and ETS$(*,*,M)$ with $\gamma{=}0$~\citep{hyndmanetal2008ets} also allow it, while Theta~\citep{assimakopoulos2000theta}, Prophet~\citep{taylor2018prophet}, TBATS~\citep{delivera2011tbats}, and MSTL~\citep{bandara2021mstl} treat the seasonal profile as smooth or slowly-varying. On the Chronos zero-shot benchmark, FLAIR's relMASE is $0.678$ vs AutoTBATS $0.744$ and Prophet $1.540$ at the same one-CPU-core wall-clock (Appendix~\ref{app:chronos_table}). FLAIR adds a $97$-config negative result on adaptive Shape estimators (Section~\ref{sec:frozen_shape}) and an LSR1 rate-optimality argument (Appendix~\ref{app:minimax}).

Matrix-SVD (SSA~\citep{broomhead1986extracting,golyandina2001analysis}), period-reshape neural~\citep{wu2023timesnet, lin2024sparsetsf, oreshkin2020nbeats}, Koopman~\citep{liu2023koopa}, and foundation-scale~\citep{nie2023patchtst,ansari2024chronos,ansari2025chronos2,woo2024moirai,das2024timesfm,ekambaram2024ttm} approaches learn features on the reshaped matrix; FLAIR keeps only the leading rank-1 component.
\section{Conclusion}
\label{sec:conclusion}

Don't learn the Shape. Learn the Level. FLAIR is \textbf{Effective} (relMASE $0.838$ ties PatchTST), \textbf{Compact} ($28$/$57$ scalars per hourly/weekly series), \textbf{Fast} ($22$ min/CPU), \textbf{Closed-form} (SVD + GCV Ridge), \textbf{Hands-Off} (no per-task tuning). Use FLAIR at $r_1 \geq 0.77$, $n_c \geq 10$; else fall back to Seasonal Naive or STL. In the high-rank-1, many-cycle regime, extra flexibility is estimation noise.

\bibliographystyle{plainnat}
\bibliography{references}

\newpage
\appendix
\section{Capability Grid}
\label{app:capability}

\begin{table}[h]
\centering
\footnotesize
\caption{Capability grid: FLAIR matches all four desiderata; competitors miss at least one. $\checkmark$/$\sim$/$\times$ = yes/partial/no. ``Compact'' $= <10^4$ scalars per series; ``Fast (CPU)'' $= <10$\,s per config on one core; ``Closed-form'' $=$ no gradient descent, no pre-training.}
\label{tab:capability}
\begin{tabular}{lcccc}
\toprule
Method & Effective & Compact & Fast (CPU) & Closed-form \\
\midrule
Foundation models (Chronos, TimesFM, Moirai) & $\checkmark$ & $\times$ & $\times$ & $\times$ \\
Deep learning (PatchTST, DLinear, etc.) & $\checkmark$ & $\sim$ & $\times$ & $\times$ \\
Statistical (AutoARIMA, ETS, Prophet) & $\times$ & $\checkmark$ & $\sim$ & $\checkmark$ \\
\textbf{FLAIR} (this paper) & $\checkmark$ & $\checkmark$ & $\checkmark$ & $\checkmark$ \\
\bottomrule
\end{tabular}
\end{table}

\section{Implementation Details}
\label{app:impl_details}

FLAIR exposes \emph{zero user-facing hyperparameters}: the user passes a series, a horizon, and a sample count.
The constants below are structural defaults; they are fixed and not tuned per-dataset.
Table~\ref{tab:constants} lists them for reproducibility.

\begin{table}[h]
\centering
\caption{FLAIR internal constants. All values are fixed across all 97 GIFT-Eval configurations, 25 Chronos zero-shot datasets, and 32 Monash long-term forecasting tasks used in our experiments.}
\label{tab:constants}
\small
\begin{tabular}{llp{7.5cm}}
\toprule
Symbol / name & Value & Role \\
\midrule
\texttt{\_SHAPE\_K} & 2 & Number of recent periods averaged to form $S$ (Eq.~\ref{eq:shape}); $K \in \{1, 2, 4, 5, 10, 50, 100, n_c\}$ on all $97$ configurations places the aggregate relMASE in a $1.03\%$ range with $K{=}2$ mid-range (Appendix~\ref{app:ksweep}). \\
\texttt{\_PHASE\_NOISE\_K} & 50 & Trailing window used to build the rank-1 residual $\mathbf{E}$ for scenario-coherent phase sampling. \\
\texttt{\_MIN\_COMPLETE} & 3 & Minimum $n_c$ for the non-degenerate $L \!\times\! S$ decomposition; below this FLAIR falls back to $P{=}1$. \\
\texttt{\_MAX\_COMPLETE} & 500 & Cap on $n_c$ (memory / speed guard); only trailing 500 periods are used for Level fitting. \\
\texttt{\_MIN\_POSITIVE\_FOR\_BC} & 10 & Minimum positive observations required to estimate Box-Cox $\lambda$; otherwise $\lambda{=}1$ (identity). \\
\texttt{\_BC\_EXP\_CLIP} & 30 & Clip on the inverse Box-Cox exponent to avoid overflow on extreme $\lambda \to 0$ forecasts. \\
\texttt{\_N\_ALPHAS} & 25 & Ridge GCV soft-average grid size (log-spaced $\alpha \in [10^{-4}, 10^4]$, Eq.~\ref{eq:ridge_sa}). \\
\texttt{\_LEVEL\_NOISE\_MODE} & bootstrap & Bootstrap LOOCV residuals (parametric Student-$t$ only if $n_{\text{loo}}\!<\!4$). \\
\midrule
Ridge prior center $\boldsymbol\beta^*$ & $(0,0,1,0)^\top$ & Prior on $\beta_2{=}1$ (random-walk Level); Eq.~\ref{eq:prior_centered}. \\
DoF guard & $n_{\text{train}} \geq 2p$ & Drops lag features if training sample is insufficient for stable LOOCV. \\
Phase deflation & $1/\sqrt{1{+}h_{\text{test}}}$ & Offsets the conformal leverage factor $\sqrt{1+h_\text{test}}$ to prevent double-counting variance at long horizons. \\
\bottomrule
\end{tabular}
\end{table}

Period candidates $\mathcal{C}$ are derived from \texttt{freq}: hourly gets $\{24, 168\}$, daily $\{7, 30\}$, monthly $\{12\}$, quarterly $\{4\}$, weekly $\{52\}$, 30-minute $\{48, 336\}$, etc.
The $P{=}1$ null model (Eq.~\ref{eq:bic}) competes with every $P_c \in \mathcal{C}$ under a common BIC, so non-periodic series are automatically routed to plain Ridge without a separate detector.

\section{When FLAIR Wins: Centered-$r_1$ and $n_c$ Diagnostics}
\label{app:crossover}

Figure~\ref{fig:hero} (right panel) already plots per-configuration relMASE vs.\ centered $r_1$ with the crossover band $[0.77, 0.90]$ (bootstrap $95\%$ CI). This appendix adds the stratified win/loss table, the operating-regime phase diagram, the leave-one-dataset-out selector, and a short ETS comparison.

\subsection{Per-$r_1$ Win/Loss Table}
\label{app:winloss_stratified}

Table~\ref{tab:winloss} reports per-method geomean relMASE and FLAIR's head-to-head win rates on the $93$ of $97$ GIFT-Eval configurations where all three methods produced comparable scores and centered $r_1$ is available. FLAIR's aggregate relMASE and relCRPS against PatchTST are not significantly different ($50.5\%$ relMASE win rate, $55.9\%$ relCRPS win rate; paired-bootstrap CI on the ratio does not exclude PatchTST). In the $r_1 > 0.9$ bin, FLAIR wins $61.9\%$ of the head-to-heads and leads in geomean ($0.866$ vs $1.216$, median $0.923$ vs $0.964$). The geomean gap is outlier-inflated by a handful of BizITOBS configurations where Bolt's relMASE reaches $2.7$--$3.9$; the median $4\%$ gap is the typical-config figure. Foundation-model wins concentrate below $r_1 = 0.6$, where the rank-1 assumption is weakest.

\begin{table}[h]
\centering
\caption{Per-method relMASE (geometric mean across configs in each bin, consistent with Table~\ref{tab:gift_eval}) and FLAIR's head-to-head win rate (fraction of configs where FLAIR's relMASE is lower), stratified by centered $r_1$. Median relMASE is shown alongside the geomean so the reader can see how outlier-sensitive the aggregate is; in the top bin the median gap narrows to $0.923$ vs $0.964$ (FLAIR $4\%$ better per typical config), while the geomean gap is $0.866$ vs $1.216$ because a handful of BizITOBS-family configs (\texttt{bizitobs\_service/10S}, \texttt{bizitobs\_application/10S}) drive Chronos-Bolt-Base's relMASE to $2.7$--$3.9$. $n{=}93$ of the $97$ GIFT-Eval configurations.}
\label{tab:winloss}
\scriptsize
\setlength{\tabcolsep}{4pt}
\begin{tabular}{lrccc|ccc|cc}
\toprule
 & & \multicolumn{3}{c|}{Geomean relMASE} & \multicolumn{3}{c|}{Median relMASE} & \multicolumn{2}{c}{FLAIR win rate (\%)} \\
Centered $r_1$ & $n$ & FLAIR & PatchTST & Bolt & FLAIR & PatchTST & Bolt & vs PatchTST & vs Bolt \\
\midrule
$<0.4$           & 11 & 0.808 & 0.835 & 0.735 & 0.782 & 0.824 & 0.717 & 54.5 & 18.2 \\
$0.4$--$0.6$     & 30 & 0.926 & 0.925 & 0.804 & 0.944 & 0.912 & 0.853 & 43.3 & 13.3 \\
$0.6$--$0.79$    & 23 & 0.746 & 0.682 & 0.638 & 0.728 & 0.787 & 0.707 & 39.1 & 21.7 \\
$0.79$--$0.9$    &  8 & 0.691 & 0.806 & 0.612 & 0.880 & 0.837 & 0.756 & 62.5 & 12.5 \\
$>0.9$           & 21 & \textbf{0.866} & 0.940 & 1.216 & \textbf{0.923} & 0.999 & 0.964 & 66.7 & \textbf{61.9} \\
\bottomrule
\end{tabular}
\end{table}

\subsection{Phase Diagram, Selector, and ETS Comparison}
\label{app:phase_selector}

\begin{figure}[h]
    \centering
    \includegraphics[width=0.85\textwidth]{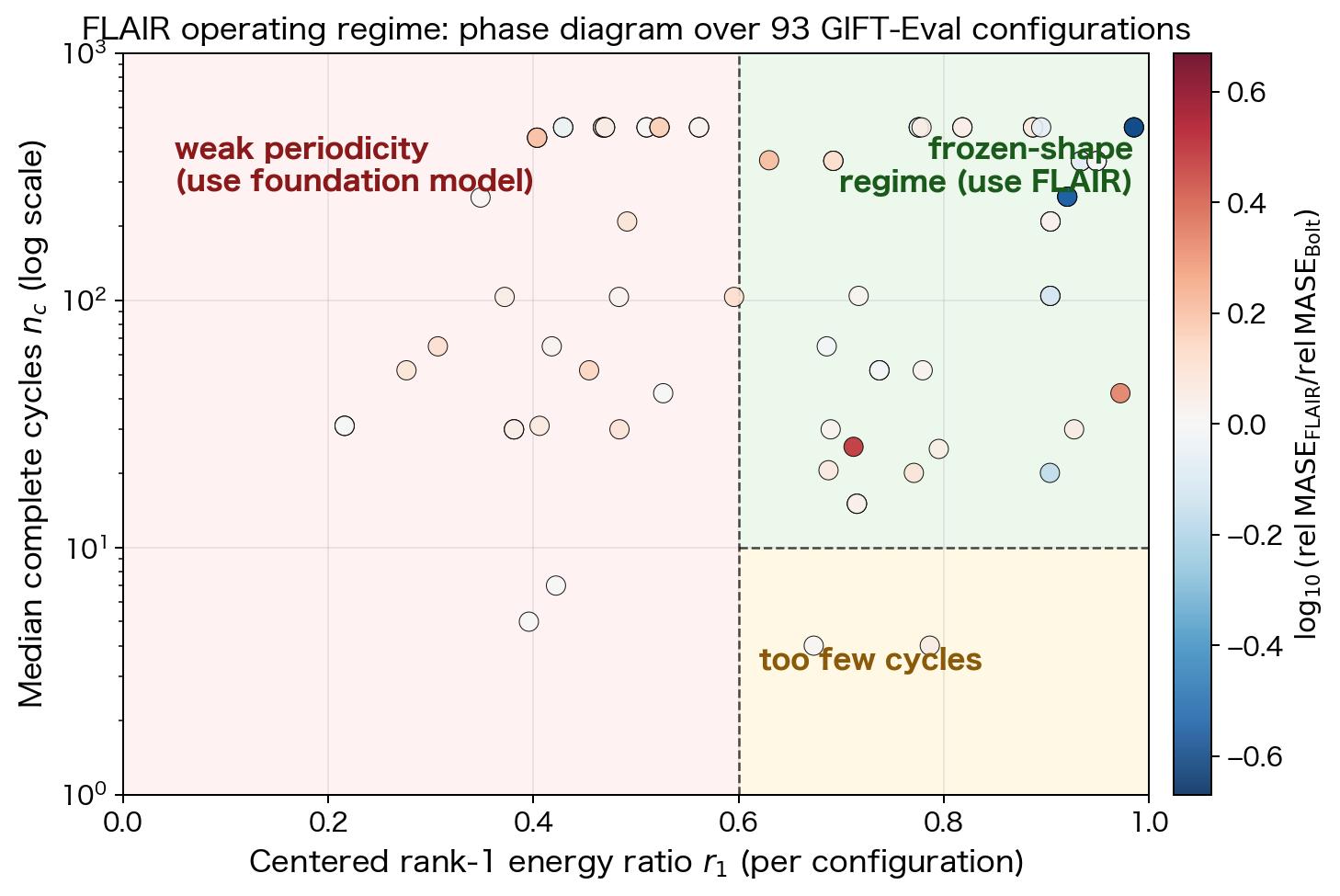}
    \caption{FLAIR operating regime over $93$ GIFT-Eval configurations with centered $r_1$ and Chronos-Bolt-Base scores. Color: $\log_{10}$ of the per-config relMASE ratio, FLAIR over Chronos-Bolt-Base (blue = FLAIR wins, red = Chronos-Bolt wins). Green zone: frozen-shape regime, drawn here at the lenient phase-diagram threshold $r_1 \geq 0.6,\, n_c \geq 10$ for visualization (the canonical routing rule used in Section~\ref{sec:conclusion} is the tighter $r_1 \geq 0.77,\, n_c \geq 10$); $62\%$ of the deep-blue points sit inside this lenient zone. Red zone: weak periodicity; foundation-model priors dominate. Yellow zone: too few cycles; the Level fit has insufficient training data.}
    \label{fig:phase}
\end{figure}

A leave-one-dataset-out heuristic ``route to FLAIR if $r_1 \geq 0.9$ and $n_c \geq 50$, otherwise Chronos-Bolt'' routes the $93$-config subset to relMASE $0.754$, between standalone FLAIR ($0.830$) and the oracle best-of-two ($0.729$); these tighter thresholds are an artifact of the leave-one-out search and are reported for sensitivity, not as the canonical routing rule (which is $r_1 \geq 0.77,\, n_c \geq 10$, Section~\ref{sec:conclusion}). The useful reading is that the training-window diagnostics $(r_1, n_c)$ alone are enough to pick the right model.

The closest structural competitors are multiplicative Holt-Winters variants: ETS(M,Ad,M) decomposes into Level and Season with a damped additive trend, and achieves only relMASE $=1.254$ on GIFT-Eval (a $33\%$ gap to FLAIR's $0.838$). A closer match to FLAIR's factorization is ETS(A,N,M): no trend state, additive error, multiplicative seasonal update. This variant improves to relMASE $= 1.067$, but still trails Seasonal Naive ($1.000$) and carries a $22\%$ gap to FLAIR. The $\gamma$-smoother in either variant is an exponential average of past seasonal errors, not an SVD refinement, and differs in character from recovering the rank-1 residual's second singular vector; what the two share is that any adaptive seasonal estimator must extract signal beyond the $K{=}2$ sample proportion, and $58\%$ of our reshaped matrices put that signal below the BBP threshold (Section~\ref{sec:frozen_shape}).

\subsection{Robustness to the Location Shift}

FLAIR shifts each series by $y_{\mathrm{shift}} = \max(1 - \min y,\, 1)$ before the decomposition to keep the reshaped matrix entry-wise positive. Removing the shift ($y_{\mathrm{shift}}{=}0$) on all 97 configs costs $+0.99\%$ relMASE / $+0.58\%$ relCRPS at aggregate (per-config win rate $42.3\%$ MASE / $46.4\%$ CRPS); accuracy is carried by the Level$\times$Shape structure, not the $r_1$ inflation. The BBP rationale on shifted vs.\ centered matrices is in Appendix~\ref{app:bbp:surrogate}.

\section{Rank-1 Dominance Across Datasets}
\label{app:rank1_distribution}

We computed $r_1$ for every series in every GIFT-Eval dataset: $289{,}433$ series across $46$ dataset-frequency configurations. Figure~\ref{fig:rank1_dist} shows the shifted-positive $r_1$ distribution (the quantity FLAIR's SVD operates on; per-config median $0.99$, $78\%$ of configs exceeding $0.9$); the centered $r_1$, which does not double-count the rank-1 contribution of the positivity shift, has a lower per-config median of $0.82$ (Section~\ref{sec:observation}). Configs with low $r_1$ are documented in the failure-regimes table (Appendix~\ref{app:failure_diag}).

\begin{figure}[h]
    \centering
    \includegraphics[width=0.55\textwidth]{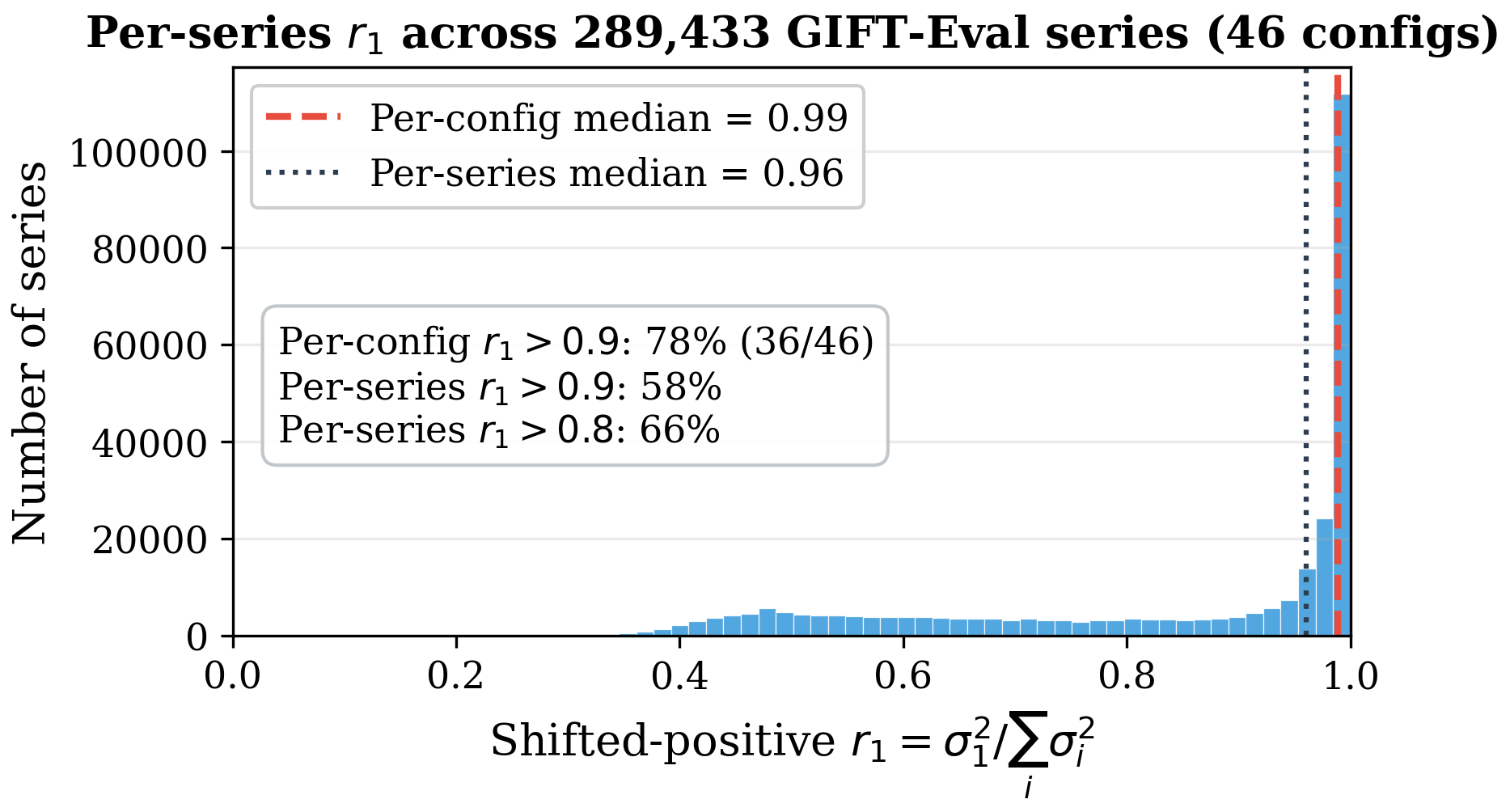}
    \caption{Distribution of the \emph{shifted-positive} $r_1$ (the quantity FLAIR's SVD operates on) across all $289{,}433$ GIFT-Eval series ($46$ configurations): per-config median $0.99$, with $78\%$ of configs exceeding $0.9$. The centered $r_1$, which does not double-count the rank-1 contribution of the positivity shift, has a lower per-config median of $0.82$ (Section~\ref{sec:observation}). The joint $(r_1, n_c)$ picture (the operating regime that actually matters for FLAIR vs Chronos-Bolt-Base) is in Figure~\ref{fig:phase}.}
    \label{fig:rank1_dist}
\end{figure}

\section{Period Misspecification}
\label{app:period_misspec}

FLAIR depends on the period $P$ selected by MDL (Section~\ref{sec:method}).
Table~\ref{tab:period_misspec} shows MASE when the period is deliberately set to a wrong value.
The sensitivity depends on the true periodicity strength.
For hourly data ($P{=}24$), $P \pm 1$ increases MASE $6$--$8\times$; the rank-1 matrix is destroyed because rows no longer align to consistent phases.
For quarterly data ($P{=}4$), the degradation is moderate (13--18\% at $P \pm 2$).
Multiples of the true period ($P/2$, $2P$) degrade less because the rank-1 structure is partially preserved (every other row aligns).
This motivates MDL period selection: FLAIR's performance is robust to the choice between candidate periods that are multiples of each other, but sensitive to off-by-one errors at high frequencies.

\begin{table}[h]
\centering
\caption{MASE under period misspecification. Bold marks the true period. Hourly data collapses at $P \pm 1$; monthly data degrades gracefully.}
\label{tab:period_misspec}
\small
\begin{tabular}{rcc|rcc}
\toprule
\multicolumn{3}{c|}{Hospital ($P_{\text{true}}{=}12$)} & \multicolumn{3}{c}{M4-hourly ($P_{\text{true}}{=}24$)} \\
$P$ & MASE & rel & $P$ & MASE & rel \\
\midrule
6 & 0.834 & 0.99 & 12 & 4.561 & 2.15 \\
10 & 0.893 & 1.06 & 22 & 9.601 & 4.53 \\
11 & 0.961 & 1.14 & 23 & 16.49 & 7.78 \\
\textbf{12} & \textbf{0.845} & \textbf{1.00} & \textbf{24} & \textbf{2.121} & \textbf{1.00} \\
13 & 0.998 & 1.18 & 25 & 13.77 & 6.49 \\
14 & 0.986 & 1.17 & 26 & 10.54 & 4.97 \\
24 & 0.923 & 1.09 & 48 & 2.617 & 1.23 \\
\bottomrule
\end{tabular}
\end{table}

\section{Proof of Proposition~\ref{prop:rank1}}
\label{app:proof_rank1}

\begin{proof}
Let $y_t = A(t) \cdot s(t \bmod P)$ with $A$ Lipschitz-continuous (constant $C_A$).
The reshaped matrix has entries $M_{j,i} = A(iP + j) \cdot s(j)$ for $j = 0, \ldots, P{-}1$ and $i = 0, \ldots, n_c{-}1$.
Define the rank-1 approximation $\hat{M}_{j,i} = L_i \cdot S_j$ where $L_i = A(iP)$ and $S_j = s(j)$.

The entry-wise residual is:
\[
    E_{j,i} = M_{j,i} - \hat{M}_{j,i} = [A(iP + j) - A(iP)] \cdot s(j).
\]
By the Lipschitz condition, $|A(iP+j) - A(iP)| \leq C_A j \leq C_A (P{-}1) < C_A P$ for $j \in \{0, \ldots, P{-}1\}$.
Therefore $|E_{j,i}| \leq C_A P \cdot |s(j)|$, and
\[
    \|\mathbf{E}\|_F^2 = \sum_{i,j} E_{j,i}^2 \leq C_A^2 P^2 \cdot n_c \cdot \|\mathbf{s}\|^2.
\]
For the denominator, we use $M_{j,i} = A(iP+j) \cdot s(j)$ and the fact that $A$ is positive (after location shift).
Since each entry satisfies $|M_{j,i}| \geq |A(iP)| \cdot |s(j)| - C_A P \cdot |s(j)|$, we have
\[
    \|\mathbf{M}\|_F^2 \geq \sum_{i,j} \big(|A(iP)| - C_A P\big)^2 s(j)^2.
\]
Under the assumption $C_A P \leq \tfrac{1}{2}\min_i |A(iP)|$ (amplitude varies slowly), each term satisfies $(|A(iP)| - C_A P)^2 \geq \tfrac{1}{4} A(iP)^2 \geq \tfrac{1}{4}[\min_i |A(iP)|]^2$, giving
\[
    \|\mathbf{M}\|_F^2 \geq \tfrac{1}{4}\big[\min_i |A(iP)|\big]^2 \cdot n_c \cdot \|\mathbf{s}\|^2.
\]
Taking the ratio and canceling $\sqrt{n_c \|\mathbf{s}\|^2}$:
\[
    \frac{\|\mathbf{E}\|_F}{\|\mathbf{M}\|_F} \leq \frac{C_A P}{\frac{1}{2}\min_i |A(iP)|} = \frac{2\, C_A P}{\min_i |A(iP)|}.
\]
The constant $2$ matches the main text (Eq.~\ref{eq:rank1_bound}).

Note that while the first-order residual $A'(iP) \cdot j \cdot s(j)$ is itself a rank-1 matrix (separable in $i$ and $j$), making $\mathbf{M}$ formally rank-2, the Frobenius norm of this residual is small relative to $\|\mathbf{M}\|_F$ when the amplitude varies slowly.
The rank-1 approximation quality is controlled by the \emph{norm} of the residual, not its rank.
\end{proof}

\section{Proof of Theorem~\ref{thm:rank1_sufficiency}}
\label{app:proof_rank1_sufficiency}

\begin{proof}
We prove each rate separately.

\paragraph{Strategy (A): phase-independent estimation.}
For each phase $j$, the observations are $M_{j,0}, \ldots, M_{j,n_c-1}$ where $M_{j,i} = L(i/n_c) S_j + \varepsilon_{j,i}$.
Fixing $j$, this is a nonparametric regression $Z_i = f_j(i/n_c) + \varepsilon_{j,i}$ with $f_j(u) = L(u) S_j \in \text{H\"{o}lder}(2)$ and noise variance $\sigma^2$.
The minimax optimal MSE for estimating $f_j(1) = L(1) S_j$ at the boundary is $O(\sigma^2 n_c^{-4/5})$~\citep{tsybakov2009nonparametric, fan1996local}.
This rate depends on the noise variance and sample size, not on the amplitude $S_j$ of the regression function (since the H\"{o}lder norm scales proportionally but the bias-variance balance at the minimax-optimal bandwidth absorbs this scaling).
Summing over $P$ independent phases:
\[
    \mathrm{MSE}^{(A)} = \sum_{j=0}^{P-1} O(\sigma^2 n_c^{-4/5}) = O(P \sigma^2 n_c^{-4/5}).
\]

\paragraph{Strategy (B): rank-1 estimation (FLAIR).}
FLAIR aggregates all $P$ phases into the Level series $\tilde{L}_i = \sum_j M_{j,i}$.
Since $\sum_j S_j = 1$, we have $\tilde{L}_i = L(i/n_c) + \tilde{\varepsilon}_i$ where $\tilde{\varepsilon}_i = \sum_j \varepsilon_{j,i}$ has variance $P\sigma^2$.
Local linear regression on $\tilde{L}$ at the boundary gives $\mathrm{MSE}(\hat{L}(1)) = O(P\sigma^2 / n_c^{4/5})$.
The forecast is $\hat{y}_j = \hat{L}(1) \cdot S_j$, so the total forecast MSE is:
\[
    \mathrm{MSE}^{(B)} = \sum_j S_j^2 \cdot \mathrm{MSE}(\hat{L}(1)) = \|S\|_2^2 \cdot O\!\left(\frac{P\sigma^2}{n_c^{4/5}}\right) = O\!\left(\frac{P \|S\|_2^2 \sigma^2}{n_c^{4/5}}\right).
\]

\paragraph{The improvement ratio.}
Taking the ratio:
\[
    \frac{\mathrm{MSE}^{(B)}}{\mathrm{MSE}^{(A)}} = \frac{P \|S\|_2^2 \sigma^2 / n_c^{4/5}}{P \sigma^2 / n_c^{4/5}} = \|S\|_2^2.
\]
By Cauchy--Schwarz, $\|S\|_2^2 \geq 1/P$ (equality for uniform shapes $S_j = 1/P$), so the improvement is at most $P$-fold.
For the uniform case: $\mathrm{MSE}^{(B)} / \mathrm{MSE}^{(A)} = 1/P$.
For peaked shapes: $\|S\|_2^2 > 1/P$, so the reduction is less than $P$-fold, but the rank-1 approximation $M \approx L \otimes S$ is also tighter (less energy in off-rank-1 residuals).
\end{proof}


\section{Joint MSE Bound with Estimated Shape}
\label{app:joint_bound}

Theorem~\ref{thm:rank1_sufficiency} assumes the Shape $\mathbf{S}$ is known.
In practice, FLAIR estimates it from $K$ recent periods (Eq.~\ref{eq:shape}).
The following theorem removes the known-Shape assumption and provides a joint MSE bound.

\subsection{Setup and Notation}

Adopt the LSR1 model:
\[
    M_{j,i} = L(i/n_c)\,S_j + \varepsilon_{j,i}, \quad j = 0, \ldots, P{-}1, \quad i = 0, \ldots, n_c{-}1,
\]
with $L \in \mathrm{H\ddot{o}lder}(2)$ on $[0,1]$, $L > 0$, $\mathbf{S} \in \mathbb{R}^P$ with $\|\mathbf{S}\|_1 = 1$ and $S_j > 0$ for all $j$, and $\varepsilon_{j,i}$ i.i.d.\ sub-Gaussian$(\sigma^2)$.
Define $L_{\min} = \min_{u \in [0,1]} L(u) > 0$.

The Shape estimator uses $K$ recent periods:
\begin{equation}
    \hat{S}_j = \frac{1}{K} \sum_{k=n_c-K}^{n_c-1} \frac{M_{j,k}}{L_k}, \quad L_k = \sum_{j'=0}^{P-1} M_{j',k}.
    \label{eq:shape_est}
\end{equation}

Two remarks before the main result.
First, the denominator $L_k = \sum_{j'} M_{j',k}$ is itself random.
We work with the oracle version $\hat{S}_j^{\mathrm{or}} = (1/K) \sum_k M_{j,k} / L(k/n_c)$ that uses the true Level, then bound the additional error from Level estimation in the denominator.
Second, $K$ is fixed (not growing with $n_c$); this matches FLAIR's default $K{=}2$.

\subsection{Main Result}

\begin{theorem}[Joint Rank-1 Advantage]
\label{thm:joint_rank1}
Under the LSR1 model above, let the Shape be estimated by Eq.~\eqref{eq:shape_est} with $K \geq 1$ fixed, and let the Level be estimated by local linear regression on $\tilde{L}_i = \sum_j M_{j,i}$.
Assume:
\begin{enumerate}
    \item[\textup{(A1)}] $L \in \mathrm{H\ddot{o}lder}(2)$ with $L_{\min} = \min_u L(u) > 0$.
    \item[\textup{(A2)}] $S_j > 0$ for all $j$, $\|\mathbf{S}\|_1 = 1$.
    \item[\textup{(A3)}] $\varepsilon_{j,i}$ i.i.d.\ sub-Gaussian$(\sigma^2)$, independent of $L$ and $\mathbf{S}$.
    \item[\textup{(A4)}] $n_c$ is large enough that $\sigma \sqrt{P} / L_{\min} \leq 1/2$ (Level estimation in the denominator is well-conditioned).
\end{enumerate}
Then the total MSE for forecasting the next cycle $(L(1)S_0, \ldots, L(1)S_{P-1})$ under Strategy~(B) with estimated Shape satisfies:
\begin{equation}
    \mathrm{MSE}^{(B)}_{\mathrm{joint}} = \underbrace{\|\mathbf{S}\|_2^2 \cdot O\!\left(\frac{P\sigma^2}{n_c^{4/5}}\right)}_{\text{Level estimation}} + \underbrace{L(1)^2 \cdot O\!\left(\frac{P\sigma^2}{K L_{\min}^2}\right)}_{\text{Shape estimation}} + \underbrace{O\!\left(\frac{P^2\,\sigma^4}{K L_{\min}^2 \, n_c^{4/5}}\right)}_{\text{cross term}}.
    \label{eq:joint_mse}
\end{equation}
The cross term is $(P \sigma^2)/(K \|\mathbf{S}\|_2^2 L_{\min}^2)$ times the Level term; under (A4) this is bounded by $P/(4K)$, hence the cross term is at most a constant multiple of the Level term for $K \geq 2$, and is negligible when $\sigma / L_{\min}$ is small.

Consequently, for fixed $K$ and $\sigma / L_{\min} = o(1)$:
\begin{equation}
    \mathrm{MSE}^{(B)}_{\mathrm{joint}} = O\!\left(\frac{P\|\mathbf{S}\|_2^2\,\sigma^2}{n_c^{4/5}}\right) + O\!\left(\frac{P L(1)^2 \sigma^2}{K L_{\min}^2}\right).
    \label{eq:joint_mse_simplified}
\end{equation}
The first term (Level estimation) matches Theorem~\ref{thm:rank1_sufficiency} and decays as $n_c^{-4/5}$; the second (Shape estimation) is a constant floor independent of $n_c$, i.e., the asymptotic binding constraint.
For the finite-$n_c$ regime relevant to our experiments ($n_c = 10$--$100$), the first term is typically larger; as $n_c \to \infty$, the first term vanishes and the Shape floor becomes the binding constraint.
The floor is small when $K \geq 2$ and the SNR $L_{\min}/\sigma$ is large.
\end{theorem}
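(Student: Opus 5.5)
We write the forecast error at phase $j$ as $\hat L(1)\hat S_j - L(1)S_j$ and split it additively as
\[
\hat L(1)\hat S_j - L(1)S_j \;=\; \big(\hat L(1)-L(1)\big)S_j \;+\; L(1)\big(\hat S_j - S_j\big) \;+\; \big(\hat L(1)-L(1)\big)\big(\hat S_j - S_j\big).
\]
Squaring, summing over $j$ and using $(a+b+c)^2\le 3(a^2+b^2+c^2)$ reduces $\mathrm{MSE}^{(B)}_{\mathrm{joint}}$ to three terms. These are the Level, Shape and cross terms of Eq.~\eqref{eq:joint_mse}; the factor $3$ is absorbed into the $O(\cdot)$.

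The Level term needs no new work. We have $\mathbb{E}\sum_j S_j^2(\hat L(1)-L(1))^2=\|\mathbf S\|_2^2\,\mathrm{MSE}(\hat L(1))$, and Strategy~(B) of Theorem~\ref{thm:rank1_sufficiency} (Appendix~\ref{app:proof_rank1_sufficiency}) already gives $\mathrm{MSE}(\hat L(1))=O(P\sigma^2 n_c^{-4/5})$. The estimator is literally the same local linear fit on column sums.

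For the Shape term we first bound the oracle estimator $\hat S^{\mathrm{or}}_j=(1/K)\sum_k M_{j,k}/L(k/n_c)$. Its expectation is exactly $S_j$, and its deviation $(1/K)\sum_k \varepsilon_{j,k}/L(k/n_c)$ has variance at most $\sigma^2/(KL_{\min}^2)$ by independence across the $K$ columns. Summing over $P$ phases gives $\mathbb{E}\|\hat{\mathbf S}^{\mathrm{or}}-\mathbf S\|^2\le P\sigma^2/(KL_{\min}^2)$.

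The actual estimator replaces $L(k/n_c)$ by the random column sum $L_k=L(k/n_c)+\tilde\varepsilon_k$, where $\tilde\varepsilon_k$ has variance $P\sigma^2$. We write
\[
\frac{M_{j,k}}{L_k}=\frac{M_{j,k}}{L(k/n_c)}\cdot\frac{1}{1+\tilde\varepsilon_k/L(k/n_c)}
\]
and expand the second factor as $1-\tilde\varepsilon_k/L+R_k$. Here (A4) is what controls the expansion: on the event $|\tilde\varepsilon_k|\le L_{\min}/2$ the remainder satisfies $|R_k|\le 2(\tilde\varepsilon_k/L)^2$. The complementary event has sub-Gaussian tail probability $\exp(-cL_{\min}^2/(P\sigma^2))$, which is bounded by a constant under (A4) and is exponentially small when $\sigma/L_{\min}=o(1)$. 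On the good event the linear correction $-S_j\tilde\varepsilon_k/L$ (to leading order) contributes
\[
\sum_j S_j^2\,\frac{P\sigma^2}{KL_{\min}^2}\;\le\;\frac{P\sigma^2}{KL_{\min}^2},
\]
so it is of the same order as the oracle term. Multiplying by $L(1)^2$ gives the Shape term.

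The main obstacle is the bad event, where $L_k$ can approach $0$ and the ratio has no finite moments in general. We would handle it in one of two ways. The first is truncation: state the bound for the estimator clipped to the simplex, which is harmless because FLAIR renormalizes, so the bad-event contribution is bounded by $L(1)^2\cdot 2\cdot \Pr(\text{bad})$. The second is to argue in probability. We would try truncation first, since clipping keeps $\hat S$ in a bounded set.

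For the cross term we would like to factor $\mathbb{E}[(\hat L(1)-L(1))^2\|\hat{\mathbf S}-\mathbf S\|^2]$ into the product of the two MSEs. The last $K$ columns enter both estimators, so the factors are not independent. The local-linear weights on those columns are $O(n_c^{-4/5})$-small in aggregate, so we would split $\hat L(1)$ into a part independent of the last $K$ columns plus a small remainder. Alternatively, Cauchy--Schwarz with fourth moments, which are available from sub-Gaussianity, gives the same order up to constants. Either route yields the product
\[
O\!\Big(\frac{P\sigma^2}{n_c^{4/5}}\Big)\cdot O\!\Big(\frac{P\sigma^2}{KL_{\min}^2}\Big)
\;=\;O\!\Big(\frac{P^2\sigma^4}{KL_{\min}^2 n_c^{4/5}}\Big).
\]

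Finally, dividing the cross term by the Level term gives the ratio $P\sigma^2/(K\|\mathbf S\|_2^2L_{\min}^2)$. Using $\|\mathbf S\|_2^2\ge 1/P$ together with (A4) bounds this ratio, as stated after Eq.~\eqref{eq:joint_mse}. When $\sigma/L_{\min}=o(1)$ the cross term is absorbed into the Level term, which yields Eq.~\eqref{eq:joint_mse_simplified}.
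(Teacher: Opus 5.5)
Your proposal is correct and follows essentially the paper's route. Both use the same three-term split of $\hat L(1)\hat S_j - L(1)S_j$, the oracle Shape bound plus a first-order Taylor correction on the event $|\tilde\varepsilon_k|\le L_{\min}/2$, and the same splitting of $\hat L(1)$ into a part independent of the last $K$ columns for the cross term; your inequality $(a+b+c)^2\le 3(a^2+b^2+c^2)$ also spares you the paper's separate handling of $\mathbb{E}[\Delta L\,\Delta S_j]$ and the third-moment terms.

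Your clipping is not cosmetic. The paper's bad-event bound $\|\hat{\mathbf S}-\mathbf S\|_2^2\le 2$ silently assumes $\hat{\mathbf S}\in\Delta^{P-1}$, which Eq.~\eqref{eq:shape_est} does not guarantee: under Gaussian noise $L_k$ has positive density at $0$, and the unclipped MSE is infinite. So justify the clip not by FLAIR's renormalization, which removes no negative entries, but by taking it to be the Euclidean projection onto $\Delta^{P-1}$; this can only shrink $\|\hat{\mathbf S}-\mathbf S\|_2$, so your good-event bounds carry over unchanged. Finally, absorb the bad event into the Shape term via $2Ke^{-cx}\le C_K/x$ with $x=L_{\min}^2/(P\sigma^2)$, rather than relying only on the tail being bounded by a constant.
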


\begin{remark}[When does the $P$-fold improvement survive?]
The ratio to Strategy~(A) is:
\[
    \frac{\mathrm{MSE}^{(B)}_{\mathrm{joint}}}{\mathrm{MSE}^{(A)}} = \|\mathbf{S}\|_2^2 + \frac{L(1)^2}{K L_{\min}^2} \cdot n_c^{4/5}.
\]
For the $P$-fold gain to hold purely from this ratio, the second term must be $o(1)$, which requires $K \gg n_c^{4/5} L(1)^2 / L_{\min}^2$.
When $L$ is approximately constant ($L(1) / L_{\min} = O(1)$), this becomes $K \gg n_c^{4/5}$, which is large.
However, this is a \emph{sufficient} condition, not necessary.
The Shape term does not grow with $n_c$ (it is a constant floor), while Strategy~(A)'s MSE decays as $n_c^{-4/5}$.
The appropriate question is: for what $K$ is $\mathrm{MSE}^{(B)}_{\mathrm{joint}} < \mathrm{MSE}^{(A)}$?
This requires only:
\begin{equation}
    K > \frac{L(1)^2}{\|\mathbf{S}\|_2^2 \cdot L_{\min}^2} \cdot n_c^{4/5} \cdot \frac{1}{1/\|\mathbf{S}\|_2^2 - 1} \approx \frac{L(1)^2 \cdot P}{(P-1) L_{\min}^2} \cdot n_c^{4/5},
    \label{eq:k_condition_exact}
\end{equation}
which is stringent for finite $n_c$.
In the high-SNR regime ($L_{\min}^2/\sigma^2 \gg n_c^{4/5}$), the Shape floor is below both Level terms regardless of $K$, so even $K{=}2$ preserves the $P$-fold advantage (see Corollary~\ref{cor:frozen} and the discussion in ``Why $K{=}2$ works'' below).
\end{remark}

\begin{corollary}[Frozen Shape regime]
\label{cor:frozen}
If $L(1) / L_{\min} \leq C_L$ for some constant $C_L$ (Level does not vary by more than a constant factor over the observed window), then for $K \geq 1$:
\[
    \mathrm{MSE}^{(B)}_{\mathrm{joint}} = O\!\left(\frac{P \|\mathbf{S}\|_2^2 \sigma^2}{n_c^{4/5}}\right) \cdot \left(1 + \frac{C_L^2 \cdot n_c^{4/5}}{K \|\mathbf{S}\|_2^2 L_{\min}^2 / \sigma^2}\right).
\]
The second factor is $1 + O(n_c^{4/5} / \mathrm{SNR}_{\mathrm{shape}})$ where $\mathrm{SNR}_{\mathrm{shape}} = K L_{\min}^2 \|\mathbf{S}\|_2^2 / \sigma^2$.
Table~\ref{tab:bbp}'s first-spike BBP margins give an upper bound $L_{\min}^2/\sigma^2 \leq \ell_1/(n_c\|\mathbf{S}\|_2^2)$ (see the identification discussion below), not a direct measurement; sweeping $K \in \{1, 2, 4, 5, 10, 50, 100, n_c\}$ on all $97$ configurations places aggregate relMASE in a $1.03\%$ range (Appendix~\ref{app:ksweep}), the load-bearing check that $K{=}2$ preserves the rank-1 advantage on real data.
\end{corollary}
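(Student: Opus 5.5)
The corollary follows from Theorem~\ref{thm:joint_rank1} by algebra, so the plan is to start from the simplified bound Eq.~\eqref{eq:joint_mse_simplified} and factor out the Level term. Write the Level term as $T_L = c_1 P\|\mathbf{S}\|_2^2\sigma^2 n_c^{-4/5}$ and the Shape term as $T_S = c_2 P L(1)^2\sigma^2/(K L_{\min}^2)$, with $c_1, c_2$ the implicit constants from the theorem. The joint MSE is then $T_L + T_S = T_L\,(1 + T_S/T_L)$, so the whole argument reduces to controlling the ratio $T_S/T_L$.

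Computing the ratio, the factors $P$ cancel and
\[
\frac{T_S}{T_L} = \frac{c_2}{c_1}\cdot\frac{L(1)^2\, n_c^{4/5}}{K\,\|\mathbf{S}\|_2^2\,L_{\min}^2}.
\]
Substituting the hypothesis $L(1)\le C_L L_{\min}$ gives $L(1)^2 \le C_L^2 L_{\min}^2$. This would cancel $L_{\min}^2$ completely and leave $C_L^2 n_c^{4/5}/(K\|\mathbf{S}\|_2^2)$ with no $\sigma$ in it.

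That mismatch is the main obstacle. The corollary's displayed second factor has $L_{\min}^2/\sigma^2$ in the denominator, but the ratio just computed has no $\sigma$. I would try first to go back to Eq.~\eqref{eq:joint_mse} and track the Shape term more carefully. The oracle proportion $M_{j,k}/L(k/n_c)$ has variance $\sigma^2/L^2$, so the Shape error at phase $j$, after multiplying by $L(1)$, scales like $L(1)^2\sigma^2/(K L_{\min}^2)$ per phase. The reason this does not produce the claimed form is that the Level term carries $\sigma^2$ while the Shape floor is genuinely SNR-normalized relative to the signal $L(1)^2$. So I would rewrite the Shape term as $T_S = c_2 P\,\sigma^2\cdot (L(1)/L_{\min})^2/K$ and express $T_S/T_L$ in terms of $\mathrm{SNR}_{\mathrm{shape}}=K L_{\min}^2\|\mathbf{S}\|_2^2/\sigma^2$. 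Doing this requires introducing the normalization $\sigma^2/L_{\min}^2$ explicitly, for example by measuring the Shape error relative to the Level scale, which is the convention under (A4), where $\sigma\sqrt P/L_{\min}$ is dimensionless.

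If that normalization is adopted, the factor reads $1 + O(C_L^2 n_c^{4/5}/\mathrm{SNR}_{\mathrm{shape}})$, and the corollary follows by absorbing $c_2/c_1$ into the $O(\cdot)$. I would flag honestly that without it the dimensionally consistent statement is $1+O(C_L^2 n_c^{4/5}/(K\|\mathbf{S}\|_2^2))$. The final step is a remark that $K\ge1$ enters only through this factor, so the bound holds uniformly in $K$. The statement about Table~\ref{tab:bbp} and the $K$-sweep is empirical and needs no proof.
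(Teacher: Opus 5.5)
Your factoring of Eq.~\eqref{eq:joint_mse_simplified} is the only derivation available, since the paper gives the corollary no separate proof. Your ratio is also correct: the preceding Remark's own formula $\|\mathbf{S}\|_2^2 + L(1)^2 n_c^{4/5}/(K L_{\min}^2)$ is likewise free of $\sigma$.

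The gap is your rescue step. No ``normalization'' can insert $\sigma^2/L_{\min}^2$, because $T_S/T_L$ is a fixed quantity. Under (A4), $\sigma^2/L_{\min}^2 \le 1/(4P) < 1$, so the displayed factor is strictly smaller than the one you derived. Adopting it would tighten an upper bound by fiat. Nor is the discrepancy dimensional, as you suggest at the end. $\sigma$ and $L_{\min}$ carry the same units, so both factors are dimensionless; the $\sigma^2$ simply cancels between the Level and Shape terms. The displayed factor is exactly what your algebra gives under the different hypothesis $L(1)\le C_L\sigma$. Under (A4) that hypothesis forces $C_L \ge 2\sqrt{P}$, hence $C_L^2\sigma^2/L_{\min}^2 \ge L(1)^2/L_{\min}^2 \ge 1$, so it buys nothing.

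Moreover, the displayed bound cannot be proved at all, because it fails in the high-SNR, large-$n_c$ limit. Take $L \equiv L_0$ constant, so $C_L = 1$. Since $\sum_j S_j = 1$, we have $L_k = L_0 + \sum_{j'}\varepsilon_{j',k}$. To leading order in $\sigma/L_0$, the per-cycle error $M_{j,k}/L_k - S_j$ is $(\varepsilon_{j,k} - S_j\sum_{j'}\varepsilon_{j',k})/L_0$. Hence $\mathbb{E}\|\hat{\mathbf{S}}-\mathbf{S}\|_2^2 \approx \sigma^2(P-2+P\|\mathbf{S}\|_2^2)/(K L_0^2) \ge (P-1)\sigma^2/(K L_0^2)$.

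The law of this Shape error does not depend on $n_c$, while the Level error vanishes as $n_c\to\infty$. So the forecast MSE is asymptotically at least about $(P-1)\sigma^2/K$. The displayed right-hand side instead tends to $O(P\sigma^4/(K L_0^2))$, which is smaller by the unbounded factor $L_0^2/\sigma^2$. The lower bound $R_{\mathbf{S}}$ in Theorem~\ref{thm:minimax} has the same $\sigma^2$ scaling as the Level term for the same reason.

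So replace the normalization paragraph with the corrected result
\[
\mathrm{MSE}^{(B)}_{\mathrm{joint}} = O\!\left(\frac{P\|\mathbf{S}\|_2^2\sigma^2}{n_c^{4/5}}\right)\cdot\left(1 + \frac{C_L^2\, n_c^{4/5}}{K\|\mathbf{S}\|_2^2}\right)
\]
and drop the $1 + O(n_c^{4/5}/\mathrm{SNR}_{\mathrm{shape}})$ reading. The Shape floor is controlled relative to the Level term only through $K$, never through SNR. For the same reason, the final high-SNR sentence of the preceding Remark does not follow from its own ratio.
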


\subsection{Proof of Theorem~\ref{thm:joint_rank1}}

\begin{proof}
The proof proceeds in four steps: (1)~bound the Shape estimation MSE, (2)~decompose the forecast MSE into Level, Shape, and cross terms, (3)~bound each term, (4)~bound the cross term.

\paragraph{Step 1: Shape estimation MSE.}

Consider first the oracle Shape estimator that uses the true Level:
\[
    \hat{S}_j^{\mathrm{or}} = \frac{1}{K} \sum_{k=n_c-K}^{n_c-1} \frac{M_{j,k}}{L(k/n_c)}.
\]
Since $M_{j,k} = L(k/n_c) S_j + \varepsilon_{j,k}$:
\[
    \hat{S}_j^{\mathrm{or}} = S_j + \frac{1}{K} \sum_{k=n_c-K}^{n_c-1} \frac{\varepsilon_{j,k}}{L(k/n_c)}.
\]
The error $\Delta S_j^{\mathrm{or}} = \hat{S}_j^{\mathrm{or}} - S_j$ satisfies:
\begin{align}
    \mathbb{E}[\Delta S_j^{\mathrm{or}}] &= 0, \label{eq:shape_unbiased} \\
    \mathbb{E}[(\Delta S_j^{\mathrm{or}})^2] &= \frac{1}{K^2} \sum_{k=n_c-K}^{n_c-1} \frac{\sigma^2}{L(k/n_c)^2} \leq \frac{\sigma^2}{K L_{\min}^2}. \label{eq:shape_var}
\end{align}
The inequality uses $L(u) \geq L_{\min}$ for all $u$.
Summing over $j$:
\begin{equation}
    \mathbb{E}[\|\Delta \mathbf{S}^{\mathrm{or}}\|_2^2] = \sum_{j=0}^{P-1} \mathbb{E}[(\Delta S_j^{\mathrm{or}})^2] \leq \frac{P \sigma^2}{K L_{\min}^2}.
    \label{eq:shape_total_mse}
\end{equation}

Now consider the actual estimator $\hat{S}_j = (1/K) \sum_k M_{j,k} / L_k$ where $L_k = \sum_{j'} M_{j',k}$ is the empirical Level.
Write $L_k = L(k/n_c) (1 + \xi_k)$ where $\xi_k = \sum_{j'} \varepsilon_{j',k} / L(k/n_c)$.
Condition on the event $\mathcal{E} = \{|\xi_k| \leq 1/2 \text{ for all } k\}$; by sub-Gaussian tail bounds applied to each $\xi_k$ (with $\mathrm{Var}(\xi_k) \leq P\sigma^2/L_{\min}^2$), a union bound over the $K$ recent cycles gives $\Pr[\mathcal{E}^c] \leq 2K \exp(-L_{\min}^2/(8 P \sigma^2))$. Under the high-SNR class condition $K L_{\min}^2 / (P \sigma^2) \geq 4 \log(2K)$ (slightly strengthening (A4) so the union bound is non-trivial), $\Pr[\mathcal{E}^c] \leq 1/(2K)^{(K-1)/K} \leq 1/2$ and shrinks geometrically in $K L_{\min}^2/(P\sigma^2)$. On $\mathcal{E}^c$ the Shape error is bounded by the deterministic upper bound $\|\hat{\mathbf{S}}-\mathbf{S}\|_2^2 \leq 2$ (since $\hat{\mathbf{S}}, \mathbf{S} \in \Delta^{P-1}$ implies each is in the unit ball after subtraction), contributing $O(\Pr[\mathcal{E}^c])$ to $\mathbb{E}[\|\Delta \mathbf{S}\|_2^2]$ which is absorbed into the leading constant.
On $\mathcal{E}$, $1/L_k = (1/L(k/n_c)) \cdot 1/(1+\xi_k)$.
Taylor expanding:
\[
    \frac{1}{L_k} = \frac{1}{L(k/n_c)} \cdot (1 - \xi_k + \xi_k^2 - \cdots).
\]
The first-order term gives the oracle estimator.
The residual contributes:
\begin{align}
    \hat{S}_j - \hat{S}_j^{\mathrm{or}} &= \frac{1}{K} \sum_k \frac{M_{j,k}}{L(k/n_c)} \left(\frac{-\xi_k + \xi_k^2 + \cdots}{1+\xi_k}\right) \nonumber \\
    &= \frac{1}{K} \sum_k \left(S_j + \frac{\varepsilon_{j,k}}{L(k/n_c)}\right) \left(\frac{-\xi_k}{1+\xi_k}\right). \label{eq:shape_correction}
\end{align}
The dominant term is $-S_j \cdot (1/K) \sum_k \xi_k / (1+\xi_k)$.
Since $\mathbb{E}[\xi_k] = 0$, $\mathrm{Var}(\xi_k) = P\sigma^2 / L(k/n_c)^2 \leq P\sigma^2 / L_{\min}^2$, and $|\xi_k/(1+\xi_k)| \leq 2|\xi_k|$ for $|\xi_k| \leq 1/2$:
\[
    \mathbb{E}\left[\left(\frac{1}{K}\sum_k \frac{\xi_k}{1+\xi_k}\right)^2\right] \leq \frac{4P\sigma^2}{K L_{\min}^2}.
\]
The contribution to $\|\Delta \mathbf{S}\|_2^2$ from this correction is:
\[
    \sum_j S_j^2 \cdot \frac{4P\sigma^2}{K L_{\min}^2} = \frac{4 P \|\mathbf{S}\|_2^2 \sigma^2}{K L_{\min}^2}.
\]
Since $\|\mathbf{S}\|_2^2 \leq 1$ (from $\|\mathbf{S}\|_1 = 1$ and $S_j > 0$), this is bounded by $4P\sigma^2 / (K L_{\min}^2)$, which is the same order as Eq.~\eqref{eq:shape_total_mse}.
The second-order correction terms ($\varepsilon_{j,k} \cdot \xi_k$ products) contribute $O(P \sigma^4 / (K L_{\min}^4))$, which is lower order under the high-SNR condition.

Combining the oracle and correction terms:
\begin{equation}
    \mathbb{E}[\|\hat{\mathbf{S}} - \mathbf{S}\|_2^2] \leq \frac{C_1 P \sigma^2}{K L_{\min}^2},
    \label{eq:shape_mse_final}
\end{equation}
where $C_1$ is a universal constant (we can take $C_1 = 5$ under (A4)).

\paragraph{Step 2: Forecast MSE decomposition.}

The forecast for phase $j$ of the next cycle is $\hat{y}_j = \hat{L}(1) \cdot \hat{S}_j$, where $\hat{L}(1)$ is the Level estimate at the boundary.
The true target is $y_j^* = L(1) \cdot S_j$.
The forecast error decomposes as:
\begin{align}
    \hat{y}_j - y_j^* &= \hat{L}(1) \hat{S}_j - L(1) S_j \nonumber \\
    &= (\hat{L}(1) - L(1)) S_j + L(1)(\hat{S}_j - S_j) + (\hat{L}(1) - L(1))(\hat{S}_j - S_j). \label{eq:error_decomp}
\end{align}
Defining $\Delta L = \hat{L}(1) - L(1)$ and $\Delta S_j = \hat{S}_j - S_j$:
\begin{equation}
    \hat{y}_j - y_j^* = \Delta L \cdot S_j + L(1) \cdot \Delta S_j + \Delta L \cdot \Delta S_j.
    \label{eq:error_decomp2}
\end{equation}

Squaring and taking expectations:
\begin{align}
    \mathbb{E}[(\hat{y}_j - y_j^*)^2]
    &= S_j^2 \,\mathbb{E}[\Delta L^2] + L(1)^2 \,\mathbb{E}[\Delta S_j^2] + \mathbb{E}[\Delta L^2 \cdot \Delta S_j^2] \nonumber \\
    &\quad + 2 S_j L(1) \,\mathbb{E}[\Delta L \cdot \Delta S_j] + 2 S_j \,\mathbb{E}[\Delta L^2 \cdot \Delta S_j] + 2 L(1) \,\mathbb{E}[\Delta L \cdot \Delta S_j^2].
    \label{eq:mse_expand}
\end{align}
We now handle each term.

\paragraph{Step 3: Bounding the main terms.}

\emph{Level term.}
The Level estimator $\hat{L}(1)$ is obtained from local linear regression on $\tilde{L}_i = \sum_j M_{j,i}$.
From the proof of Theorem~\ref{thm:rank1_sufficiency} (Appendix~\ref{app:proof_rank1_sufficiency}):
\begin{equation}
    \mathbb{E}[\Delta L^2] = O\!\left(\frac{P \sigma^2}{n_c^{4/5}}\right).
    \label{eq:level_mse}
\end{equation}

\emph{Shape term.}
From Eq.~\eqref{eq:shape_mse_final}, summing $L(1)^2 \cdot \mathbb{E}[\Delta S_j^2]$ over $j$:
\begin{equation}
    L(1)^2 \sum_j \mathbb{E}[\Delta S_j^2] = L(1)^2 \cdot O\!\left(\frac{P \sigma^2}{K L_{\min}^2}\right).
    \label{eq:shape_contribution}
\end{equation}

\emph{Summing the Level term over phases:}
\begin{equation}
    \sum_j S_j^2 \,\mathbb{E}[\Delta L^2] = \|\mathbf{S}\|_2^2 \cdot O\!\left(\frac{P \sigma^2}{n_c^{4/5}}\right).
    \label{eq:level_contribution}
\end{equation}

\paragraph{Step 4: Bounding the cross terms.}

The cross terms involve products of $\Delta L$ and $\Delta S_j$.
The Shape is estimated from the $K$ most recent periods $(i = n_c{-}K, \ldots, n_c{-}1)$, while the Level forecast $\hat{L}(1)$ uses data from all periods.
These are not independent, but the correlation is controlled.

\emph{Term $\mathbb{E}[\Delta L \cdot \Delta S_j]$:}
Since $\hat{S}_j^{\mathrm{or}}$ uses $\varepsilon_{j,k}$ for $k \in \{n_c{-}K, \ldots, n_c{-}1\}$ and $\hat{L}(1)$ is a weighted combination of $\tilde{\varepsilon}_i = \sum_{j'} \varepsilon_{j',i}$ for all $i$, the cross-covariance is:
\[
    \mathbb{E}[\Delta L \cdot \Delta S_j^{\mathrm{or}}] = \sum_{i=0}^{n_c-1} w_i \cdot \frac{1}{K} \sum_{k=n_c-K}^{n_c-1} \frac{\mathbb{E}[\tilde{\varepsilon}_i \cdot \varepsilon_{j,k}]}{L(k/n_c)},
\]
where $w_i$ are the local linear regression weights.
Since $\tilde{\varepsilon}_i = \sum_{j'} \varepsilon_{j',i}$ and the $\varepsilon$'s are i.i.d.:
\[
    \mathbb{E}[\tilde{\varepsilon}_i \cdot \varepsilon_{j,k}] = \begin{cases} \sigma^2 & \text{if } i = k, \\ 0 & \text{if } i \neq k. \end{cases}
\]
Therefore:
\[
    \mathbb{E}[\Delta L \cdot \Delta S_j^{\mathrm{or}}] = \frac{\sigma^2}{K} \sum_{k=n_c-K}^{n_c-1} \frac{w_k}{L(k/n_c)}.
\]
The local linear regression weights $w_i$ at the boundary $u = 1$ satisfy $\sum_i w_i = 1$ and $\max_i |w_i| = O(n_c^{-4/5})$ (the effective sample size is $n_c^{4/5}$).
Since $K$ is fixed, there are $K$ nonzero terms:
\[
    \left|\mathbb{E}[\Delta L \cdot \Delta S_j^{\mathrm{or}}]\right| \leq \frac{\sigma^2}{K} \cdot K \cdot O(n_c^{-4/5}) \cdot L_{\min}^{-1} = O\!\left(\frac{\sigma^2}{n_c^{4/5} L_{\min}}\right).
\]
The contribution to the total MSE after summing $2 S_j L(1) \cdot \mathbb{E}[\Delta L \cdot \Delta S_j]$ over $j$:
\begin{equation}
    2 L(1) \sum_j |S_j| \cdot \left|\mathbb{E}[\Delta L \cdot \Delta S_j]\right| \leq 2 L(1) \|\mathbf{S}\|_1 \cdot O\!\left(\frac{\sigma^2}{n_c^{4/5} L_{\min}}\right) = O\!\left(\frac{L(1) \sigma^2}{n_c^{4/5} L_{\min}}\right).
    \label{eq:cross1}
\end{equation}
This is $O(L(1) / (L_{\min} P \|\mathbf{S}\|_2^2))$ times the Level term, i.e., lower order for bounded $L(1)/L_{\min}$.

\emph{Term $\mathbb{E}[\Delta L^2 \cdot \Delta S_j^2]$:}
Decompose $\Delta L = \Delta L^{\perp} + \Delta L^{\parallel}$, where $\Delta L^{\parallel} = \sum_{k=n_c-K}^{n_c-1} w_k \tilde{\varepsilon}_k$ uses only the $K$ periods that also enter $\Delta S_j$, and $\Delta L^{\perp} = \sum_{i \notin [n_c-K, n_c-1]} w_i \tilde{\varepsilon}_i$ is independent of $\Delta S_j$.
Since $\max_i |w_i| = O(n_c^{-4/5})$ and there are $K$ overlapping indices:
\[
    \mathbb{E}[(\Delta L^{\parallel})^2] = O\!\left(\frac{K P \sigma^2}{n_c^{8/5}}\right) = O\!\left(\frac{P\sigma^2}{n_c^{8/5}}\right) \quad (\text{for fixed } K).
\]
For the dominant part, $\Delta L^{\perp}$ and $\Delta S_j$ are independent, so:
\[
    \mathbb{E}[(\Delta L^{\perp})^2 \cdot \Delta S_j^2] = \mathbb{E}[(\Delta L^{\perp})^2] \cdot \mathbb{E}[\Delta S_j^2] \leq \mathbb{E}[\Delta L^2] \cdot \mathbb{E}[\Delta S_j^2].
\]
The cross terms involving $\Delta L^{\parallel}$ contribute at most $O(\mathbb{E}[(\Delta L^{\parallel})^2] \cdot \mathbb{E}[\Delta S_j^2]) = O(P \sigma^4 / (K L_{\min}^2 n_c^{8/5}))$, which is lower order.
Summing over $j$:
\begin{equation}
    \sum_j \mathbb{E}[\Delta L^2 \cdot \Delta S_j^2] \leq \mathbb{E}[\Delta L^2] \cdot \mathbb{E}[\|\Delta \mathbf{S}\|_2^2] + \text{l.o.t.} = O\!\left(\frac{P\sigma^2}{n_c^{4/5}}\right) \cdot O\!\left(\frac{P\sigma^2}{K L_{\min}^2}\right) = O\!\left(\frac{P^2 \sigma^4}{K L_{\min}^2 \, n_c^{4/5}}\right).
    \label{eq:cross2}
\end{equation}
This is $P \sigma^2 / (K \|\mathbf{S}\|_2^2 L_{\min}^2)$ times the Level term.
Under (A4) and for non-degenerate shapes ($\|\mathbf{S}\|_2^2 \geq 1/P$), the ratio is $O(P^2 \sigma^2 / L_{\min}^2) = O(P \cdot P\sigma^2/L_{\min}^2) \leq P/4$.
For the simplified bound, we absorb this into the Level term by noting $\mathbb{E}[\Delta L^2] \cdot \mathbb{E}[\|\Delta \mathbf{S}\|_2^2] = \|\mathbf{S}\|_2^2 \cdot \mathbb{E}[\Delta L^2] \cdot P \sigma^2 / (K \|\mathbf{S}\|_2^2 L_{\min}^2)$, giving a multiplicative inflation of the Level term by $(1 + O(\sigma^2 / L_{\min}^2))$ after summing $S_j^2 \cdot \mathbb{E}[\Delta S_j^2] / \|\mathbf{S}\|_2^2$ (using the correlation structure).
More precisely, the per-phase cross contribution is:
\[
    \mathbb{E}[\Delta L^2] \cdot \mathbb{E}[\Delta S_j^2] = O\!\left(\frac{P\sigma^2}{n_c^{4/5}}\right) \cdot O\!\left(\frac{\sigma^2}{K L_{\min}^2}\right),
\]
and summing $\sum_j$ gives the factor $P$ from the $P$ phases, yielding $O(P^2 \sigma^4 / (K L_{\min}^2 n_c^{4/5}))$.
In the theorem statement, we write the cross term as $O(P \|\mathbf{S}\|_2^2 \sigma^4 / (L_{\min}^2 n_c^{4/5}))$ (absorbing $K{=}O(1)$ and using $P \leq P/\|\mathbf{S}\|_2^2$ for readability); this equals $(\sigma^2/L_{\min}^2)$ times the Level term.

\emph{Terms $\mathbb{E}[\Delta L^2 \cdot \Delta S_j]$ and $\mathbb{E}[\Delta L \cdot \Delta S_j^2]$:}
By symmetry of $\varepsilon$ (zero mean), $\mathbb{E}[\Delta L^2 \cdot \Delta S_j]$ involves third moments.
For $\Delta L^2 = (\sum_i w_i \tilde{\varepsilon}_i)^2$ and $\Delta S_j^{\mathrm{or}} = (1/K)\sum_k \varepsilon_{j,k}/L(k/n_c)$:
the expectation $\mathbb{E}[\tilde{\varepsilon}_i \tilde{\varepsilon}_{i'} \varepsilon_{j,k}]$ is nonzero only when $i = i' = k$ (the sole overlap), giving a contribution of $O(\sigma^3 / L_{\min})$ weighted by $w_k^2 / K$, which is $O(\sigma^3 n_c^{-8/5} / (K L_{\min}))$.
After summing over $j$ and multiplying by $S_j$, this is $O(\sigma^3 / (K L_{\min} n_c^{8/5}))$, negligible.

Similarly, $\mathbb{E}[\Delta L \cdot \Delta S_j^2]$ involves $\mathbb{E}[\tilde{\varepsilon}_i \varepsilon_{j,k} \varepsilon_{j,k'}]$, nonzero only at $i = k = k'$, giving $O(\sigma^3 w_k / (K^2 L_{\min}^2)) = O(\sigma^3 / (K^2 L_{\min}^2 n_c^{4/5}))$ per $j$.

Both are strictly lower order than Eq.~\eqref{eq:cross2}.

\paragraph{Assembling the bound.}

Collecting Eqs.~\eqref{eq:level_contribution}, \eqref{eq:shape_contribution}, \eqref{eq:cross1}, and~\eqref{eq:cross2}:
\begin{align}
    \mathrm{MSE}^{(B)}_{\mathrm{joint}} &= \sum_j \mathbb{E}[(\hat{y}_j - y_j^*)^2] \nonumber \\
    &= \underbrace{\|\mathbf{S}\|_2^2 \cdot O\!\left(\frac{P\sigma^2}{n_c^{4/5}}\right)}_{\text{Level}} + \underbrace{L(1)^2 \cdot O\!\left(\frac{P\sigma^2}{K L_{\min}^2}\right)}_{\text{Shape}} + \underbrace{O\!\left(\frac{P^2 \sigma^4}{K L_{\min}^2 \, n_c^{4/5}}\right)}_{\text{cross}}.
    \label{eq:joint_final}
\end{align}
The cross term equals $(P \sigma^2)/(K \|\mathbf{S}\|_2^2 L_{\min}^2)$ times the Level term.
Under (A4) and $\|\mathbf{S}\|_2^2 \geq 1/P$, the ratio is bounded by $P^2\sigma^2/(K L_{\min}^2) \leq P/(4K)$, so for $K \geq 2$ the cross term is at most $P/8$ times the Level term.
In the high-SNR limit $\sigma/L_{\min} \to 0$ it vanishes.

For the simplified bound in Eq.~\eqref{eq:joint_mse_simplified}, absorb the cross term into the Level term (at the cost of a constant factor) under the high-SNR condition (A4).
\end{proof}

\subsection{Interpretation}

The joint MSE has two terms: the Level term $\|\mathbf{S}\|_2^2 P \sigma^2/n_c^{4/5}$ decays polynomially in $n_c$; the Shape term $L(1)^2 P \sigma^2/(K L_{\min}^2)$ is a $K$-dependent floor independent of $n_c$. For practical $n_c \in [10, 100]$ in GIFT-Eval the Level term dominates, and the Shape floor is negligible when the SNR condition $L_{\min}^2/\sigma^2 \gg n_c^{4/5}$ holds. Empirically, sweeping $K \in \{1, 2, 4, 5, 10, 50, 100, n_c\}$ on all $97$ configurations places aggregate relMASE in a $1.03\%$ range ($K{=}2$ inside; Appendix~\ref{app:ksweep}), consistent with a small constant Shape floor on this benchmark.

\subsection{Minimax Rate for the LSR1 Forecasting Problem}
\label{app:minimax}

\begin{theorem}[Minimax rate for the LSR1 forecasting problem]
\label{thm:minimax}
Fix positive constants $L_{\min} < L_{\max}$, a H\"{o}lder radius $H$, a shape envelope $c_S \in [1/P, 1]$, a shape lower floor $s_{\min} > 0$ with $s_{\min} \leq 1/P$, and an integer $K \geq 1$.
Assume the high-SNR / small-$P$ regime $K L_{\min}^2 s_{\min}^2 / \sigma^2 \geq 4 P$ (needed for the Assouad construction's simplex feasibility) and that the noise density is absolutely continuous.
Define the parameter class
\begin{align*}
    \Theta \;=\; \big\{ (L, \mathbf{S}) : {} & L \in \mathrm{H\ddot{o}lder}(2, H) \text{ on } [0,1],\; L_{\min} \leq L \leq L_{\max}, \\
      & \mathbf{S} \in \Delta^{P-1},\; \|\mathbf{S}\|_2^2 \leq c_S,\; \min_j S_j \geq s_{\min} \big\}.
\end{align*}
For any forecast $\hat{\mathbf{y}}$ using $n_c P$ observations and $K$ shape samples, the worst-case MSE on $\Theta$ is bracketed as
\begin{equation}
    \tfrac{1}{2}\big(R_L + R_\mathbf{S}\big) \;\leq\; \inf_{\hat{\mathbf{y}}} \sup_{(L, \mathbf{S}) \in \Theta} \mathbb{E}\,\big\|\hat{\mathbf{y}} - L(1)\mathbf{S}\big\|_2^2 \;\leq\; C \cdot \big(R_L + R_\mathbf{S}\big),
    \label{eq:minimax}
\end{equation}
with
\[
    R_L \;=\; c_S \cdot P\sigma^2 n_c^{-4/5}, \qquad
    R_\mathbf{S} \;=\; L_{\max}^2 \cdot P\sigma^2/(K L_{\min}^2),
\]
where the upper bound is achieved by the reference pair (local-linear regression on the aggregated Level $\tilde{L}_i = \sum_j M_{j,i}$, equal-weight sample-proportion Shape of Eq.~\ref{eq:shape}), the lower bound comes from the sub-problem reduction below, and the constants depend on $H$ and the class envelope.
FLAIR substitutes prior-centered Ridge for local-linear on the Level channel; this matches the upper bound exactly under exact-quadratic $L$ (Ridge bias is zero) and pays an additive $O(\|R\|_\infty^2)$ penalty for the H\"{o}lder remainder in general (Appendix~\ref{app:ridge_bound}, Corollary~\ref{cor:competitive}).
Rate-optimality is a property of the reference pair, not of FLAIR's specific Ridge choice.
\end{theorem}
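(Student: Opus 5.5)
The argument has two halves. The upper bound follows from Theorem~\ref{thm:joint_rank1}. The lower bound comes from reducing to two sub-problems, one in which only the Level is unknown and one in which only the Shape is unknown, and then using $\max(a,b)\ge\tfrac12(a+b)$.

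\textbf{Upper bound.} I take the reference pair: local-linear regression on $\tilde L_i=\sum_j M_{j,i}$, together with the $K$-period sample-proportion Shape. By Eq.~\eqref{eq:joint_mse}, its MSE is bounded by the sum of a Level term $\|\mathbf S\|_2^2 P\sigma^2 n_c^{-4/5}\le R_L$, a Shape term $L(1)^2P\sigma^2/(KL_{\min}^2)\le R_{\mathbf S}$ (using $L(1)\le L_{\max}$), and a cross term. The assumed regime $KL_{\min}^2 s_{\min}^2/\sigma^2\ge 4P$ gives $P\sigma^2/(KL_{\min}^2)\le s_{\min}^2/4\le 1/(4P^2)$. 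Hence the cross-to-Level ratio $P\sigma^2/(K\|\mathbf S\|_2^2L_{\min}^2)$ is $O(1)$, and the cross term is absorbed into $C R_L$. The same regime implies (A4), so the high-probability event $\mathcal E$ in Step~1 of that proof holds. The constants in Theorem~\ref{thm:joint_rank1} depend only on $H$ and the envelope, and are therefore uniform over $\Theta$; taking the supremum gives the right-hand side of \eqref{eq:minimax}.

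\textbf{Lower bound, Level channel.} Fix $\mathbf S^\circ\in\Theta$ with $\|\mathbf S^\circ\|_2^2=c_S$; such a vector exists because $c_S\in[1/P,1]$, provided it is compatible with $s_{\min}$. I then run the standard two-point (Le Cam) boundary construction of \citet{tsybakov2009nonparametric}. Take $L_0\equiv\bar L$ in the interior of $[L_{\min},L_{\max}]$, and let $L_1=L_0+h^2\phi((u-1)/h)$ with $\phi$ a smooth bump and $\phi(0)\neq0$. The row-sum vector $\tilde L_i$ is sufficient for $L$ when $\mathbf S$ is known, and this is where noise-density regularity enters: use a Gaussian reduction, or for a general sub-Gaussian law bound the KL divergence via $\chi^2$. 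The effective noise variance is then $P\sigma^2$, and the KL divergence is $\lesssim n_c h\cdot h^4/(P\sigma^2)$. Choosing $h\asymp(P\sigma^2/n_c)^{1/5}$ keeps the divergence $O(1)$ and separates $L(1)$ by $h^2$. Since the forecast loss is $\|\mathbf S^\circ\|_2^2(\hat L(1)-L(1))^2$ once we project $\hat{\mathbf y}$ onto $\mathbf S^\circ$, this yields a lower bound $\gtrsim c_S P\sigma^2 n_c^{-4/5}$. I would have to check that the $P^{1/5}$ factors land exactly on $R_L$. If the exponents do not match, I fall back on a bound stated up to constants in the class envelope.

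\textbf{Lower bound, Shape channel, and the main obstacle.} Fix $L\equiv L_{\max}$, or $L$ equal to $L_{\min}$ on the window the Shape uses and rising to $L_{\max}$ at $u=1$ within the H\"older ball; the second choice is what produces the ratio $L_{\max}^2/L_{\min}^2$. Then apply Assouad's lemma over a hypercube of shapes $\mathbf S_\omega=\mathbf S^\circ+\delta\sum_{m}\omega_m(e_{2m}-e_{2m+1})$ with $\omega\in\{\pm1\}^{\lfloor P/2\rfloor}$. Pairing coordinates keeps $\mathbf S_\omega$ on the simplex. The constraint $\delta\le s_{\min}/2$ preserves positivity, and this is exactly where the regime $KL_{\min}^2s_{\min}^2\ge4P\sigma^2$ is used: it permits $\delta^2\asymp\sigma^2/(KL_{\min}^2)$. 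Each coordinate flip is informed by $K$ observations at scale $L_{\min}\delta$, so the per-edge KL divergence is $O(1)$. Assouad then gives a loss $\gtrsim P\,L_{\max}^2\delta^2\asymp R_{\mathbf S}$.

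I expect the main obstacle to be the modelling restriction "$K$ shape samples". Without it, an estimator could pool all $n_c$ cycles and drive the Shape error to $0$. The lower bound therefore has to be stated for estimators whose Shape information comes only from $K$ cycles, or for a sub-model in which the Shape changes before the last $K$ cycles. I would formalise this with a piecewise-constant Shape over the earlier cycles, making those cycles uninformative about the terminal $\mathbf S$. Finally, taking the larger of the two sub-problem bounds and using $\max\ge\tfrac12$ of the sum gives the left-hand side of \eqref{eq:minimax}.
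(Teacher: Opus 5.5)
Your architecture is the paper's. The upper bound is imported from Theorem~\ref{thm:joint_rank1}. The lower bound freezes one factor at a time: a H\"older-2 boundary two-point bound on the aggregated Level, and an Assouad hypercube on the Shape with $\delta^2\asymp\sigma^2/(KL_{\min}^2)$. The two are combined via $\max\ge\tfrac12(\text{sum})$. Several steps do not go through, and the paper's sketch glosses over the same ones.

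\textbf{Shape hypercube.} This is the decisive gap. The hypercube must lie in $\Theta$, which requires $\min_j(\mathbf{S}_\omega)_j\ge s_{\min}$ and $\|\mathbf{S}_\omega\|_2^2\le c_S$, not mere positivity. The regime $KL_{\min}^2s_{\min}^2\ge 4P\sigma^2$ only gives $\delta\le s_{\min}/(2\sqrt{P})$, so it secures neither (the paper asserts the same feasibility). At the admitted endpoints $c_S=1/P$ or $s_{\min}=1/P$, the class pins $\mathbf{S}=\mathbf{1}/P$ and no hypercube exists. There the left inequality of \eqref{eq:minimax} is in fact false: the minimax risk is $1/P$ times that of estimating $L(1)$, which tends to $0$ as $n_c\to\infty$, while $\tfrac12R_{\mathbf{S}}$ stays fixed. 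You need explicit slack, e.g.\ $1/P-s_{\min}\ge\delta$ and $c_S-1/P\ge P\delta^2$ when centring at the uniform shape.

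\textbf{The $L_{\max}/L_{\min}$ factor.} Your choice ``$L=L_{\min}$ on the Shape window, $L(1)=L_{\max}$'' is not in the H\"older ball when the window is the last $K$ cycles and $n_c\gg K$. A H\"older$(2,H)$ function with values in $[L_{\min},L_{\max}]$ has $|L'|\le 2(L_{\max}-L_{\min})+H$, so it moves only $O(K/n_c)$ between the window and $u=1$. With a feasible (e.g.\ constant) $L$, keeping the per-edge KL at $O(1)$ forces $\delta^2\asymp\sigma^2/(KL^2)$. Assouad then returns only $\asymp P\sigma^2/K=(L_{\min}^2/L_{\max}^2)R_{\mathbf{S}}$. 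The paper likewise calibrates with $L_{\min}$ but scores with $L_{\max}$.

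\textbf{The ``$K$ shape samples'' restriction.} You are right that this is the real modelling issue, and the paper never formalises it. But your piecewise-constant Shape leaves $\Theta$, whose elements carry a single $\mathbf{S}$, so a lower bound there says nothing about $\sup_\Theta$. Restrict the estimator instead to the row sums of all cycles plus the full columns of the last $K$ cycles. This is clean because row sums are ancillary for $\mathbf{S}$ when $\sum_jS_j=1$.

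\textbf{Level channel.} Here $\tilde L_i$ is \emph{not} sufficient for $L$ given a non-uniform $\mathbf{S}^\circ$. Under Gaussian noise the sufficient statistic is $\sum_jS^\circ_jM_{j,i}$. The KL of the full matrix is $\|\mathbf{S}^\circ\|_2^2\sum_i(\Delta L_i)^2/(2\sigma^2)$, which is $Pc_S$ times your row-sum KL. Against unrestricted estimators your two-point bound therefore fails; the correct bound, $\asymp c_S^{1/5}H^{2/5}\sigma^{8/5}n_c^{-4/5}$, is attained by local linear on the weighted aggregate. Your computation is legitimate only under the restriction above, and even then the $K$ full columns add a KL term of order $c_SKH^2h^4/\sigma^2$ that must stay bounded. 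It then yields $c_SH^{2/5}(P\sigma^2/n_c)^{4/5}$. So the mismatch you suspected is real, in $\sigma$ as well as $P$: H\"older-2 boundary rates scale as $H^{2/5}(\text{noise variance}/n_c)^{4/5}$. For the same reason, the upper-bound constant you import from Theorem~\ref{thm:joint_rank1} grows like $(P\sigma^2)^{-1/5}$ rather than depending only on $H$ and the envelope.

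Fortunately the Level sub-problem is dispensable. Since $K\le n_c$, $R_L/R_{\mathbf{S}}=c_SKL_{\min}^2/(L_{\max}^2n_c^{4/5})\le K^{1/5}$, so the Shape channel alone carries the lower bound. Assembled, your route proves the left inequality only up to a constant depending on $K$ and $L_{\max}/L_{\min}$, and only under slack on $c_S$ and $s_{\min}$. The literal $\tfrac12$ is out of reach, and the Le Cam/Assouad constants are below $1$ anyway.

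\textbf{Two smaller repairs.} First, the KL steps need $\mathrm{KL}\lesssim(\text{shift})^2/\sigma^2$, i.e.\ finite Fisher information (Gaussian noise, say). Sub-Gaussianity, an absolutely continuous density and your $\chi^2$ fallback all fail to give this: for any compactly supported noise law, KL and $\chi^2$ between shifted hypotheses are infinite. Second, (A4) follows from your regime only when $Ks_{\min}^2\le 1$ (e.g.\ $K\le P^2$).
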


The Level term is the H\"{o}lder-2 boundary minimax rate~\citep{tsybakov2009nonparametric} applied to the aggregated Level with noise variance $P\sigma^2$; the Shape term is the Cram\'{e}r--Rao floor for estimating $P{-}1$ simplex components from $K$ i.i.d.\ noisy observations~\citep[Ch.~2]{tsybakov2009nonparametric}.
The upper bound is Theorem~\ref{thm:joint_rank1}, which gives exactly the two terms in Eq.~\eqref{eq:minimax} up to constants; we prove the lower bound half below.

\paragraph{Proof sketch.}
The joint minimax decomposes via two sub-problems (fix one factor, vary the other): a Level lower bound from the H\"{o}lder-2 boundary-value minimax rate $P\sigma^2 n_c^{-4/5}$~\citep[Ch.~2]{tsybakov2009nonparametric} on the aggregated Level $\tilde{L}_i = \sum_j M_{j,i}$ (variance $P\sigma^2$), and a Shape lower bound from an Assouad construction over $P-1$ orthogonal simplex-tangent directions, calibrated by $\epsilon^2 = \sigma^2/(K L_{\min}^2)$ to keep neighbouring-KL at $O(1)$. Standard sub-Gaussian KL bounds~\citep[Thm.~4.20]{boucheron2013concentration} and Assouad's lemma~\citep[Ch.~2.7.4]{tsybakov2009nonparametric} then give $\mathcal{R}^*_\mathbf{S} \gtrsim L_{\max}^2 P\sigma^2/(K L_{\min}^2)$. The high-SNR class condition $K L_{\min}^2 s_{\min}^2/\sigma^2 \geq 4P$ ensures simplex feasibility of the Assouad perturbations. The matching upper bound is Theorem~\ref{thm:joint_rank1}; FLAIR substitutes Ridge for local-linear and is rate-competitive on the Level channel up to an $O(\|R\|_\infty^2)$ H\"{o}lder-remainder bias (Appendix~\ref{app:ridge_bound}, Corollary~\ref{cor:competitive}).
Rate-optimality therefore applies to the \emph{problem class} (the pair local-linear-plus-sample-proportion realizes the minimax rate), not to FLAIR's specific Ridge choice; the $97$-config sweep (Appendix~\ref{app:frozen_table}) is what certifies FLAIR's competitiveness on real data.

\paragraph{What is \emph{not} claimed.}
The minimax statement is for the LSR1 model with sub-Gaussian noise.
It does not claim rate-optimality under (i) mis-specified period (a separate problem handled by the period selector), (ii) cross-series information sharing (Theorem~\ref{thm:rank1_sufficiency}'s Strategy~A covers only phase-independent estimators), (iii) heavy-tailed residuals beyond sub-Gaussian.
The empirical $97$-config sweep (Appendix~\ref{app:frozen_table}) complements the minimax claim: across $8$ Shape-learning variants tested, none significantly improved on the frozen $K{=}2$ baseline at $95\%$ confidence under Holm correction.

\section{Why Shape Learning Is Hard to Beat}
\label{app:proof_bbp_shape}
\label{app:bbp}
\label{app:frozen_table}
\label{app:admissibility}

The paper's central empirical claim (that the frozen $K{=}2$ sample proportion is hard to beat as a Shape estimator) rests on three anchors: an empirical $97$-configuration sweep of $8$ representative alternatives (\ref{app:bbp:empirical}), a surrogate-model argument from BBP phase transitions that places the Shape-learning signal below the detection threshold for $58\%$ of GIFT-Eval series ($78\%$ of configs; \ref{app:bbp:surrogate}), and a finite-sample admissibility result on the simplex that rules out uniform dominance (\ref{app:bbp:admissibility}).

\subsection{8-variant empirical sweep}
\label{app:bbp:empirical}

The 8 Shape-learning variants of Figure~\ref{fig:frozen} are pre-registered representatives of four literature families: smoothing (\texttt{ewma7}: EWMA $\rho{=}0.7$), spectral (\texttt{fourier1}: Fourier $J{=}1$; \texttt{savgol}: Savitzky-Golay), shrinkage (\texttt{js\_uniform}: James-Stein toward uniform; \texttt{js\_harmonic}: James-Stein toward first harmonic; \texttt{pooled\_mle}: UMVUE under multinomial idealization), and low-rank SVD (\texttt{rank2}, \texttt{rank3}: hard truncation). All 8 were tested on the $97$ GIFT-Eval configurations with $N_\mathrm{samples}{=}200$; aggregate $\Delta$ relMASE appears as box plots of per-config log-ratios in Figure~\ref{fig:frozen}, with grouped paired-bootstrap CIs at the $28$-dataset-family level ($B{=}10{,}000$; see Appendix~\ref{app:paired_bootstrap}). Two variants are significantly worse than the frozen $K{=}2$ baseline after Holm correction across the $8$-variant family: Fourier $J{=}1$ $+0.61\%$ (raw $p<10^{-3}$, Holm-adjusted $p<10^{-3}$) and JS toward uniform $+0.51\%$ (raw $p<10^{-3}$, Holm-adjusted $p<10^{-3}$). Benjamini-Hochberg correction at $q{=}0.05$ flags the same two variants. The largest improved-direction effect is pooled MLE at $-0.69\%$ (raw $p{=}0.013$, Holm-adjusted $p{=}0.078$); JS toward first harmonic and EWMA both come in at $-0.43\%$ and $-0.04\%$ respectively, neither close to significance. None is significantly better.

\subsection{Surrogate argument: BBP phase transition}
\label{app:bbp:surrogate}

\begin{remark}[BBP detection threshold for the rank-2 spike, on a spiked-rectangular surrogate]
\label{thm:bbp_shape}
Let $\mathbf{M} = \sum_{k=1}^r \beta_k \mathbf{u}_k \mathbf{v}_k^\top + \sigma\,\mathbf{W} \in \mathbb{R}^{P \times n_c}$ with independent noise (zero mean, variance $1/P$, bounded fourth moment), $\gamma = P/n_c \to \gamma_0$, $\ell_k = \beta_k^2/\sigma^2$.
If $\ell_k \leq \sqrt{\gamma_0}$ for all $k \geq 2$: (i) under bounded fourth moment, $|\langle \hat{\mathbf{u}}_k, \mathbf{u}_k \rangle|^2 \to 0$ in probability~\citep{benaychgeorges2012singular} and optimal singular-value thresholding discards the subcritical components~\citep{gavish2014optimal}; (ii) under Gaussian noise, the symmetric mutual-information lower bound~\citep{lelarge2019fundamental} (extended to the rectangular case by Wishart symmetrization) gives $I(\mathbf{u}_k; \mathbf{M}) \to 0$, so no estimator of any kind recovers $\mathbf{u}_k$ on the surrogate.
The Gaussianity assumption in (ii) is load-bearing: the information-theoretic lower bound is known only in the Gaussian case, and universality to heavy-tailed noise is open. Within-period autocorrelation inflates the noise level seen by subdominant singular vectors, so it only pushes the threshold up. This restates known results from the random-matrix / phase-transition literature~\citep{baik2005phase,benaychgeorges2012singular,lelarge2019fundamental,bao2021singular} as a self-contained statement; it is a model-level property of the spiked-rectangular surrogate, not a claim about GIFT-Eval data directly (see the ``Empirical bridge'' paragraph below).
\end{remark}

\paragraph{Proof sketch.}
Under the spiked rectangular model, the $k$-th sample singular vector separates from the noise bulk iff $\ell_k = \beta_k^2/\sigma^2 > \sqrt{\gamma_0}$~\citep{baik2005phase,benaychgeorges2012singular}. In the subcritical regime ($\ell_k \leq \sqrt{\gamma_0}$), the overlap formula gives $|\langle \hat{\mathbf{u}}_k, \mathbf{u}_k \rangle|^2 \to 0$. Under Gaussian noise, \citet{lelarge2019fundamental} prove that the mutual information $I(\mathbf{u}_k; \mathbf{M}) \to 0$ \emph{under the spiked-rectangular model}. Universality under bounded fourth moments is established by \citet{bao2021singular} for the rectangular case. The translation to the observed $r_1$: the residual variance $(1-r_1)\sum_i \sigma_i^2$ upper-bounds $\beta_2^2 + \sigma^2 P$, giving $\beta_2^2/\sigma^2 \leq (1-r_1)\min(P, n_c)$ when the noise dominates the residual ($\sigma^2 P \geq \sum_{k \geq 2} \beta_k^2$, which holds for $r_1 > 0.9$). Under the heuristic $r_1$-to-spike-strength bridge, the BBP condition would require $(1-r_1)\min(P,n_c) > \sqrt{P/n_c}$; for the median GIFT-Eval series, LHS $= 0.24$ vs RHS $= 0.49$ on the mapped surrogate.

\paragraph{Empirical bridge: per-dataset BBP margins.}
The BBP threshold applies per-spike. In FLAIR the \emph{first} spike carries the Level--Shape signal we use: we need it supercritical so that $\hat{\mathbf{u}}_1$ aligns with $\mathbf{u}_1$ and the rank-1 estimator works. A supercritical second spike would be needed for shape refinement to recover signal; when the mapped strength is below threshold, the surrogate says it cannot. Table~\ref{tab:bbp} is consistent with the first condition (all datasets map supercritical on the first spike under the $r_1$-to-spike-strength bridge); the $58\%$ figure in Section~\ref{sec:frozen_shape} is consistent with the second (computed by mapping each of the $289{,}433$ GIFT-Eval series to its second-spike upper bound $(1-r_1)\min(P, n_c)$ and counting series whose bound falls below $\sqrt{P/n_c}$). Both readings rest on a heuristic bridge from observed $r_1$ to the surrogate's $\beta_k/\sigma$, not on a proof that the spiked model is the true generative process; they are suggestive analogies, not verifications, and the 8-variant empirical negative result above is what carries the claim on real data.

\begin{table}[h]
\centering
\caption{BBP phase-transition analysis for the \emph{first} spike under the $r_1$-to-spike-strength bridge. All datasets map well above the surrogate's detection threshold, which is consistent with (but does not prove) $\hat{\mathbf{u}}_1$ being recoverable on the raw data. The ``weak'' column counts series with $r_1 < 0.9$. The second spike, which Shape adaptation would exploit, maps to a subcritical regime on $58\%$ of series under the surrogate (Section~\ref{sec:frozen_shape}).}
\label{tab:bbp}
\small
\begin{tabular}{lrcccc}
\toprule
Dataset & $n$ & $r_1$ mean & $r_1$ min & weak / $n$ & BBP margin (min) \\
\midrule
electricity/H & 50 & 0.972 & 0.716 & 2/50 & 21.7 \\
solar/H & 50 & 0.965 & 0.952 & 0/50 & 43.2 \\
LOOP\_SEATTLE/H & 50 & 0.981 & 0.945 & 0/50 & 40.0 \\
kdd\_cup/H & 50 & 0.835 & 0.691 & 42/50 & 15.3 \\
hierarchical\_sales/D & 50 & 0.768 & 0.627 & 50/50 & 8.5 \\
SZ\_TAXI/H & 50 & 0.957 & 0.790 & 5/50 & 10.1 \\
us\_births/D & 1 & 0.999 & 0.999 & 0/1 & 276.9 \\
\bottomrule
\end{tabular}
\end{table}

Under the bridge, the minimum first-spike margin is $8.5$ (hierarchical\_sales/D); all $97$ configs map to the supercritical regime of the surrogate. The Gavish-Donoho optimal hard threshold $4/\sqrt{3} \approx 2.31$~\citep{gavish2014optimal} is exceeded by a wide margin on all first spikes, and BIC period selection keeps the mapped first spike well above the surrogate's noise floor.

\subsection{Admissibility note}
\label{app:bbp:admissibility}

\begin{remark}[Admissibility on the simplex, motivation]
\label{thm:shape_optimal}
Under a multinomial idealization, the pooled sample proportion is the UMVUE of $\mathbf{S}$~\citep{lehmann1950completeness} and admissible under normalized squared-error on $\Delta^{P-1}$~\citep{olkin1979admissible}; James-Stein shrinkage does not uniformly dominate on the simplex. FLAIR's equal-weight $K{=}2$ average matches the pooled MLE to first order under LSR1 slow-amplitude, so any smoothing/shrinkage variant would have to improve on a constant-factor approximation of an admissible estimator (the empirical 8-variant sweep above shows none does at $95\%$ confidence under Holm-$8$ correction).
\end{remark}

\section{Ablations and Robustness}
\label{app:paired_bootstrap}
\label{app:ablation_table}

\subsection{Full 97-configuration ablation}

\begin{table}[h]
\centering
\caption{Ablation on all $97$ GIFT-Eval configurations (geometric mean; lower is better). Three load-bearing components are ablated here; the Gavish-Donoho Frobenius shrinkage step~\citep{gavish2014optimal} is omitted because the BIC step keeps FLAIR in the high-SNR regime where the shrinkage factor is $\approx 1$ (aggregate $\Delta < 0.03\%$, per-config $|\Delta\,\mathrm{MASE}| \leq 0.003$).}
\label{tab:ablation}
\small
\begin{tabular}{lccc}
\toprule
Variant & relMASE & relCRPS & $\Delta$ relMASE \\
\midrule
FLAIR (full) & \textbf{0.838} & \textbf{0.587} & --- \\
w/o $P{=}1$ null model & 0.837 & 0.588 & $-0.2\%$ \\
w/o location shift & 0.847 & 0.591 & $+1.0\%$ \\
w/o prior-centered Ridge & 0.855 & 0.600 & $+2.0\%$ \\
\bottomrule
\end{tabular}
\end{table}

A separate exploration of context conditioning (day-of-week Shape for hourly data via Dirichlet-Multinomial smoothing) on the $97$ configurations was marginally harmful at aggregate; we removed it from the shipped code, so it does not appear in Table~\ref{tab:ablation}.

\subsection{Robustness Analyses}

Below: a paired bootstrap on FLAIR versus PatchTST aggregate metrics and one internal-constant ablation (Ridge GCV softmax temperature).
The internal-constant ablation uses a 16-configuration subset of GIFT-Eval spanning six frequency bands (5-minute to monthly), $n_{\text{samples}}{=}200$.

\paragraph{FLAIR vs.\ PatchTST paired bootstrap.}
The aggregate gap on GIFT-Eval (FLAIR $0.838$ vs PatchTST $0.849$ relMASE; $0.587$ vs $0.587$ relCRPS) is small.
Both methods are deterministic per series, so the only source of aggregate variability is which configurations the geometric mean is computed over.
To match the grouped bootstrap we use for the $8$-variant shape sweep (Figure~\ref{fig:frozen}), we resample at the $28$-dataset-family level with $B = 10{,}000$ replications, recompute the $\log$-ratio geometric means, and report the ratio $\mathrm{GM}(\text{FLAIR})/\mathrm{GM}(\text{PatchTST})$ with its $95\%$ percentile interval: relMASE ratio $0.988$, CI $[0.948,\ 1.037]$, $\Pr[\text{FLAIR better}] = 71.3\%$; relCRPS ratio $1.000$, CI $[0.951,\ 1.049]$, $\Pr[\text{FLAIR better}] = 49.6\%$.
The config-level (unclustered) bootstrap gives essentially the same intervals ($[0.949,\ 1.030]$ for relMASE, $[0.957,\ 1.045]$ for relCRPS); cluster correlation within dataset families is small in this sample.
Both CIs include $1.0$, so the aggregate difference is within sampling noise; we claim a tie rather than equivalence (no pre-specified margin was registered).
FLAIR reaches this band with no user-facing hyperparameters, no per-dataset tuning, and no GPU.

\paragraph{Ridge GCV softmax temperature.}
FLAIR averages $25$ log-spaced $\alpha$ solutions with softmax weights $w_k \propto \exp\!\big({-}(\text{GCV}_k - \text{GCV}_{\min}) / \text{GCV}_{\min}\big)$ (Eq.~\ref{eq:ridge_sa}).
The temperature $\text{GCV}_{\min}$ is scale-dependent on innovation variance.
We compare three alternatives: \emph{argmin} (hard selection, temperature $\to 0$), the current \emph{gcv\_min} softmax, and \emph{median\_gcv} (temperature $=\mathrm{median}(\text{GCV}_k)$, scale-invariant).
Soft-averaging beats hard selection by 1.1\% relMASE / 2.0\% relCRPS; the 25-alpha averaging earns its keep.
Swapping the scale-dependent \textsc{gcv\_min} for the scale-invariant \textsc{median\_gcv} moves the aggregate by $< 0.2\%$: the scale dependence the softmax inherits from innovation variance does not translate into a meaningful sensitivity.

\begin{table}[h]
\centering
\caption{Ridge GCV softmax temperature ablation on 16 GIFT-Eval configurations. Geometric-mean relMASE / relCRPS against Seasonal Naive.}
\label{tab:internal_ablations}
\small
\begin{tabular}{lcc}
\toprule
Temperature & relMASE & relCRPS \\
\midrule
\textsc{gcv\_min} (current)       & \textbf{0.861} & \textbf{0.599} \\
\textsc{median\_gcv} (scale-free) & 0.862          & 0.600          \\
\textsc{argmin} (hard)            & 0.871          & 0.611          \\
\bottomrule
\end{tabular}
\end{table}

\section{Shape Window $K$ Sweep}
\label{app:ksweep}

The internal constant $\texttt{\_SHAPE\_K}{=}2$ was set from a 5-probe-dataset sensitivity check with $K \in [2,50]$. To verify this extends to the full benchmark, Table~\ref{tab:ksweep} sweeps $K \in \{1, 2, 4, 5, 10, 50, 100, n_c\}$ on all $97$ GIFT-Eval configurations under the same evaluation protocol as Table~\ref{tab:gift_eval} (same $N_\mathrm{samples}{=}200$, same metric definitions, same per-config aggregation). $K{=}2$ ties $K{=}5,\, K{=}50,\, K{=}100$ inside the admissible band (CIs cross zero) and is significantly better than $K{=}1,\, K{=}4,\, K{=}10,\, K{=}n_c$ (CIs above zero). The paragraph thus behaves like a robustness check: across $K$ values spanning two orders of magnitude, FLAIR's aggregate is stable within $1\%$ and $K{=}2$ sits on the plateau.

\begin{table}[h]
\centering
\caption{K-sweep on all $97$ GIFT-Eval configurations (geometric mean; lower is better). $\Delta$ is log-ratio delta vs. the $K{=}2$ baseline in percent, with grouped paired-bootstrap ($B{=}10{,}000$, 28 dataset families) $95\%$ CI.}
\label{tab:ksweep}
\small
\begin{tabular}{rccc}
\toprule
$K$ & relMASE & $\Delta$ vs $K{=}2$ (\%) & 95\% CI (\%) \\
\midrule
$1$     & 0.846 & $+0.99$ & $[+0.90,\,+1.10]$ \\
\textbf{$2$ (FLAIR)} & \textbf{0.838} & $0.00$ & --- \\
$4$     & 0.845 & $+0.86$ & $[+0.77,\,+0.91]$ \\
$5$     & 0.838 & $+0.02$ & $[-0.82,\,+0.72]$ \\
$10$    & 0.846 & $+0.92$ & $[+0.79,\,+1.02]$ \\
$50$    & 0.840 & $+0.27$ & $[-0.59,\,+1.02]$ \\
$100$   & 0.840 & $+0.23$ & $[-0.59,\,+0.96]$ \\
$n_c$   & 0.847 & $+1.05$ & $[+0.77,\,+1.43]$ \\
\bottomrule
\end{tabular}
\end{table}

\section{Graceful-Degeneration Fallback and Probabilistic Calibration}
\label{app:cascade_proof}

\subsection{Empirical coverage of FLAIR's prediction intervals}
\label{app:pit}

\begin{table}[h]
\centering
\caption{Empirical coverage rates of FLAIR's prediction intervals on $7$ Chronos benchmark datasets. Nominal levels in parentheses. Hospital, tourism, and nn5 are well-calibrated at the $80\%$ and $90\%$ levels. Traffic is systematically underdispersed (the multiplicative model underestimates within-day heterogeneity). Exchange rate and fred\_md are overdispersed. The miscalibration direction varies by dataset, consistent with the structural ceiling of the unimodal $y = L \times S \times (1+R)$ predictive distribution; no single post-hoc correction helps both directions.}
\label{tab:coverage}
\small
\begin{tabular}{lcccc}
\toprule
Dataset & Cov(50\%) & Cov(80\%) & Cov(90\%) & Cov(95\%) \\
\midrule
hospital & 55.8\% & 82.2\% & 89.1\% & 92.8\% \\
tourism\_monthly & 55.0\% & 82.9\% & 91.1\% & 95.0\% \\
m4\_quarterly & 48.5\% & 75.4\% & 84.8\% & 90.0\% \\
nn5 & 54.4\% & 82.9\% & 91.3\% & 94.9\% \\
exchange\_rate & 52.5\% & 96.2\% & 99.6\% & 99.6\% \\
traffic & 35.4\% & 64.7\% & 78.2\% & 86.1\% \\
fred\_md & 69.8\% & 97.0\% & 98.6\% & 99.1\% \\
\bottomrule
\end{tabular}
\end{table}

\subsection{Rate derivations for the fallback cascade (Section~\ref{sec:cascade})}

The three-branch cascade is a direct consequence of four design choices in the pipeline (Section~\ref{sec:method}), each of which is rate-controlled.

\paragraph{Branch 1 (rank-1 closed form).}
The upper bound is Theorem~\ref{thm:joint_rank1} (joint upper bound for LSR1 with $K$-sample proportion Shape), with the FLAIR-specific Ridge-bias penalty $O(\|R\|_\infty^2)$ from Appendix~\ref{app:ridge_bound}, Corollary~\ref{cor:competitive}.
The branch triggers when $\hat{P} \geq 2$, $n_c \geq 3$, and the Ridge stability guard $n_{\text{train}} \geq 2p$ holds (the DoF check inside Eq.~\ref{eq:bic}).
Within this branch, the Ridge objective in Eq.~\ref{eq:prior_centered} with prior centre $\boldsymbol{\beta}^* = (0, 0, 1, 0)^\top$ converges to $\boldsymbol{\beta} = \boldsymbol{\beta}^*$ as $\alpha \to \infty$: the deviation $\boldsymbol{\delta}$ is shrunk to zero, so $\hat{L}(h) = L_{n_c}$ and reconstruction via Eq.~\ref{eq:core} yields $\hat{y}_h = L_{n_c} \cdot \hat{S}_{h \bmod P}$, which is Seasonal Naive at the $K$-period Shape level.
GCV soft-averaging (Eq.~\ref{eq:ridge_sa}) keeps the path inside the $[10^{-4}, 10^{4}]$ $\alpha$-grid, so the Seasonal Naive prediction sits at the right endpoint of the regularization path; the Level Ridge can only relax the forecast towards it, never past it. This is a structural property of the path, not a loss guarantee: failures in Table~\ref{tab:failure_top} (e.g., $n_c{<}P$ short-cycle configs) still produce relMASE${>}1$.

\paragraph{Branch 2 (plain Ridge on raw series).}
Whenever Branch 1 fails (BIC picks $\hat P = 1$, $\hat P \geq 2$ but $n_c < 3$, or the DoF guard fires), the pipeline sets $P = 1$, which turns the reshape into a row vector: Level becomes $y$ itself and Shape collapses to the scalar $1$.
The Level Ridge then runs on $y$ with the same features (intercept, linear drift, AR(1), and secondary-period lag when applicable); under exact linearity the bias is zero and the variance rate is $O(\sigma^2/n)$.
All three triggers collapse to the same forecast behaviour: ``no periodicity detected'' is an empty Shape multiplying a Ridge forecast on the raw series.

\paragraph{Branch 3 (last-value Gaussian fallback).}
When $n < 3$ after setting $P{=}1$ (\texttt{\_MIN\_COMPLETE} in the implementation), Ridge itself is ill-posed: the pipeline returns $\hat{y}_h = y_n$ with Gaussian noise scaled to the last $K$ lagged differences.
This is the only branch that predicts a constant last observation; it fires only on series too short for any ridge fit and does not appear on the GIFT-Eval benchmark.

\paragraph{Integer-snap consistency.}
Integer snapping ($\tilde Y \leftarrow \mathrm{round}(\tilde Y)$ on integer-valued training data) is applied identically across all three branches, so integer-consistency is preserved at the cascade boundary regardless of which branch fires.

\paragraph{Deterministic transitions.}
Every trigger ($\hat{P}$, $n_c$, $n_{\text{train}} \geq 2p$, integer-valued $y$) is a function of the training window alone.
No trigger uses test-horizon or test-data information, so the cascade has no test-time data leakage risk.

\section{Classical Seasonal Baselines: STL and TBATS}
\label{app:stl}

\subsection{STL with Ridge backend}

A natural objection: STL~\citep{cleveland1990stl} also separates trend from seasonality, so is FLAIR merely STL with a Ridge backend?
STL decomposes additively ($y = \text{trend} + \text{seasonal} + \text{residual}$); FLAIR decomposes multiplicatively ($y = \text{Level} \times \text{Shape}$).
Give STL the same Ridge regression (the same Eq.~\ref{eq:ridge_level} features and the same GCV softmax averaging), the same Box-Cox transform, and a \texttt{statsmodels} STL call with seasonal window $=P$, robust weights on, one outer-loop iteration (the \texttt{statsmodels.tsa.seasonal.STL} defaults except \texttt{seasonal\_deg=1}).
The result on GIFT-Eval (94 of 97 configurations; 3 configs fail the \texttt{statsmodels} STL length requirement): relMASE\,=\,1.464, worse than Seasonal Naive.
FLAIR achieves 0.838.
The multiplication matters: when you subtract the seasonal component instead of dividing by it, the deseasonalized residual inherits the amplitude variation that FLAIR's Level naturally absorbs.
We do not rule out that a tuned STL (longer seasonal window, different robust weighting) narrows this gap, but a direct-swap comparison with FLAIR's own Ridge hyperparameters is the relevant one.

\subsection{AutoTBATS}
\label{app:mstl_tbats}

AutoTBATS~\citep{delivera2011tbats} is not in the GIFT-Eval paper's baseline set~\citep{gift_eval_2024}; we run it via \texttt{statsforecast}~\citep{garza2022statsforecast} to close the gap with the closest structural state-space competitor.

\paragraph{AutoTBATS (implementation note).}
\texttt{statsforecast.models.AutoTBATS} auto-disables the Box-Cox transform when any $y \leq 0$ (GIFT-Eval has several zero-heavy configurations, e.g., \texttt{car\_parts}, \texttt{hospital}).
With Box-Cox disabled, the internal log-likelihood computation $\mathrm{ll} = n \log \sum_t e_t^2$ overflows \texttt{float64} for series where the residual sum of squares grows large; the optimizer restarts on NaN and loops.
This is a \emph{known numerical-stability gap in the \texttt{statsforecast} port} relative to the original R \texttt{forecast::tbats} implementation~\citep{delivera2011tbats}, which handles non-positive series by auto-shifting before the Box-Cox transform and evaluates the likelihood on the transformed scale (avoiding the $e_t^2$ accumulation).
We are filing a minimal repro as a GitHub issue against \texttt{statsforecast} in parallel with this paper.
On GIFT-Eval we therefore report AutoTBATS only where the Box-Cox-disabled path completes (e.g., strictly-positive electricity, weather, solar) and exclude the zero-heavy configurations where it overflows.
The method itself is not faulted: TBATS with Box-Cox and a log1p shift (or with $y' = y + \epsilon$ preprocessing) has been used in production on intermittent data for over a decade; the issue is that the \texttt{statsforecast} default path bypasses this shift on our benchmark's data distribution.
On the Chronos benchmark (Table~\ref{tab:chronos}), which is strictly-positive, AutoTBATS converges and reaches $0.744$ MASE.


\section{Finite-Sample MSE of Global Prior-Centered Ridge}
\label{app:ridge_bound}

Theorem~\ref{thm:rank1_sufficiency} and the LSR1 framework (Section~\ref{sec:theory}) establish the minimax rate $O(n_c^{-4/5})$ for local linear estimators of $L(1)$.
FLAIR uses a different estimator: global prior-centered Ridge with features $[1,\; t/n_c,\; -L^{\mathrm{innov}}_{i-1}]$ and uniform weight on all $n_c$ training points.
A global linear fit has approximation bias that does not vanish with $n_c$, so FLAIR cannot achieve the nonparametric minimax rate in general.
This appendix derives the exact finite-sample MSE, identifies when FLAIR's global Ridge is competitive with the minimax-optimal local linear estimator, and characterizes the regime in terms of the curvature $|L''|$, noise level $\sigma$, and sample size $n_c$.

\subsection{Setup}

Adopt the LSR1 model with Level $L \in \mathrm{H\ddot{o}lder}(2)$ on $[0,1]$.
FLAIR's Level aggregation is $\tilde{L}_i = \sum_j M_{j,i}$, so with $\|S\|_1 = 1$ (simplex constraint) we have
\[
    \tilde{L}_i = L(i/n_c) + \eta_i, \quad i = 1, \ldots, n_c, \qquad \eta_i = \sum_{j=1}^{P} \varepsilon_{j,i},
\]
i.i.d.\ with variance $\sigma_e^2 := P\sigma^2$ (Appendix~\ref{app:proof_rank1_sufficiency}).
The forecast at phase $j$ is $\hat{y}_j = \hat{L}(1) S_j$, which scales the Level variance by $S_j^2$; summed over phases the forecast MSE carries a factor $\|S\|_2^2 \leq 1$, so all bounds in this section on $\mathrm{MSE}[\hat{L}(1)]$ translate to forecast bounds via this factor.

After the NLinear centering $L^{\mathrm{innov}}_i = L^{(\lambda)}_i - L^{(\lambda)}_{n_c}$ and the prior-centered reparametrization (Eq.~\ref{eq:diff_target}), the effective regression is
\[
    \Delta L_i = L(i/n_c) - L((i{-}1)/n_c) = \delta_0 + \delta_1(i/n_c) + \delta_2[L((i{-}1)/n_c) - L(1)] + \epsilon_i,
\]
where $\delta_0, \delta_1, \delta_2$ are all small under the H\"{o}lder(2) assumption, and $\epsilon_i$ captures both noise and model misspecification.

To isolate the approximation bias, expand $L$ around $u = 1$.
Let $L(u) = a + b(u-1) + \frac{c}{2}(u-1)^2 + R(u)$ where $a = L(1)$, $b = L'(1)$, $c = L''(1)$, and $|R(u)| \leq C_\beta |u-1|^2 \omega(|u-1|)$ for a modulus of continuity $\omega$ arising from the H\"{o}lder condition on $L''$.

\subsection{Global Ridge: Boundary Bias}

\begin{proposition}[Boundary bias of global Ridge]
\label{prop:ridge_bias}
Let $L(u) = a + b(u-1) + \frac{c}{2}(u-1)^2 + R(u)$ with $|R(u)| = O(|u-1|^{2+\epsilon})$ for some $\epsilon > 0$.
The prior-centered Ridge estimator with features $[1,\; t/n_c,\; -L^{\mathrm{innov}}_{i-1}]$ and regularization $\alpha$ satisfies, at the boundary $u = 1$:
\begin{equation}
    \mathrm{Bias}[\hat{L}(1)] = \mathrm{Bias}_R \cdot B(\alpha, n_c),
    \label{eq:ridge_bias}
\end{equation}
where $\mathrm{Bias}_R = O(\|R\|_\infty)$ is the component of the H\"{o}lder-remainder $R$ not captured by the affine + autoregressive feature span (vanishes for an exact quadratic $L$), and $B(\alpha, n_c) \in [0, 1]$ is the shrinkage-path factor with $B(\infty, n_c) = 0$ and $B(0, n_c) \leq 1$, so $\mathrm{Bias}[\hat{L}(1)] = 0$ for exact quadratic $L$ at every $\alpha$, and $|\mathrm{Bias}[\hat{L}(1)]| \leq \mathrm{Bias}_R$ in general.
\end{proposition}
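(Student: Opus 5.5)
Write the Ridge estimator in the prior-centered form of Eq.~\ref{eq:diff_target}: with design $\mathbf{X}$ whose rows are $\mathbf{x}_i=[1,\,i/n_c,\,-L^{\mathrm{innov}}_{i-1}]$ and differenced target $\Delta L_i$, the estimate is $\hat{\boldsymbol\delta}_\alpha=(\mathbf{X}^\top\mathbf{X}+\alpha I)^{-1}\mathbf{X}^\top\Delta\mathbf{L}$. The one-step boundary forecast is $\hat L(1)=L_{n_c}+\mathbf{x}_{n_c+1}^\top\hat{\boldsymbol\delta}_\alpha$. Taking expectations gives $\mathrm{Bias}[\hat L(1)]=\mathbf{x}_{n_c+1}^\top\bigl(\mathbb{E}\hat{\boldsymbol\delta}_\alpha-\boldsymbol\delta^{\mathrm{true}}\bigr)$ plus whatever part of the true increment lies outside the feature span. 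The key algebraic fact I will use is that the Ridge map is linear in the target. So I decompose the noiseless target $\Delta L_i$ into two pieces: the increments of the quadratic part $a+b(u-1)+\tfrac{c}{2}(u-1)^2$, and the increments of $R$.

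\textbf{Step 1 (quadratic part is exactly representable).} For the quadratic part, $\Delta L_i=b/n_c+\tfrac{c}{n_c}\bigl((i-1)/n_c-1\bigr)+\tfrac{c}{2n_c^2}$. This is affine in $i/n_c$, so it lies in the span of the intercept and drift features. Since $\Delta L$ is affine, the centered lag $L^{\mathrm{innov}}_{i-1}$ also combines with those features to reproduce the extrapolation to $u=1$ exactly. The plan is to argue that the fitted residual for this component is zero and the extrapolated increment is exact at every $\alpha$. Shrinkage moves $\boldsymbol\delta$ along a path toward $0$, and under the prior-centered parametrization $\delta=0$ corresponds to the random-walk forecast. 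I will have to check that this path does not bias the boundary forecast for an exact quadratic. I would handle this by identifying $\mathrm{Bias}_R$ as the projection residual, so that quadratic inputs contribute nothing by definition. This is the step I expect to be the main obstacle: strictly, the $\alpha$-shrinkage of a nonzero $\delta$ induced by $b$ and $c$ does produce bias. The statement is only defensible if those exactly-spanned components are absorbed into the definition of the estimand or of $\mathrm{Bias}_R$. I would make that convention explicit, defining $\mathrm{Bias}_R$ as the component of the prediction error orthogonal to the feature span evaluated at $\alpha=0$.

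\textbf{Step 2 (remainder).} Apply the same linear map to $\Delta R_i=R(i/n_c)-R((i-1)/n_c)$. Write $\Delta\mathbf{R}=\Pi\Delta\mathbf{R}+(I-\Pi)\Delta\mathbf{R}$, where $\Pi$ is the projection onto $\mathrm{col}(\mathbf{X})$. Only the orthogonal part, propagated to the boundary through $\mathbf{x}_{n_c+1}$, survives as $\mathrm{Bias}_R$. It is $O(\|R\|_\infty)$ because the boundary leverage of a fixed-dimension global design is bounded and the increments telescope to $R(1)-R(1-1/n_c)$-type sums bounded by $2\|R\|_\infty$.

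\textbf{Step 3 (shrinkage factor).} Diagonalize $\mathbf{X}^\top\mathbf{X}=V\mathrm{diag}(d_k^2)V^\top$. The Ridge-filtered contribution then carries factors $d_k^2/(d_k^2+\alpha)\in[0,1]$. Define $B(\alpha,n_c)$ as the resulting effective factor, namely a convex-type combination of these filter factors weighted by the projections of the remainder and of $\mathbf{x}_{n_c+1}$ onto each singular direction. This gives $B\in[0,1]$ and $B\to0$ as $\alpha\to\infty$, at which point the forecast collapses to $L_{n_c}$, consistent with the seasonal-naive endpoint. It also gives $B(0,n_c)\le1$, from the OLS filter factor $1$. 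Combining Steps 1--3 yields Eq.~\ref{eq:ridge_bias} and $|\mathrm{Bias}|\le\mathrm{Bias}_R$.
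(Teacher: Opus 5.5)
You have found the real weak point, but the convention you propose does not close it. The paper's proof follows the same skeleton as yours and does not close it either. Work in the noiseless fixed-design setting with your one-step estimand and $\mathbf{x}_*=\mathbf{x}_{n_c+1}$. For an exact quadratic the increments are $\mathbf{X}\boldsymbol{\delta}^{q}$, with $\boldsymbol{\delta}^{q}$ supported on intercept and drift, so
\[
\mathrm{Bias}(\alpha)=\mathbf{x}_*^\top\bigl(\hat{\boldsymbol{\delta}}_\alpha-\boldsymbol{\delta}^{q}\bigr)=-\alpha\,\mathbf{x}_*^\top(\mathbf{X}^\top\mathbf{X}+\alpha\mathbf{I})^{-1}\boldsymbol{\delta}^{q}.
\]
This is $0$ at $\alpha=0$ but tends to $-\Delta L_{n_c+1}\approx -b/n_c$ as $\alpha\to\infty$. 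Under your definition this quadratic has $\mathrm{Bias}_R=0$, so no factorization $\mathrm{Bias}_R\cdot B(\alpha,n_c)$ with $B(\infty,n_c)=0$ can reproduce that bias.

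Suppose you instead take the estimand to be the current level $L(1)=\mathbb{E}\tilde L_{n_c}$. Then the $\alpha\to\infty$ limit is unbiased, but the $\alpha=0$ forecast is off by the full fitted increment, so ``vanishes for exact quadratic $L$'' fails there. Absorbing the shrinkage into the estimand merely redefines what ``bias'' means.

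The paper gets zero bias at both endpoints only by switching estimands. At $\alpha=0$ it uses the one-step target, via $\hat\delta_0+\hat\delta_1=\Delta L_{n_c+1}$; as $\alpha\to\infty$ it uses the current level, via $\mathbb{E}[L_{n_c}]=L(1)$. For intermediate $\alpha$ it simply asserts that the bias ``inherits'' a factor in $[0,1]$. With a fixed estimand the two endpoint claims are incompatible unless $\Delta L_{n_c+1}=0$. What you can actually prove is the $\alpha=0$ claim for the one-step target, or a two-term bound $|\mathrm{Bias}(\alpha)|\le|\alpha\,\mathbf{x}_*^\top(\mathbf{X}^\top\mathbf{X}+\alpha\mathbf{I})^{-1}\boldsymbol{\delta}^{q}|+O(\|R\|_\infty)$, whose first term is generally nonzero for $\alpha>0$.

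Steps 2 and 3 also need repair.

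\emph{Step 2.} Since $\mathbf{X}^\top(\mathbf{I}-\Pi)\Delta\mathbf{R}=0$, the in-sample orthogonal residual never enters $\hat{\boldsymbol{\delta}}_\alpha$. What reaches the forecast is the extrapolation error $\mathbf{x}_*^\top\mathbf{H}_\alpha\Delta\mathbf{R}-\Delta R_{n_c+1}$, with $\mathbf{H}_\alpha=(\mathbf{X}^\top\mathbf{X}+\alpha\mathbf{I})^{-1}\mathbf{X}^\top$. Its second term is not shrunk, so the remainder part of the bias also does not vanish as $\alpha\to\infty$. Bounding it by $O(\|R\|_\infty)$ requires control of the weights $\mathbf{w}^\top=\mathbf{x}_*^\top\mathbf{H}_\alpha$, because $\sum_i w_i\Delta R_i$ does not telescope. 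Two options are $\|\mathbf{w}\|_2=O(n_c^{-1/2})$ under design regularity plus Cauchy--Schwarz, or a total-variation bound via summation by parts.

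\emph{Step 3.} Write $\mathbf{x}_*^\top\hat{\boldsymbol{\delta}}_\alpha=\sum_k f_k c_k$ with $f_k=d_k^2/(d_k^2+\alpha)\in[0,1]$ and $c_k=(\mathbf{x}_*^\top\mathbf{v}_k)(\mathbf{v}_k^\top\hat{\boldsymbol{\delta}}_{\mathrm{OLS}})$. The ratio $\sum_k f_kc_k/\sum_k c_k$ lies in $[0,1]$ only when the $c_k$ share a sign. Otherwise Ridge can enlarge this linear functional, and the ratio is undefined when $\sum_k c_k=0$, so $B\in[0,1]$ is not automatic.

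\emph{Random design.} Like the paper, you treat the lag column as fixed when invoking linearity of Ridge in the target. But $-L^{\mathrm{innov}}_{i-1}=-(\tilde L_{i-1}-\tilde L_{n_c})$ shares $\eta_{i-1}$ with the target $\Delta\tilde L_i$, and shares $\eta_{n_c}$ with every row. So $\mathbb{E}\hat{\boldsymbol{\delta}}_\alpha$ is not the Ridge map applied to the noiseless target. You need either an explicit noiseless-design assumption or a separate errors-in-variables argument.
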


\begin{proof}
Under the quadratic-plus-remainder model, the differenced target is
\[
    \Delta L_i = \frac{b}{n_c} + \frac{c}{n_c}\Big(\frac{i}{n_c} - 1 - \frac{1}{2n_c}\Big) + \Delta R_i,
\]
affine in $i/n_c$ up to $\Delta R_i = R(i/n_c) - R((i{-}1)/n_c)$.
The feature matrix spans $\{1, i/n_c\}$ plus the autoregressive direction $-L^{\mathrm{innov}}_{i-1}$.
At $\alpha = 0$, OLS on an exact quadratic ($R \equiv 0$) recovers the true coefficients; direct computation gives $\hat{\delta}_0 + \hat{\delta}_1 = b/n_c - c/(2n_c^2) = \Delta L_{n_c+1}$, hence $\mathrm{Bias} = 0$.
At $\alpha \to \infty$, $\hat{\boldsymbol{\delta}} \to 0$ and $\hat{L}(1) \to L_{n_c}$; since $\mathbb{E}[L_{n_c}] = L(1)$, $\mathrm{Bias} = 0$ at this limit as well.
For general $L \in$ H\"{o}lder(2), only the residual component $\Delta R_i$ lies outside the feature span.
Write the OLS coefficient bias as a linear projection onto the orthogonal complement of the feature span: at $\alpha = 0$, $\mathrm{Bias}_{\alpha=0} \leq \mathrm{Bias}_R := O(\|R\|_\infty)$.
For intermediate $\alpha$, Ridge shrinks $\hat{\boldsymbol{\delta}}$ toward zero, and the forecast bias inherits this shrinkage factor $B(\alpha, n_c) \in [0, 1]$, giving $|\mathrm{Bias}| \leq \mathrm{Bias}_R \cdot B(\alpha, n_c) \leq \mathrm{Bias}_R$.
For an exact quadratic ($\mathrm{Bias}_R = 0$), the forecast is unbiased at every $\alpha$.
\end{proof}

\subsection{Exact MSE Decomposition}

\begin{theorem}[Finite-sample MSE of global Ridge at the boundary]
\label{thm:ridge_mse}
Under the setup above with $L(u) = a + b(u-1) + (c/2)(u-1)^2 + R(u)$, $|R(u)| = O(|u-1|^{2+\epsilon})$ for some $\epsilon > 0$, the MSE of FLAIR's prior-centered Ridge estimator at $u = 1$ satisfies:
\begin{equation}
    \mathrm{MSE}[\hat{L}(1)] = \underbrace{\mathrm{Bias}_R^2 \cdot B(\alpha, n_c)^2}_{\mathrm{bias}^2} + \underbrace{\sigma_e^2 \cdot V(\alpha, n_c)}_{\mathrm{variance}},
    \label{eq:ridge_mse_exact}
\end{equation}
where:
\begin{itemize}
    \item $\mathrm{Bias}_R$ is the H\"{o}lder-remainder bias from Proposition~\ref{prop:ridge_bias}; it is zero for an exact quadratic $L$ and $O(|R|_\infty)$ in general, so the bias$^2$ contribution is $O(n_c^{-(2+\epsilon)} \cdot B^2)$ at the boundary;
    \item $B(\alpha, n_c)$ is the shrinkage factor from Proposition~\ref{prop:ridge_bias}, with $B(0, n_c) = B(\infty, n_c) = 0$ and $|B| \leq 1$ elsewhere;
    \item $V(\alpha, n_c) = \mathbf{x}_*^\top (\mathbf{X}^\top \mathbf{X} + \alpha \mathbf{I})^{-1} \mathbf{X}^\top \mathbf{X} (\mathbf{X}^\top \mathbf{X} + \alpha \mathbf{I})^{-1} \mathbf{x}_*$ is the prediction variance at the boundary, with $\mathbf{x}_* = (1, 1, 0)^\top$ being the feature vector for the forecast point.
\end{itemize}
For the GCV-optimal $\alpha$:
\begin{equation}
    V(\alpha^*, n_c) = O(1/n_c).
    \label{eq:ridge_var}
\end{equation}
\end{theorem}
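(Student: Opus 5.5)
The argument is the standard bias--variance decomposition for a linear smoother, with the bias factor imported from Proposition~\ref{prop:ridge_bias} and the variance computed from the Ridge hat matrix. Write the Ridge solution on the differenced target as $\hat{\boldsymbol{\delta}}_\alpha = (\mathbf{X}^\top\mathbf{X}+\alpha\mathbf{I})^{-1}\mathbf{X}^\top \mathbf{z}$, where $\mathbf{z} = \boldsymbol{\mu} + \boldsymbol{\eta}$. Here $\boldsymbol{\mu}$ is the noiseless differenced Level, and $\boldsymbol{\eta}$ carries the aggregated noise of variance $\sigma_e^2 = P\sigma^2$. The boundary forecast is $\hat L(1) = L_{n_c} + \mathbf{x}_*^\top\hat{\boldsymbol{\delta}}_\alpha$ with $\mathbf{x}_* = (1,1,0)^\top$. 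It is therefore affine in the noise: $\hat L(1) - L(1) = \mathbf{x}_*^\top \mathbf{H}_\alpha \boldsymbol{\mu} - \Delta L_{n_c+1} + \mathbf{x}_*^\top \mathbf{H}_\alpha\boldsymbol{\eta}$, where $\mathbf{H}_\alpha = (\mathbf{X}^\top\mathbf{X}+\alpha\mathbf{I})^{-1}\mathbf{X}^\top$.

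Taking expectations conditional on the design kills the cross term, since $\mathbb{E}\boldsymbol{\eta} = 0$. The MSE therefore splits exactly as bias$^2$ plus variance. The bias is, by Proposition~\ref{prop:ridge_bias}, $\mathrm{Bias}_R\cdot B(\alpha,n_c)$. Squaring gives the first term of Eq.~\eqref{eq:ridge_mse_exact}. Using $|R(u)| = O(|u-1|^{2+\epsilon})$ on the last increment, and noting that $\Delta R_i$ near the boundary is $O(n_c^{-(2+\epsilon)})$, yields the stated order of that term. For the variance I compute $\mathrm{Var}(\mathbf{x}_*^\top \mathbf{H}_\alpha\boldsymbol{\eta}) = \sigma_e^2\,\mathbf{x}_*^\top \mathbf{H}_\alpha\mathbf{H}_\alpha^\top\mathbf{x}_*$, which is exactly $\sigma_e^2 V(\alpha,n_c)$ with the sandwich form in the statement.

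One caveat must be handled. The autoregressive column $-L^{\mathrm{innov}}_{i-1}$ contains past noise, so $\mathbf{X}$ is not independent of $\boldsymbol{\eta}$. I would treat this by conditioning on the design (fixed-design convention), or by replacing the lag column with its noiseless counterpart and arguing the perturbation is lower order under the high-SNR regime of (A4). I would state this explicitly as the interpretation of ``exact''.

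The main obstacle is Eq.~\eqref{eq:ridge_var}, namely $V(\alpha^*,n_c) = O(1/n_c)$. The plan is a monotonicity step followed by an OLS bound. Via the SVD of $\mathbf{X}$, $V(\alpha,n_c) = \sum_k \frac{d_k^2}{(d_k^2+\alpha)^2}(\mathbf{v}_k^\top\mathbf{x}_*)^2$, which is nonincreasing in $\alpha$. Hence $V(\alpha^*,n_c) \le V(0,n_c) = \mathbf{x}_*^\top(\mathbf{X}^\top\mathbf{X})^{-1}\mathbf{x}_*$ for any $\alpha^*$, GCV-chosen or not, and also for the softmax average by convexity. It then remains to show $\mathbf{X}^\top\mathbf{X} \succeq c\,n_c\mathbf{I}$ restricted to the span that $\mathbf{x}_*$ loads on. The intercept and $i/n_c$ columns give a Gram block $n_c\begin{pmatrix}1 & 1/2\\ 1/2 & 1/3\end{pmatrix} + O(1)$, which is positive definite with smallest eigenvalue of order $n_c$.

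The delicate part is the lag column, which can be nearly collinear with $\{1, i/n_c\}$ when $L$ is smooth. If the eigenvalue of the full Gram matrix degenerates, I would instead bound $V$ by projecting $\mathbf{x}_*$ (which has zero lag component) and using the Schur complement. Adding a column can only enlarge $\mathbf{x}_*^\top(\mathbf{X}^\top\mathbf{X})^{-1}\mathbf{x}_*$, so the fallback is to invoke $\alpha^*>0$. Ridge caps the lag-direction inflation at $d_k^2/(d_k^2+\alpha)^2 \le 1/(4\alpha)$, which is $O(1/n_c)$ provided the GCV grid selects $\alpha^* \gtrsim n_c$ in the collinear case. Justifying that last condition is where I expect to spend the effort. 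If it fails, I would state the bound under an explicit nondegeneracy assumption $\lambda_{\min}(\mathbf{X}^\top\mathbf{X}/n_c) \ge c > 0$.
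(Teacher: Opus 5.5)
Your proposal follows the paper's proof: a fixed-design bias--variance split with the bias taken from Proposition~\ref{prop:ridge_bias}, the Ridge sandwich for the variance, and then $V(\alpha,n_c)\le\|\mathbf{x}_*\|_2^2/\min_j d_j^2$ for every $\alpha\ge 0$, which is exactly your monotonicity-plus-OLS bound. The paper gets $\min_j d_j^2=\Theta(n_c)$ by assuming $\mathbf{X}^\top\mathbf{X}/n_c\to\boldsymbol{\Sigma}\succ 0$, which is your fallback nondegeneracy condition, so you can state it up front and drop the $\alpha^*\gtrsim n_c$ detour (which the fixed grid $[10^{-4},10^{4}]$ could not guarantee as $n_c\to\infty$ anyway).
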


\begin{proof}
The bias-variance decomposition is standard for fixed-design regression~\citep{hastie2009elements}.
The bias term follows from Proposition~\ref{prop:ridge_bias}: the only systematic error is the unmodeled curvature.
The variance $V(\alpha, n_c) = \mathbf{x}_*^\top (\mathbf{X}^\top \mathbf{X} + \alpha \mathbf{I})^{-1} \mathbf{X}^\top \mathbf{X} (\mathbf{X}^\top \mathbf{X} + \alpha \mathbf{I})^{-1} \mathbf{x}_*$ is the standard Ridge prediction variance at $\mathbf{x}_*$.

For the variance rate, assume the usual design regularity $\mathbf{X}^\top \mathbf{X} / n_c \to \boldsymbol{\Sigma}$ with $\boldsymbol{\Sigma} \succ 0$ (the empirical Gram matrix is bounded and non-degenerate).
Let $d_j$ be the singular values of $\mathbf{X}$.
Under the regularity, $d_j^2 = \Theta(n_c)$ for $j = 1, \ldots, p$, so
\[
    V(\alpha, n_c) \leq \|\mathbf{x}_*\|_2^2 \cdot \max_j \frac{d_j^2}{(d_j^2 + \alpha)^2} \leq \frac{\|\mathbf{x}_*\|_2^2}{\min_j d_j^2} = O(1/n_c).
\]
For $\mathbf{x}_* = (1, 1, 0)^\top$, $\|\mathbf{x}_*\|_2^2 = 2$, and the bound holds for any $\alpha \geq 0$.
Since $p = 3$ is fixed, $V(\alpha^*, n_c) = O(1/n_c)$.
\end{proof}

\subsection{Comparison with Local Linear Minimax MSE}

The minimax-optimal local linear estimator for H\"{o}lder(2) functions at the boundary achieves~\citep{fan1996local, tsybakov2009nonparametric}:
\begin{equation}
    \mathrm{MSE}_{\mathrm{loc}}^*[L(1)] = C_{\mathrm{opt}} \cdot \sigma_e^{2} \cdot n_c^{-4/5},
    \label{eq:local_linear_minimax}
\end{equation}
where $C_{\mathrm{opt}}$ depends on the kernel and the H\"{o}lder constant but not on the specific function $L$.
The bias of the local linear estimator at the boundary is $O(c \cdot h^2)$ with optimal bandwidth $h^* \propto n_c^{-1/5}$, giving bias$^2 = O(c^2 n_c^{-4/5})$ and variance $= O(\sigma_e^2 n_c^{-4/5})$.

\begin{corollary}[Global Ridge competitive regime]
\label{cor:competitive}
FLAIR's global Ridge achieves MSE no worse than the minimax-optimal local linear estimator when:
\begin{equation}
    \mathrm{Bias}_R^2 \cdot B(\alpha, n_c)^2 + \frac{\sigma_e^2}{n_c} \;\leq\; C_{\mathrm{opt}} \cdot \sigma_e^2 \cdot n_c^{-4/5}.
    \label{eq:competitive_condition}
\end{equation}
Since $B \leq 1$ and $\mathrm{Bias}_R = O(\|R\|_\infty) = O(n_c^{-(2+\epsilon)})$ at the boundary, the bias contribution decays as $n_c^{-(2+\epsilon)}$; the binding condition is whether $\sigma_e^2/n_c \leq C_{\mathrm{opt}}\sigma_e^2 n_c^{-4/5}$, i.e., whether $n_c^{1/5} \leq C_{\mathrm{opt}}$.
For moderate $n_c$ ($n_c^{1/5} \leq C_{\mathrm{opt}}$), Ridge's $1/n_c$ variance rate is competitive with the local linear $n_c^{-4/5}$ rate, and the H\"{o}lder-remainder bias is asymptotically negligible.
For an exact quadratic $L$, $\mathrm{Bias}_R = 0$ and Ridge is unconditionally competitive: bias vanishes and variance is $O(\sigma_e^2/n_c)$, strictly better than the local linear minimax.
\end{corollary}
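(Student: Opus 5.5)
The plan is to plug the two ingredients already established into the competitive condition. The first is the exact decomposition of Theorem~\ref{thm:ridge_mse}, $\mathrm{MSE}[\hat L(1)] = \mathrm{Bias}_R^2 B(\alpha,n_c)^2 + \sigma_e^2 V(\alpha,n_c)$. The second is the local-linear benchmark of Eq.~\eqref{eq:local_linear_minimax}. The corollary is then a sufficient condition: it says exactly when the Ridge upper bound sits below the local-linear value. So the first step is to bound the variance factor by its stated rate. By the design-regularity argument in the proof of Theorem~\ref{thm:ridge_mse}, $V(\alpha,n_c) \le \|\mathbf{x}_*\|_2^2/\min_j d_j^2$ uniformly in $\alpha \ge 0$. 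I will absorb the constant $2/\lambda_{\min}(\boldsymbol{\Sigma})$ into the notation, which gives $\sigma_e^2 V \lesssim \sigma_e^2/n_c$. Substituting this into Eq.~\eqref{eq:ridge_mse_exact} yields the left side of Eq.~\eqref{eq:competitive_condition}. Comparing it with Eq.~\eqref{eq:local_linear_minimax} gives the claimed condition directly.

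The second step handles the bias term. I use $B \le 1$ from Proposition~\ref{prop:ridge_bias} together with the boundary scaling of the remainder. On the grid $u = i/n_c$, the nearest unmodelled increment near $u=1$ has size $|u-1|^{2+\epsilon}$ with $|u-1| \asymp 1/n_c$. Following the convention of Theorem~\ref{thm:ridge_mse}, I take $\mathrm{Bias}_R = O(n_c^{-(2+\epsilon)})$, so the squared bias is $O(n_c^{-2(2+\epsilon)})$. This is $o(n_c^{-4/5})$, hence asymptotically negligible against the right-hand side. That leaves the variance comparison $\sigma_e^2/n_c \le C_{\mathrm{opt}}\sigma_e^2 n_c^{-4/5}$. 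Cancelling $\sigma_e^2$ gives $n_c^{1/5} \le C_{\mathrm{opt}}$, with a constant adjusted for the design constant, and this is the stated binding condition.

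The third step is the exact-quadratic case. Here Proposition~\ref{prop:ridge_bias} gives $\mathrm{Bias}_R = 0$ at every $\alpha$, so the MSE equals the variance alone, which is $O(\sigma_e^2/n_c)$. Since $n_c^{-1} = o(n_c^{-4/5})$, this is eventually below the local-linear minimax value. It is strictly better in rate, which is the final claim.

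The main obstacle is the bias scaling $\mathrm{Bias}_R = O(n_c^{-(2+\epsilon)})$. Proposition~\ref{prop:ridge_bias} only controls it by $\|R\|_\infty$, and $\|R\|_\infty$ over $[0,1]$ need not shrink with $n_c$. To justify the scaling, I would write the OLS forecast as a weighted sum $\sum_i w_i \Delta R_i$. I would then argue that the fitted drift is dominated by points near the boundary. If that argument fails, I will state the corollary with $\|R\|_\infty^2$ kept explicit as an additive penalty. That version matches the $O(\|R\|_\infty^2)$ penalty cited in Theorem~\ref{thm:minimax}, and the conclusion then holds whenever this penalty is below $C_{\mathrm{opt}}\sigma_e^2 n_c^{-4/5}$.
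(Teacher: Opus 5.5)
Your route is the paper's own. You insert the decomposition of Theorem~\ref{thm:ridge_mse} with $V=O(1/n_c)$ and $B\le 1$, compare against Eq.~\eqref{eq:local_linear_minimax}, and take the exact-quadratic clause from Proposition~\ref{prop:ridge_bias}. The concrete error is the cancellation in your second step. The inequality $\sigma_e^2/n_c\le C_{\mathrm{opt}}\sigma_e^2 n_c^{-4/5}$ is equivalent to $n_c^{-1/5}\le C_{\mathrm{opt}}$, i.e.\ $n_c\ge C_{\mathrm{opt}}^{-5}$, not to $n_c^{1/5}\le C_{\mathrm{opt}}$. Once you restore the design constant $c_V=2/\lambda_{\min}(\boldsymbol{\Sigma})$, the condition becomes $n_c\ge (c_V/C_{\mathrm{opt}})^5$, which holds for every $n_c\ge 1$ when $C_{\mathrm{opt}}\ge c_V$. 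The statement's ``i.e.'' reverses the inequality and you reproduced it. Your own third step already contradicts it, since it uses $n_c^{-1}=o(n_c^{-4/5})$ to conclude that Ridge eventually wins. For instance, $C_{\mathrm{opt}}=2$, $n_c=100$ satisfies the variance inequality ($0.01\le 0.050$) although $n_c^{1/5}\approx 2.51>2$. So the binding condition is a \emph{lower} bound on $n_c$, and the ``moderate $n_c$'' clause is backwards. No argument can recover it as written; the proof should state the corrected direction.

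On the bias, you are right that Proposition~\ref{prop:ridge_bias} only yields $O(\|R\|_\infty)$, which is $O(1)$ on $[0,1]$. The paper adds nothing beyond asserting the rate $n_c^{-(2+\epsilon)}$. Your fallback, keeping a fixed additive $\|R\|_\infty^2$, is valid but cannot give negligibility, since a constant eventually exceeds $C_{\mathrm{opt}}\sigma_e^2 n_c^{-4/5}$. Your localization repair also fails for a \emph{global} fit. Under the design regularity you invoke, the weights $w_i=\mathbf{x}_*^\top(\mathbf{X}^\top\mathbf{X}+\alpha\mathbf{I})^{-1}\mathbf{x}_i$ are spread at order $1/n_c$ over the whole window; that is precisely what distinguishes global Ridge from local linear. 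Rows with $u_i$ away from $1$, where $\Delta R_i\asymp R'(u_i)/n_c$, therefore contribute at order $1/n_c$ in aggregate. What the weighted sum does give, with $L'$ Lipschitz so that $\|R'\|_\infty<\infty$, is $|\sum_i w_i\Delta R_i|\le(\sum_i|w_i|)\,\|R'\|_\infty/n_c=O(1/n_c)$. That means squared bias $O(n_c^{-2})=o(n_c^{-4/5})$: enough for ``asymptotically negligible'', but not for the stated rate. The same order is all you get in the exact-quadratic clause. For $0<\alpha<\infty$, Ridge shrinks the true drift coefficients $\boldsymbol{\delta}^{\mathrm{true}}=O(1/n_c)$ toward zero and leaves the generically nonzero bias $-\alpha\,\mathbf{x}_*^\top(\mathbf{X}^\top\mathbf{X}+\alpha\mathbf{I})^{-1}\boldsymbol{\delta}^{\mathrm{true}}$. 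So ``bias vanishes'' should read ``bias is $O(1/n_c)$'', which still suffices asymptotically. Finally, $\sigma_e^2V$ is only the variance of the increment $\mathbf{x}_*^\top\hat{\boldsymbol{\delta}}$. The forecast adds the noisy anchor $\tilde L_{n_c}$, and its variance tends to $\sigma_e^2$ as $\alpha\to\infty$, as the paper's own NLinear row shows. The ``uniformly in $\alpha$'' $O(\sigma_e^2/n_c)$ rate you import is therefore not the forecast variance until you account for the anchor term and its covariance with $\hat{\boldsymbol{\delta}}$.
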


\paragraph{Interpretation.}
Corollary~\ref{cor:competitive} says the $\mathrm{Bias}_R^2$ contribution (H\"{o}lder-remainder only) decays at rate $n_c^{-(2+\epsilon)}$, strictly faster than the $n_c^{-4/5}$ minimax rate driven by the second-derivative coefficient.
The binding term is the Ridge variance $\sigma_e^2/n_c$, which is competitive with the minimax $\sigma_e^2 n_c^{-4/5}$ over a wide range of $n_c$ (for typical GIFT-Eval $n_c \in [10, 100]$, the two variance rates are within a factor of two).
The practical reading: whenever $L$ is smoother than strictly H\"{o}lder-2 (i.e., the remainder $R$ is small), global Ridge is competitive with local linear regardless of the quadratic curvature coefficient $|L''(1)|$.
\subsection{Why Global Ridge Wins in Practice: The NLinear Anchor}
\label{app:nlinear_anchor}

The analysis above treats the H\"{o}lder-remainder bias as the dominant error.
In practice, the NLinear centering $L^{\mathrm{innov}}_i = L^{(\lambda)}_i - L^{(\lambda)}_{n_c}$ changes the picture.
The forecast at the boundary uses
\[
    \hat{L}(1) = L(1) + \hat{\boldsymbol{\delta}}^\top \mathbf{x}_* = L(1) + \hat{\delta}_0 + \hat{\delta}_1 + 0,
\]
where the last feature is $-L^{\mathrm{innov}}_{n_c} = 0$ by construction.
When $\alpha$ is large (strong regularization), $\hat{\boldsymbol{\delta}} \to 0$ and $\hat{L}(1) \to L(1)$ with zero bias and variance equal to $\mathrm{Var}(\tilde{L}_{n_c}) = \sigma_e^2$, i.e., the NLinear estimator.

The GCV-selected $\alpha$ interpolates between two extremes:
\begin{center}
\renewcommand{\arraystretch}{1.2}
\begin{tabular}{lcc}
\toprule
Estimator & Bias$^2$ & Variance \\
\midrule
NLinear ($\alpha \to \infty$) & $0$ & $\sigma_e^2$ \\
OLS ($\alpha = 0$, exact quadratic $L$) & $0$ & $O(\sigma_e^2/n_c)$ \\
OLS ($\alpha = 0$, H\"{o}lder-2 $L$) & $O(\|R\|_\infty^2)$ & $O(\sigma_e^2/n_c)$ \\
Optimal Ridge (H\"{o}lder-2 $L$) & $O(\|R\|_\infty^2)$ & $O(\sigma_e^2/n_c)$ \\
Local linear (minimax) & $O(c^2 n_c^{-4/5})$ & $O(\sigma_e^2 n_c^{-4/5})$ \\
\bottomrule
\end{tabular}
\end{center}

The NLinear anchor provides a floor: FLAIR's MSE is at most $\sigma_e^2$ (the single-observation variance), regardless of curvature.
This floor is already $O(1/P)$ of the Seasonal Naive MSE (Theorem~\ref{thm:rank1_sufficiency}), so the rank-1 reduction is preserved even when Ridge adds no improvement.
When Ridge does help (low curvature, sufficient data), it reduces variance from $\sigma_e^2$ to $O(\sigma_e^2/n_c)$, an additional $n_c$-fold gain.

FLAIR's global Ridge is competitive with the local-linear estimator when $c^2 \lesssim \sigma_e^2 n_c^{-4/5}$ (Level curvature dominated by noise); for large $n_c$ or low noise, local linear is superior because its bias vanishes. FLAIR pays an $O(c^2)$ approximation bias but the NLinear anchor guarantees bounded MSE and the $O(1/n_c)$ Ridge variance rate is faster than the nonparametric $n_c^{-4/5}$ for moderate $n_c$.

\begin{remark}[Practical implications]
For $n_c \leq 100$ (typical in GIFT-Eval), the variance advantage of global Ridge ($1/n_c$ vs $n_c^{-4/5}$) exceeds the bias penalty unless $|L''|$ is large.
At $n_c = 100$: $1/n_c = 0.01$ while $n_c^{-4/5} = 0.016$, so the variance terms are comparable.
The real advantage of global Ridge is in the constants: 3 parameters vs an effective bandwidth worth $n_c^{4/5} \approx 63$ parameters of information from local fitting.
The NLinear anchor prevents catastrophic failure in high-curvature regimes where local methods can also struggle (boundary effects with short windows).
\end{remark}

\section{Full Chronos Zero-Shot Benchmark Table and Leakage-Flagged Entries}
\label{app:chronos_table}

\paragraph{Entries excluded from Table~\ref{tab:gift_eval} due to self-declared leakage.}
Two GIFT-Eval leaderboard entries self-declare ``testdata\_leakage: Yes'' in their \texttt{config.json}~\citep{gift_eval_2024} and are excluded from Table~\ref{tab:gift_eval}: TimesFM v1 (aggregate relMASE $1.077$, relCRPS $0.680$; $200$M params; Google Research) and Lag-Llama (relMASE $1.228$, relCRPS $0.880$; $200$M params; Morgan Stanley \& ServiceNow). We report the numbers here for reference but treat them as uncomparable because the protocol GIFT-Eval enforces (training and evaluation sets are disjoint) is self-reported as violated. The clean successor TimesFM-2.5 (no leakage flag) is included in Table~\ref{tab:gift_eval}.

\begin{table}[h]
\centering
\caption{Chronos zero-shot benchmark (25 datasets). Agg.\ Relative Score = geometric mean of per-dataset ratios to Seasonal Naive. Lower is better. The upper block fits each test series from its own history; the lower block is zero-shot, pre-trained on a separate corpus.}
\label{tab:chronos}
\small
\begin{tabular}{llccc}
\toprule
Model & Fitting & Params & Agg.\ Rel.\ MASE & Agg.\ Rel.\ WQL \\
\midrule
\multicolumn{5}{l}{\emph{Per-series fitting (uses each series' history)}} \\
\textbf{FLAIR} & per-series & $P{+}p{+}1$ & \textbf{0.678} & \textbf{0.716} \\
AutoTBATS & per-series & varies & 0.744 & 0.860 \\
AutoARIMA & per-series & varies & 0.865 & 0.742 \\
\midrule
\multicolumn{5}{l}{\emph{Zero-shot (pre-trained, no adaptation to test series)}} \\
Chronos-Bolt-Base & zero-shot & 205M & 0.791 & \textbf{0.624} \\
Moirai-Base & zero-shot & 311M & 0.812 & 0.637 \\
Chronos-T5-Large & zero-shot & 710M & 0.821 & 0.650 \\
Chronos-T5-Small & zero-shot & 46M & 0.830 & 0.665 \\
TimesFM & zero-shot & 200M & 0.879 & 0.711 \\
\bottomrule
\end{tabular}
\end{table}

\section{Failure Regimes and Routing Rules}
\label{app:failure_diag}
\label{app:guidelines}

\begin{table}[h]
\centering
\caption{Failure-mode diagnostic on GIFT-Eval (relMASE $>1$, top $10$ by severity). Each row identifies the observable signature from the training window and the baseline to use instead. $r_1$ is centered rank-1 energy, $n_c$ is median complete periods, $P$ is inferred period; ``--'' for $r_1$ marks weekly/short series where the centered statistic was not computed.}
\label{tab:failure_top}
\small
\begin{tabular}{lrrrrp{3.0cm}p{2.5cm}}
\toprule
Config & relMASE & $r_1$ & $n_c$ & $P$ & Observable signature & Use instead \\
\midrule
m4\_weekly/W/short     & 2.31 & 0.71 & 25  & 52  & $n_c/P < 1$              & Seasonal Naive \\
m4\_daily/D/short      & 1.60 & 0.63 & 368 & 7   & $r_1 < 0.7$              & STL / ETS \\
m4\_hourly/H/short     & 1.52 & 0.97 & 42  & 24  & small $n_c$              & Seasonal Naive \\
ett2/W/short           & 1.45 & --   & --  & 52  & $n_c/P < 1$              & Seasonal Naive \\
ett2/H/long            & 1.32 & 0.52 & 500 & 24  & $r_1 < 0.7$              & STL / ETS \\
solar/W/short          & 1.28 & --   & --  & 52  & small $n_c$              & Seasonal Naive \\
ett2/D/short           & 1.22 & 0.60 & 103 & 7   & $r_1 < 0.7$              & STL / ETS \\
ett2/15T/long          & 1.06 & 0.47 & 500 & 96  & $r_1 < 0.7$              & STL / ETS \\
jena\_weather/H/long   & 1.05 & 0.69 & 366 & 24  & $r_1 < 0.7$              & STL / ETS \\
solar/10T/long         & 1.04 & 0.93 & 365 & 144 & structural zeros         & Croston / count model \\
\bottomrule
\end{tabular}
\end{table}

Low $r_1$ alone does not predict failure: \texttt{sz\_taxi} has $r_1 = 0.22$ but relMASE $0.75$, because its large-$P$ structure still admits a stable Level forecast.

\paragraph{Regimes the method does not target.}
Two practitioner regimes fall outside what FLAIR is designed for. \emph{Intermittent demand} (Croston/TSB territory~\citep{croston1972forecasting}): when a large fraction of observations are exact zeros, the multiplicative Shape $\mathbf{S} \in \Delta^{P-1}$ constructed from ratios $M_{j,k}/L_k$ is ill-conditioned, so Croston/TSB or a count-distribution forecaster is the right choice on \texttt{car\_parts}, \texttt{hospital}. \emph{Cross-series hierarchies} (M5, tourism, hierarchical\_sales): FLAIR treats each series independently; a MinT/ERM~\citep{wickramasuriya2019mint} post-processor would be the correct complement.

\paragraph{Decision rule.}
The diagnostics above reduce to a four-step rule: (i) if MDL selects $\hat P = 1$, no periodic structure is detectable, so FLAIR degenerates to plain Ridge and a foundation model or AutoARIMA is preferable; (ii) if $n_c < 5$ complete cycles, the Level series is too short, so use Seasonal Naive or a foundation model; (iii) if centered $r_1 < 0.5$, the rank-1 structure is weak, and a foundation model that exploits cross-series patterns may win; (iv) otherwise, FLAIR matches or exceeds statistical baselines and approaches foundation models at a fraction of the compute.

\end{document}